\documentclass{article}

\usepackage{booktabs}
\usepackage{multirow}
\usepackage[table]{xcolor}
\usepackage{colortbl}
\usepackage{arydshln}

\usepackage{wrapfig}
\usepackage{amsmath}
\usepackage{amssymb}
\usepackage{mathtools}
\usepackage{amsthm}
\usepackage{algorithm}
\usepackage{algpseudocode}

\usepackage{graphicx}
\usepackage{subcaption}
\usepackage{caption}

\usepackage{xcolor}  
\usepackage[dvipsnames, svgnames, x11names]{xcolor}  
\definecolor{inkblue}{RGB}{0, 51, 102}  

\definecolor{identityrow}{RGB}{234, 239, 247}
\definecolor{saliencyrow}{RGB}{210, 222, 239}
\usepackage{float}

\usepackage[preprint]{style_2026}

\usepackage[utf8]{inputenc} 
\usepackage[T1]{fontenc}    
\usepackage{hyperref}       
\usepackage{cleveref}
\usepackage{url}            
\usepackage{amsfonts}       
\usepackage{nicefrac}       
\usepackage{microtype}      
\usepackage{xcolor}         

\theoremstyle{plain}
\newtheorem{theorem}{Theorem}[section]

\newtheorem{corollary}[theorem]{Corollary}
\theoremstyle{definition}
\newtheorem{definition}[theorem]{Definition}

\theoremstyle{remark}

\usepackage{float}

\newenvironment{talign*}
{\csname align*\endcsname}
{\endalign}
\newenvironment{talign}
{\align}
{\endalign}

\let\oldthanks\thanks
\renewcommand{\thanks}[1]{%
    \bgroup
    \hypersetup{pdfborder={0 0 0}, linkbordercolor=white}%
    \oldthanks{#1}%
    \egroup%
}

\title{Saliency-Aware Regularized Quantization Calibration for Large Language Models}

\author{
 Yanlong Zhao$^{1}$\thanks{Equal Contribution: \href{mailto:zhaoyanlong@mail.ustc.edu.cn}{Yanlong Zhao}, \href{mailto:ucesxc4@ucl.ac.uk}{Xiaoyuan Cheng} and \href{mailto:liuhuihang@mail.sufe.edu.cn}{Huihang Liu}.} \quad
 Xiaoyuan Cheng$^{2}$\footnotemark[1] \quad 
   \textbf{Huihang Liu}$^{3}$\footnotemark[1] \\
  \textbf{Baihua He}$^{1}$ \quad 
  \textbf{Xinyu Zhang}$^{1,4}$ \quad
  \textbf{Harrison Bo Hua Zhu}$^{5,6,7}$ \\
   \textbf{Wenlong Chen}$^{6}$ \quad
 \textbf{Li Zeng}$^{8}$ \quad
\textbf{Zhuo Sun}$^{3,6}$\thanks{Correspondence Author. Correspondence to \href{mailto:zhuosunreid@outlook.com}{Zhuo Sun}.}\\
$^{1}$University of Science and Technology of China, 
$^{2}$University College London,\\
$^{3}$Shanghai University of Finance and Economics\\
$^{4}$Academy of Mathematics and Systems Science, Chinese Academy of Sciences, \\
$^{5}$University of Copenhagen,
$^{6}$Imperial College London,\\
$^{7}$Technical University of Denmark
$^{8}$Peking University
}

\begin{document}

\maketitle

\begin{abstract}
Post-training quantization (PTQ) is an effective approach for deploying large language models (LLMs) under memory and latency constraints. Most existing PTQ methods determine quantization parameters by minimizing a layer-wise reconstruction error on a predetermined calibration dataset, typically optimized via either scale search or Gram-based methods. However, from the perspective of generalization risk, existing PTQ calibration objectives based solely on empirical reconstruction error over limited or unrepresentative calibration data may move the quantized weights away from the original floating-point weights, potentially degrading downstream performance. To address this issue, we propose \emph{Regularized Quantization Calibration} (RQC), a unified framework that augments standard PTQ objectives with a regularizer that explicitly controls weight deviation from the original weights. We further generalize this framework to incorporate a saliency-aware regularizer, resulting in \emph{Saliency-Aware Regularized Quantization Calibration} (SARQC). The proposed regularization encourages quantized weights to remain close to the original weights during calibration, leading to improved generalization at inference time. SARQC integrates seamlessly into existing PTQ pipelines and enhances both scale-search-based and Gram-based methods under a unified formulation. Extensive experiments on dense and Mixture-of-Experts LLMs demonstrate consistent improvements in perplexity and zero-shot accuracy, without introducing additional inference overhead.
\end{abstract}

\footnotetext{Project Page: \url{https://github.com/Riceormice/SARQC}.}

\section{Introduction}

Large language models (LLMs) have scaled to tens of billions of parameters and beyond, delivering strong capabilities across tasks ranging from instruction following and knowledge-intensive QA to code generation and multi-step reasoning \citep{gpt4,jiang2023mistral7b,llama3,qwen3}. However, deploying these models remains costly. Their enormous parameter size leads to high memory usage, and autoregressive decoding is often limited by memory bandwidth.

To deploy LLMs efficiently, post-training quantization (PTQ) has become a widely adopted technique for memory-constrained scenarios. Existing PTQ algorithms aim to convert a floating-point (FP) LLM into a quantized model \citep{banner2019post,gholami2021survey}. Such a compressed model typically stores its weights in low-bit integer formats (e.g., INT4). Most PTQ pipelines determine quantization parameters, such as scales, zero points, clipping thresholds, and rounding decisions, by minimizing layer- or block-level output reconstruction error between the FP model and the quantized model over a small calibration dataset \citep{nagel2020adaround,li2021brecq,frantar2022optimal}. In contrast to quantization-aware training (QAT), which typically incurs substantial data and training cost \citep{liu2023llmqatdatafreequantizationaware}, post-training quantization provides a more efficient alternative by quantizing LLMs using only a small calibration set and minimal computation \citep{wei2022outlier,xiao2023smoothquant,frantar2023gptq,lin2024awq,tian2025pocketllm}. In this work, we mainly focus on weight-only PTQ techniques, which quantize FP weights to low-bit formats while keeping activations in floating-point precision. These methods significantly reduce memory usage and improve throughput in memory-bound serving scenarios \citep{lin2024awq,paroquant}.

\begin{figure*}[t]
    \centering
    \begin{subfigure}[t]{0.6\textwidth}
        \centering
        \includegraphics[width=\linewidth]{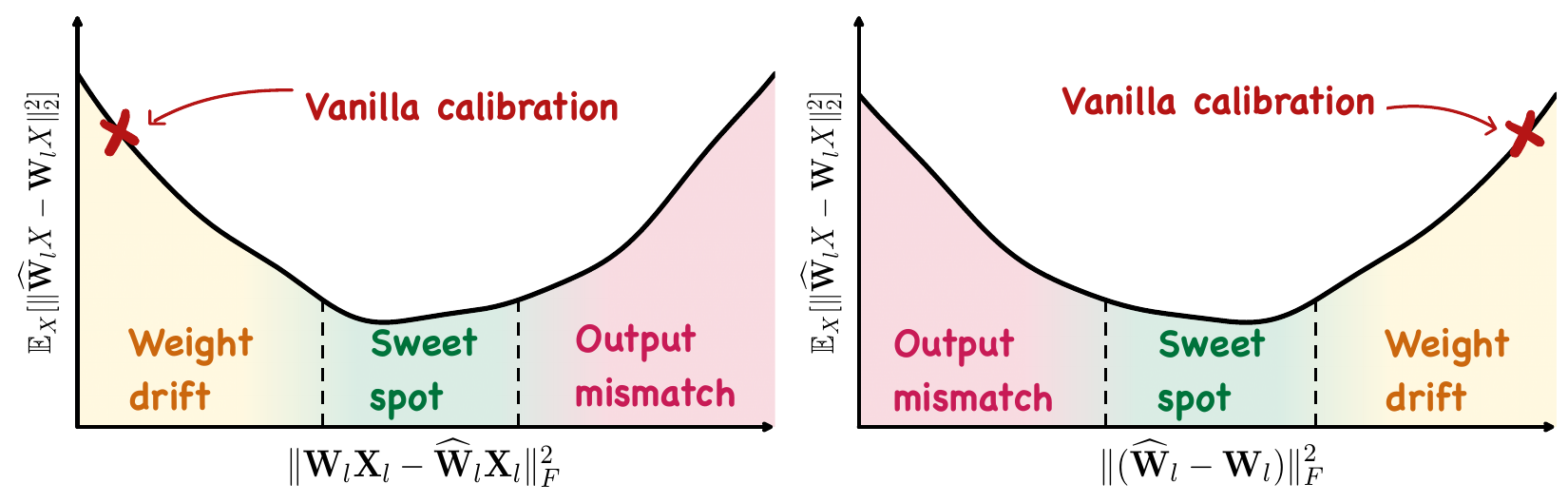}
        \caption{}
        \label{fig:row_a}
    \end{subfigure}\hfill
    \begin{subfigure}[t]{0.39\textwidth}
        \centering
        \includegraphics[width=\linewidth]{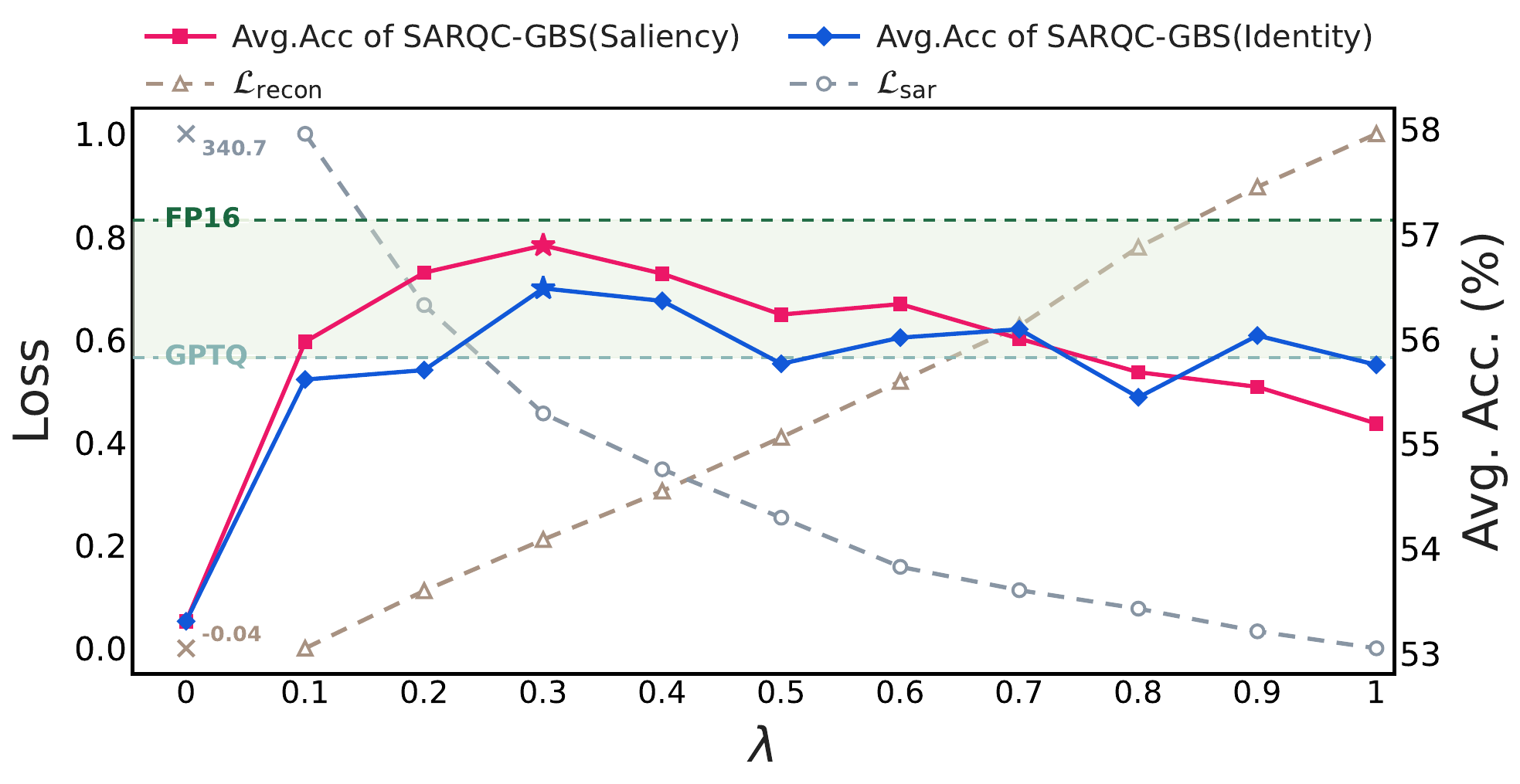}
        \caption{}
        \label{fig:row_b}
    \end{subfigure}
    \caption{Illustration and validation of our motivation.
\emph{(a) Conceptual illustration}: A smaller
$\mathbb{E}_{X}[\|\widehat{\mathbf{W}}_lX-\mathbf{W}_lX\|_2^2]$
generally implies better downstream performance. Vanilla calibration minimizes only the reconstruction loss, which can induce \emph{weight drift} and degrade performance. The optimal solution lies in a \emph{sweet spot} that balances \emph{output mismatch} and \emph{weight drift}.
\emph{(b) Empirical evidence of our motivation}: For \texttt{SARQC-GBS} on LLaMA2-7B under W4A16, the best downstream accuracy is achieved with a moderate regularization toward the original FP weights. The average accuracy over the eight evaluation tasks described in \Cref{sec:exp setup} is reported here. $\mathcal{L}_{\mathrm{recon}}$ and $\mathcal{L}_{\mathrm{sar}}$ are the reconstruction error and saliency-aware regularizer defined in \Cref{eq:rqc_layer}. See \Cref{fig:3d_weight_diff} (in Appendix~\ref{app:3d_weight_diff}) for the visualization of weight drift. See Appendix~\ref{appx:detail_of_figure1} for more details.}
    \label{fig:illustration}
\vspace{-18pt}
\end{figure*}

\paragraph{Motivation} Recent weight-only PTQ methods for LLMs select quantized weights by minimizing layer-wise reconstruction error on a predetermined, often small, calibration set \citep{lin2024awq,frantar2023gptq,li2025gptqv2}. Specifically, for a linear layer with input activations $\mathbf{X}_l$ and FP weights $\mathbf{W}_l$, a typical PTQ calibration process selects quantized weights $\widehat{\mathbf{W}}_l$ by minimizing $\|\mathbf{W}_l \mathbf{X}_l- \widehat{\mathbf{W}}_l \mathbf{X}_l \|_\mathrm{F}^2$. However, reconstruction-based PTQ calibration does not explicitly constrain the dequantized weights to remain close to the original FP weights, which can lead to undesired weight drift. During inference, weight-only quantized LLMs still rely on dequantized weights to interact with floating-point activations, so substantial deviation from the original FP weights may degrade downstream performance. As illustrated in \Cref{fig:illustration} and \Cref{fig:3d_weight_diff} (in Appendix~\ref{app:3d_weight_diff}), calibration that solely minimizes reconstruction error can induce undesired weight drift even when the calibration loss is small. In particular, a smaller reconstruction error does not guarantee a smaller weight drift $\|\mathbf{W}_l-\widehat{\mathbf{W}}_l\|_\mathrm{F}^2$; in fact, the two objectives can conflict. As formalized later in \Cref{thm:generalization risk}, this may place the model in a regime with an enlarged generalization risk. By introducing an explicit discrepancy regularizer, this trade-off becomes more explicit and controllable. As the regularization strength increases, weight drift decreases while reconstruction error rises, and the best downstream performance is achieved at an intermediate point along this trade-off curve, as shown in \Cref{fig:illustration}, which is formalized in Corollary~\ref{cor:constraint_penalty_supported} later.

\paragraph{Our Approach}
Motivated by this observation, we propose \emph{Regularized Quantization Calibration}, a general calibration framework for post-training quantization that augments the standard layer-wise reconstruction objective with an explicit regularizer on weight drift. The original FP model serves as a natural reference point, and the added regularizer constrains the quantized solution from drifting too far away from it. This leads to a more balanced calibration objective that better preserves the behavior of the original FP model and is more robust to limited or unrepresentative calibration data. It is then further extended to \emph{Saliency-Aware Regularized Quantization Calibration} (SARQC). Importantly, SARQC can be seamlessly integrated into existing PTQ algorithms and applies to the two dominant PTQ paradigms, namely grid-search methods \citep{xiao2023smoothquant,lin2024awq} and Gram-based methods \citep{frantar2023gptq,li2025gptqv2}.

\paragraph{Main Contributions} The main contributions of this work are as follows: (1) We find that controlling the deviation to the original weights of LLMs is critical to weight-only post-training quantized LLMs, and provide both empirical evidence and theoretical analysis from the perspective of generalization risk and constrained optimization. (2) Motivated by these findings, we propose a regularized quantization calibration framework for weight-only post-training quantization of LLMs that explicitly controls weight drift from the original floating-point (FP) weights. We further extend the framework with a saliency-aware regularization term. (3) Through extensive experiments, we demonstrate that the proposed framework is broadly applicable across various PTQ paradigms and consistently achieves superior performance across a wide range of LLM families and model sizes.

\section{Background}
\label{sec:background}

In this section, we briefly review quantization and post-training quantization for LLMs. See Appendix~\ref{appendix: More Comprehensive Literature Review} and Appendix~\ref{app:technical_def} for more preliminaries.

\vspace{-6pt}
\paragraph{Quantization and Dequantization}
Quantization maps high-precision floating-point values (e.g., BF16/FP16) to discrete integer values (e.g., INT2/INT4), thereby reducing memory cost and improving throughput in memory-bound scenarios. For weight-only quantization considered in this work, a floating-point weight value $w$ is quantized as follows:
\begin{talign}
\label{eq:Q operation}
    w_{\text{INT-N}} = \mathrm{round}\!\left(\frac{w_{\text{FP16}}}{\eta}\right), \quad\quad \eta = \frac{\max |w|}{2^{N-1}-1} ~~,
\end{talign}
where $N$ is the number of bits (e.g., $N=4$ for INT4) and $\eta$ is the quantization step size. The corresponding dequantization process reconstructs a floating-point approximation of the quantized value as $\widehat{w} = \eta \cdot w_{\text{INT-N}}$. Such dequantization operations are performed during inference for weight-only quantized LLMs. To simplify notation, we assume a symmetric quantization scheme centered at zero. The asymmetric case can be handled similarly by introducing a zero-point; further details are provided in Appendix~\ref{app:prelim_quant_and_dequant}. Weight quantization can be applied at various granularities, including per-tensor, group-wise, and per-channel quantization; see \cite{xiao2023smoothquant} for a detailed discussion.

\vspace{-6pt}
\paragraph{Challenges in PTQ of LLMs} Although quantization is appealing to use for memory-bound deployment, naively applying it as described above can lead to significant performance degradation in downstream tasks. This is due to the "outliers" in activations and weights of LLMs. These "outliers" are (potentially extreme) large values that can dominate the limited dynamic range of low-bit formats and degrade quantization fidelity \citep{4bitinteger,xiao2023smoothquant,lin2024awq}. To alleviate these challenges, one line of work applies extra scaling factors to both weights and activations \citep{xiao2023smoothquant, lin2024awq}. That is, for a linear layer with weight $\mathbf{W}_l \in\mathbb{R}^{d_{\text{out}}\times d_{\text{in}}}$, given a scaling factor matrix $\widetilde{\mathbf{S}}_l \in\mathbb{R}^{d_{\text{in}} \times d_{\text{in}}}$ and the input $\mathbf{X}_l \in\mathbb{R}^{d_{\text{in}}\times n}$ conditioned on the calibration data, the output of this layer can be rewritten as
\begin{talign}
\label{eq:scale-ptq}
    \mathbf{Y}_l := \mathbf{W}_l \mathbf{X}_l
    =
    \left(\mathbf{W}_l \widetilde{\mathbf{S}}_l\right)
    \left(\widetilde{\mathbf{S}}_l^{-1}\mathbf{X}_l\right)~~,
\end{talign}
where $\widetilde{\mathbf{S}}_l^{-1}$ denotes the inverse of $\widetilde{\mathbf{S}}_l$. This transformation is function-preserving, since it leaves the layer output unchanged in the absence of discretization error. A common choice is to take $\widetilde{\mathbf{S}}_l := \mathrm{diag}(\tilde{s}_l)$, where $\tilde{s}_l \in \mathbb{R}^{d_{\text{in}}}$ is a channel-wise scaling vector constructed from summary statistics of the weights and activations. The optimal quantization operations $Q$ including $\widetilde{\mathbf{S}}_l$ and the associated quantized weights $\widehat{\mathbf{W}}_l$ are selected by minimizing $\|\widehat{\mathbf{W}}_l \mathbf{X}_l - \mathbf{W}_l \mathbf{X}_l\|_\mathrm{F}^2$. For instance, AWQ \citep{lin2024awq} uses activation statistics to guide per-channel scaling factors $\tilde{s}_l$ to better preserve salient channels for weight-only PTQ. This approach alleviates the impact of activation outliers by applying $\mathrm{diag}(\tilde{s}_l)^{-1}$ to activations, while the scaled weights $\mathbf{W}_l \mathrm{diag}(\tilde{s}_l)$ can be absorbed into the offline weights.

An alternative line of work applies orthogonal or structured rotations to redistribute outliers across dimensions before quantization, aiming to make both activations and weights more quantization-friendly. For instance, QuIP \citep{Chee2023quip} introduces orthogonal transformations to reduce the impact of outliers, QuaRot \citep{Ashkboos2024quarot} uses Hadamard-style transformations to suppress outliers while preserving the model function, and SpinQuant \citep{liuspinquant} further optimizes the rotation parameters to better match quantization constraints. Despite their effectiveness, rotation-based methods may introduce additional calibration, optimization, or inference costs, especially when the transformations are learned \citep{4bitinteger,liuspinquant}.

\vspace{-6pt}
\paragraph{Grid Search over Scaling Factors}
As discussed above, a common strategy in PTQ is to apply tensor-, group-, or channel-wise scaling, which rebalances the quantization difficulty between activations and weights and often improves downstream performance. In practice, the scaling factors $\tilde{s}_l$ in \Cref{eq:scale-ptq} are derived from simple tensor-, group-, or channel-level statistics that serve as heuristic proxies for activation or weight saliency. Different scaling-based PTQ methods instantiate these factors in different ways. A widely adopted class of scaling factors is
\begin{talign}
\label{eq: widely used scaling factors}
    \tilde{s}_{l}^{(j)}
    \propto
    \frac{\mathrm{stat}_{X}(|\mathbf{X}_l^{(j)}|)^{\alpha}}
    {\mathrm{stat}_{W}(|\mathbf{W}_l^{(j)}|)^{1-\alpha}},
\end{talign}
where $\alpha\in[0,1]$ and $\mathrm{stat}_{X}(\cdot)$ and $\mathrm{stat}_{W}(\cdot)$ denote pre-defined activation/weight summary statistics. This form covers the scaling factors used in \cite{xiao2023smoothquant}, where maximum statistics are used to migrate activation outliers into weights. In contrast, \cite{lin2024awq} only uses channel-wise activation summary statistics as scaling factors e.g., $\tilde{s}_{l}^{(j)} \propto \mathrm{mean}(|\mathbf{X}_l^{(j)}|)^\alpha$. Under such parameterizations, PTQ can be simplified to a lightweight grid search over scaling parameters such as $\alpha$, with the goal of minimizing the reconstruction error between the original FP layer output and the quantized layer output conditioning on the calibration data.

\vspace{-6pt}
\paragraph{Gram-based PTQ via Optimal Brain Compression}
As an alternative to grid search over scaling factors, Gram-based methods are also widely adopted in practice \citep{frantar2023gptq,li2025gptqv2}. Second-order compression dates back to Optimal Brain Surgeon (OBS) \citep{yannoptimal1989,NIPS1993_b056eb15}, which uses curvature information to estimate how parameter perturbations affect the loss. Optimal Brain Compression (OBC) connects this idea to calibration objectives by introducing a tractable curvature proxy. When mean squared error is used as the layer-wise calibration objective with inputs $\mathbf{X}_l$, the resulting quadratic form involves the Gram matrix $\mathbf{X}_l\mathbf{X}_l^\top$, which serves as a surrogate curvature and penalizes output distortion induced by weight perturbations \citep{frantar2022optimal}. 
\cite{frantar2023gptq} further adapt OBC to weight-only PTQ by performing efficient sequential or block-wise updates using cached activation statistics, which do not require end-to-end backpropagation. That is, 
\begin{talign}
    \min_{\Delta \mathbf{W}_l} \| (\mathbf{W}_l + \Delta \mathbf{W}_l) \mathbf{X}_l - \mathbf{W}_l \mathbf{X}_l\|_\mathrm{F}^2 \quad, \, \, \mathrm{s.t.} \, \Delta \mathbf{W}_l =  \widehat{\mathbf{W}}_l - \mathbf{W}_l.
\end{talign}
where $\widehat{\mathbf{W}}_l$ denotes the quantized weights. Note that this is actually equivalent to select $\widehat{\mathbf{W}}_l$ by minimizing $ \| \widehat{\mathbf{W}}_l \mathbf{X}_l - \mathbf{W}_l \mathbf{X}_l \|_\mathrm{F}^2 $.
Several follow-up works further improve the effectiveness of OBC-style methods \citep{li2025gptqv2,vanbaalen2025gptvqblessingdimensionalityllm}.

\section{Methodology}
\label{sec:method}

In this section, we formally introduce \emph{Regularized Quantization Calibration} and its extension \emph{Saliency-Aware Regularized Quantization Calibration} (SARQC) in \Cref{sec:rqc}. We then show how to optimize the resulting regularized objectives in \Cref{sec: optimize sarqc}. 

\subsection{Saliency-Aware Regularized Quantization Calibration}
\label{sec:rqc}

We begin by reviewing the standard calibration objective used in post-training quantization for LLMs, which measures the output discrepancy induced by quantization. For the $l$-th linear layer with FP weight $\mathbf{W}_l\in\mathbb{R}^{d_{\text{out}}\times d_{\text{in}}}$ and cached calibration input $\mathbf{X}_l\in\mathbb{R}^{d_{\text{in}}\times n}$ conditioning on the calibration data, PTQ seeks the quantized weights $\widehat{\mathbf{W}}_l$ by minimizing the reconstruction error
\begin{talign}
\label{eq:ptq_recon}
\min_{\widehat{\mathbf{W}}_l\in\mathcal{Q}}
\;\; \|\mathbf{W}_l \mathbf{X}_l - \widehat{\mathbf{W}}_l \mathbf{X}_l\|_\mathrm{F}^2,
\end{talign}
where $\mathcal{Q}$ denotes the feasible set of dequantized-quantized weights induced by the underlying quantization scheme, possibly together with additional reparameterizations such as scaling by $\widetilde{\mathbf{S}}_l$. Note that each quantization operation $Q$ induces a Dirac measure supported at the associated quantized weight $\widehat{\mathbf{W}}_l$. The collection of all quantization operations therefore induces a family of Dirac measures $\mathcal{Q}$ over the space of quantized weights.

\vspace{-6pt}
\paragraph{Why It May Fail} Existing calibration-based quantization methods aim to minimize the reconstruction error $\|\mathbf{W}_l\mathbf{X}_l-\widehat{\mathbf{W}}_l\mathbf{X}_l\|_\mathrm{F}^2$ conditioning on the input $\mathbf{X}_l$ from a pre-determined calibration dataset to preserve the capacity of LLMs. However, minimizing the reconstruction error using only the calibration data does not explicitly control the deviation between $\widehat{\mathbf{W}}_l$ and $\mathbf{W}_l$, which is critical to the generalization performance of quantized LLMs on downstream tasks, as shown in \Cref{thm:generalization risk}.

\begin{theorem}
\label{thm:generalization risk}
Consider a restricted finite quantization class $\mathcal Q_R :=\{ \widehat{\mathbf W}_l \in \mathcal Q : \| \widehat{\mathbf W}_l - \mathbf W_l \|_\mathrm{F} \le R\}$. Define \(\Delta \mathbf W_l := \widehat{\mathbf W}_l - \mathbf W_l\), and assume that
\(\| X\|_2 \le M_X\) almost surely. Let the true downstream reconstruction risk be $\mathcal R(\widehat{\mathbf W}_l)
:=
\mathbb E_{X \sim p_X}\!\left[
\|
\Delta \mathbf W_l X
\|_2^2
\right].$ Let the calibration set at layer \(l\) be $\mathcal D_{\mathrm{cal},l} := \{X_{l,i}\}_{i=1}^n,
X_{l,1},\dots,X_{l,n}\stackrel{\mathrm{i.i.d.}}{\sim} p_X,$ and define the empirical calibration risk as $\widehat{\mathcal R}_{\mathrm{cal}}(\widehat{\mathbf W}_l)
:=
\frac{1}{n}\sum_{i=1}^n
\|
\Delta \mathbf W_l X_{l,i}
\|_2^2.$ Then, for any \(\delta\in(0,1)\), with probability at least \(1-\delta\), the following holds simultaneously for all \(\widehat{\mathbf W}_l\in\mathcal Q_R\):
\begin{talign*}
\left|
\mathcal R(\widehat{\mathbf W}_l)
-
\widehat{\mathcal R}_{\mathrm{cal}}(\widehat{\mathbf W}_l)
\right|
\le
R^2 M_X^2
\sqrt{\frac{
\log \frac{2|\mathcal Q_R|}{\delta}
}{2n}}.
\end{talign*}
\end{theorem}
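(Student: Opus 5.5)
The plan is to apply Hoeffding's inequality to each fixed $\widehat{\mathbf W}_l\in\mathcal Q_R$ and then take a union bound over the finite class $\mathcal Q_R$. Since $\mathcal Q_R$ is finite and, by assumption, does not depend on the calibration sample (it is defined through $\mathbf W_l$ and the quantization scheme alone), no chaining or Rademacher argument is needed. The one point to check is that each summand is bounded in an interval of length $R^2M_X^2$; this is what produces the constant in the statement.

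\textbf{Step 1 (boundedness).} Fix $\widehat{\mathbf W}_l\in\mathcal Q_R$ and define $Z_i:=\|\Delta\mathbf W_l X_{l,i}\|_2^2$. Using the operator-norm bound and then $\|\cdot\|_2\le\|\cdot\|_\mathrm{F}$,
\begin{talign*}
0\le Z_i\le \|\Delta\mathbf W_l\|_2^2\,\|X_{l,i}\|_2^2\le \|\Delta\mathbf W_l\|_\mathrm{F}^2 M_X^2\le R^2M_X^2
\end{talign*}
almost surely. The $Z_i$ are i.i.d. with mean $\mathcal R(\widehat{\mathbf W}_l)$, and their average is $\widehat{\mathcal R}_{\mathrm{cal}}(\widehat{\mathbf W}_l)$.

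\textbf{Step 2 (Hoeffding).} The two-sided Hoeffding inequality for variables in $[0,R^2M_X^2]$ gives
\begin{talign*}
\Pr\big(|\mathcal R(\widehat{\mathbf W}_l)-\widehat{\mathcal R}_{\mathrm{cal}}(\widehat{\mathbf W}_l)|>t\big)\le 2\exp\!\Big(-\frac{2nt^2}{R^4M_X^4}\Big).
\end{talign*}

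\textbf{Step 3 (union bound).} Summing this over the $|\mathcal Q_R|$ elements bounds the failure probability by $2|\mathcal Q_R|\exp(-2nt^2/(R^4M_X^4))$. Setting this equal to $\delta$ and solving gives
\begin{talign*}
t=R^2M_X^2\sqrt{\log(2|\mathcal Q_R|/\delta)/(2n)},
\end{talign*}
which is exactly the stated bound, holding simultaneously for all $\widehat{\mathbf W}_l\in\mathcal Q_R$.

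The main obstacle is conceptual rather than technical. The union bound is only legitimate if $\mathcal Q_R$ is fixed before the calibration data are drawn. In practice the scaling factors $\widetilde{\mathbf S}_l$ are computed from the calibration statistics, so I would state explicitly that $\mathcal Q$ is understood as a data-independent candidate set, for example all grid values of $\alpha$ combined with fixed rounding rules. The other point needing care is the norm chain in Step 1: the operator norm is bounded by the Frobenius norm, which is what lets the constraint $\|\Delta\mathbf W_l\|_\mathrm{F}\le R$ in the definition of $\mathcal Q_R$ control the range of each $Z_i$.
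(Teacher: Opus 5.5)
Your proposal is correct and follows the paper's proof essentially step for step. Like the paper, you bound each loss in $[0,R^2M_X^2]$ via $\|\Delta\mathbf W_l X\|_2\le\|\Delta\mathbf W_l\|_\mathrm{F}\|X\|_2$, apply two-sided Hoeffding to each fixed element, take a union bound over the finite $\mathcal Q_R$, and solve for $t$. Your remark that $\mathcal Q_R$ must be fixed independently of the calibration sample (whereas in practice the scaling factors are computed from it) is a legitimate caveat that the paper's proof leaves implicit.
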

See Appendix~\ref{append: proof of generalization risk} for proof. The above theorem shows that the true downstream risk can be controlled by two terms:
the empirical calibration risk and a generalization term depending on the radius of the weight drift $R$. Large weight drift tends to increase the generalization risk. Therefore, minimizing only
$\widehat{\mathcal R}_{\mathrm{cal}}(\widehat{\mathbf W}_l)$ may lead to poor downstream
generalization if the selected quantized weights have large drift from the original FP weights.

This is also supported by the empirical results in \Cref{fig:illustration} and \Cref{fig:3d_weight_diff} (in Appendix~\ref{app:3d_weight_diff}): undesired deviation to the original FP weights could potentially lower the performance of the weight-only quantized models. In this sense, the degradation in performance of vanilla calibration can be understood as moving away from the \emph{sweet spot} that balances output mismatch and weight drift. These motivate the following regularized calibration objective for weight-only post-training quantization.

\vspace{-6pt}
\paragraph{Regularized Quantization Calibration}  \Cref{thm:generalization risk} implies that it might be better to control the differences between $\mathbf{W}_l$ and $\widehat{\mathbf{W}}_l$ for each quantized layer indexed by $l$. To control this deviation explicitly, we introduce an extra regularization term into the quantization calibration that allows control to such deviation, given by $\mathcal{D}(\mathbf{W}_l, \widehat{\mathbf{W}}_l)$ where $\mathcal{D}(\cdot,\cdot)$ is some discrepancy one may specify. That is,
\begin{talign}
\label{eq:rqc_layer_most_genreal}
\min_{\widehat{\mathbf{W}}_l \in\mathcal{Q}}\|\mathbf{W}_l \mathbf{X}_l - \widehat{\mathbf{W}}_l \mathbf{X}_l\|_\mathrm{F}^2
+
\lambda \mathcal{D} (\widehat{\mathbf{W}}_l, \mathbf{W}_l) ~~.
\end{talign}
where $\lambda>0$ controls the strength of regularization. Note here we regard $\widehat{\mathbf{W}}_l$ and $\mathbf{W}_l$ as vectors in $\mathcal{D} (\widehat{\mathbf{W}}_l, \mathbf{W}_l)$, i.e., $\mathcal{D} (\textrm{vec}(\widehat{\mathbf{W}}_l), \textrm{vec}(\mathbf{W}_l))$ given some discrepancy or distance metric $\mathcal{D}$. To simplify notation, we drop $\textrm{vec}(\cdot)$ when it is clear from the context. A natural choice for $\mathcal{D}$ is the Kullback--Leibler (KL) divergence. However, it could lead to ill-posed discrepancy measurements as $\mathrm{KL}(\delta_{\mathbf{W}_l}, \delta_{\widehat{\mathbf{W}}_l})=0$ if $\mathbf{W}_l=\widehat{\mathbf{W}}_l$ otherwise $+\infty$ where $\delta_{\mathbf W_l}$ and $\delta_{\widehat{\mathbf W}_l}$ denote the Dirac measures supported at $\mathbf W_l$ and $\widehat{\mathbf W}_l$, respectively. While the Wasserstein-2 distance remains well-defined in this setting,
and reduces to the squared distance between the weights $
W_2^2(\delta_{\mathbf{W}_l}, \delta_{\widehat{\mathbf{W}}_l})
= \|\mathbf{W}_l - \widehat{\mathbf{W}}_l\|_\mathrm{F}^2 = \|\textrm{vec}(\mathbf{W}_l) - \textrm{vec}(\widehat{\mathbf{W}}_l)\|_2^2$. That is, we select the quantized weights $\widehat{\mathbf{W}}_l \in \mathcal{Q}$ by minimizing:
\begin{talign}
\label{eq:rqc_layer_without_saliency}
\min_{\widehat{\mathbf{W}}_l \in\mathcal{Q}} \|\mathbf{W}_l \mathbf{X}_l - \widehat{\mathbf{W}}_l \mathbf{X}_l\|_\mathrm{F}^2
+
\lambda \| (\widehat{\mathbf{W}}_l - \mathbf{W}_l)\|_\mathrm{F}^2 ~~.
\end{talign} 
The reconstruction error encourages the quantized layer to preserve output-relevant behavior on inputs drawn from the calibration distribution, while the regularization term discourages unnecessary deviation from the original FP models. The following corollary establishes the connection between the objective of \emph{Regularized Quantization Calibration} (RQC) in \Cref{eq:rqc_layer_without_saliency} and \Cref{thm:generalization risk}.

\begin{corollary} 
\label{cor:constraint_penalty_supported}
    Fix a layer indexed by $l$, assume that $\mathcal Q$ is finite and the constrained feasible set $\mathcal Q_R:=\{\widehat{\mathbf W}^{\prime}\in\mathcal Q: \|\widehat{\mathbf W}^{\prime}-\mathbf W_l\|_{\mathrm F}^2  \le R^2\}$ is nonempty. 
    Let $\widehat{\mathbf W}_l\in\arg\min_{\widehat{\mathbf W}^{\prime}\in\mathcal Q_R}\widehat{\mathcal R}_{\mathrm{cal}} (\widehat{\mathbf W}^{\prime})$. 
    If the supportedness condition $\lambda_{\min}\le \lambda_{\max}$ holds, then for every finite $\lambda_R$ satisfying $\lambda_{\min}\le \lambda_R\le \lambda_{\max}$,
    \begin{talign*}
        \widehat{\mathbf W}_l
        \in \arg\min_{\widehat{\mathbf W}^{\prime}\in\mathcal Q}
        \left\{ \widehat{\mathcal R}_{\mathrm{cal}} (\widehat{\mathbf W}^{\prime}) + \lambda_R \|\widehat{\mathbf W}^{\prime}-\mathbf W_l\|_{\mathrm F}^2 \right\},
    \end{talign*}
    where $\lambda_{\min}$ and $\lambda_{\max}$ are defined in Appendix~\ref{append:cor_constraint_penalty}.
    Conversely, if there exists a finite $\lambda_R\ge 0$ such that $\widehat{\mathbf W}_l \in \arg\min_{\widehat{\mathbf W}^{\prime}\in\mathcal Q} \{ \widehat{\mathcal R}_{\mathrm{cal}} (\widehat{\mathbf W}^{\prime})+\lambda_R \|\widehat{\mathbf W}^{\prime}-\mathbf W_l\|_{\mathrm F}^2 \}$, then necessarily $\lambda_{\min}\le \lambda_R\le \lambda_{\max}$.
\end{corollary}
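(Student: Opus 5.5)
This is a finite Lagrangian-duality (exact-penalty) argument. Write $f(\mathbf W'):=\widehat{\mathcal R}_{\mathrm{cal}}(\mathbf W')$ and $g(\mathbf W'):=\|\mathbf W'-\mathbf W_l\|_{\mathrm F}^2$, and set $f^\star:=f(\widehat{\mathbf W}_l)$, $g^\star:=g(\widehat{\mathbf W}_l)$. Since $\mathcal Q$ is finite, penalized optimality of $\widehat{\mathbf W}_l$ at level $\lambda$ is equivalent to the finite system of linear inequalities
\[
f^\star+\lambda g^\star\le f(\mathbf W')+\lambda g(\mathbf W')\quad\text{for all }\mathbf W'\in\mathcal Q .
\]
The plan is to solve this system for $\lambda$ by splitting the competitors $\mathbf W'$ according to the sign of $g(\mathbf W')-g^\star$.

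\emph{Competitors with $g(\mathbf W')>g^\star$.} The inequality rearranges to $\lambda\ge (f^\star-f(\mathbf W'))/(g(\mathbf W')-g^\star)$. This yields the lower threshold
\[
\lambda_{\min}:=\max\Bigl\{0,\ \max_{\mathbf W'\in\mathcal Q:\,g(\mathbf W')>g^\star}\tfrac{f^\star-f(\mathbf W')}{g(\mathbf W')-g^\star}\Bigr\}.
\]
This set includes points outside $\mathcal Q_R$ that may have smaller calibration risk, so the quotient can be positive.

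\emph{Competitors with $g(\mathbf W')<g^\star$.} These satisfy $g(\mathbf W')<g^\star\le R^2$, so they lie in $\mathcal Q_R$. Hence $f(\mathbf W')\ge f^\star$ by the constrained optimality of $\widehat{\mathbf W}_l$. Rearranging gives $\lambda\le (f(\mathbf W')-f^\star)/(g^\star-g(\mathbf W'))$, which defines
\[
\lambda_{\max}:=\min_{\mathbf W'\in\mathcal Q:\,g(\mathbf W')<g^\star}\tfrac{f(\mathbf W')-f^\star}{g^\star-g(\mathbf W')},
\]
with the convention $\min\emptyset=+\infty$. This quantity is nonnegative.

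\emph{Competitors with $g(\mathbf W')=g^\star$.} The inequality reduces to $f^\star\le f(\mathbf W')$, independent of $\lambda$. This is the step I expect to be the main obstacle. If such a $\mathbf W'$ lies in $\mathcal Q_R$, which holds automatically since $g^\star\le R^2$, then constrained optimality gives the inequality directly. So the tie case is harmless, and no extra condition is needed.

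With these definitions, which I take to match those of Appendix~\ref{append:cor_constraint_penalty}, both directions follow. For the forward direction, suppose $\lambda_{\min}\le\lambda_R\le\lambda_{\max}$ with $\lambda_R$ finite. Then every inequality in the three groups holds, so $\widehat{\mathbf W}_l$ is a penalized minimizer. For the converse, suppose $\widehat{\mathbf W}_l$ minimizes the penalized objective for some finite $\lambda_R\ge0$. Reading off each inequality from the first group gives $\lambda_R\ge\lambda_{\min}$, and from the second group gives $\lambda_R\le\lambda_{\max}$. In particular the supportedness condition is then necessary.

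The only other care needed is bookkeeping: empty index sets give $\lambda_{\min}=0$ and $\lambda_{\max}=+\infty$, which is why the statement requires $\lambda_R$ to be finite. I will also note that $\lambda_{\min}\le\lambda_{\max}$ is exactly the condition that $(g^\star,f^\star)$ lies on the lower convex hull of the finite point cloud $\{(g(\mathbf W'),f(\mathbf W'))\}$. This is the "supported" terminology, and it explains why the condition can fail for non-convex finite $\mathcal Q$.
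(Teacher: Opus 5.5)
Your proposal is correct and follows essentially the same route as the paper's proof. It uses the same definitions of $\lambda_{\min}$ and $\lambda_{\max}$, reduces penalized optimality to the same pairwise inequalities over the finite set $\mathcal Q$, and splits competitors three ways by the sign of $g(\mathbf W')-g^\star$ exactly as the paper does, so that infeasible points fall in the farther group and closer or tied points are automatically feasible; the converse is likewise read off the same inequalities.
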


See Appendix~\ref{append:cor_constraint_penalty} for proof. Corollary~\ref{cor:constraint_penalty_supported} clarifies that the constrained calibration problem can be represented by a regularized calibration objective over a finite quantization set. In particular, when $\lambda_{\min}\le \lambda_{\max}$ holds, then one can choose a penalty strength $\lambda_R\in[\lambda_{\min},\lambda_{\max}]$ such that $\widehat{\mathbf W}_l$ remains optimal after replacing the hard radius constraint $D(\widehat{\mathbf W}^{\prime})\le R^2$ by the soft penalty $\lambda_R D(\widehat{\mathbf W}^{\prime})$.
Here $\lambda_{\min}\le\lambda_{\max}$ simply states that the penalty must be large enough to suppress farther, possibly infeasible, low-risk candidates, but not so large that it favors closer, higher-risk feasible candidates. This is the exact condition under which the bias introduced by the penalty does not change the selected minimizer. See Appendix~\ref{append:cor_constraint_penalty} for more details, interpretation of $\lambda_{\min}$ and $\lambda_{\max}$, reasonableness of the supportedness condition and the connection to Lagrangian relaxation.

\vspace{-3pt}
\paragraph{Saliency-Aware Regularized Quantization Calibration} As discussed in \Cref{sec:background}, a key observation in previous works is that introducing scaling factors $\widetilde{\mathbf{S}}_l$ is essential for maintaining performance, as it effectively transfers the difficulty of quantization from activations to weights \citep{xiao2023smoothquant, lin2024awq}. These scaling factors also serve to quantify the saliency of the weights. This naturally leads to saliency-aware constraints on the weights. These motivate augmenting the reconstruction error with a saliency-weighted penalty. We therefore quantize each layer of LLMs by minimizing the following regularized quantization calibration objective, namely \emph{Saliency-Aware Regularized Quantization Calibration} (SARQC), given by:
\begin{talign}
\label{eq:rqc_layer}
\min_{\widehat{\mathbf{W}}_l \in\mathcal{Q}}
\underbrace{\|\mathbf{W}_l \mathbf{X}_l - \widehat{\mathbf{W}}_l \mathbf{X}_l\|_\mathrm{F}^2}_{\mathcal{L}_{\text{recon}}:=\text{Reconstruction Error}}
+
\lambda \underbrace{\| (\widehat{\mathbf{W}}_l - \mathbf{W}_l) \mathbf{S}_l\|_\mathrm{F}^2}_{\mathcal{L}_{\text{sar}}:=\text{Saliency-Aware Regularization}} ~~,
\end{talign}
where $\mathbf{S}_l$ encodes the saliency of the weights; see also \Cref{eq: widely used scaling factors}. Note that $\widetilde{\mathbf{S}}_l$ and $\mathbf{S}_l$ may both be present, in which case additional handling is required. In this setting, we find that encouraging $\mathbf{S}_l$ to be close to $\widetilde{\mathbf{S}}_l$ and removing the tuning hyperparameter associated with $\mathbf{S}_l$ works well in practice; see \Cref{sec: optimize sarqc} for details. The first term $\mathcal{L}_{\text{recon}}$ matches layer outputs between FP weights and quantized weights conditioned on the calibration data, while the second term $\mathcal{L}_{\text{sar}}$ penalizes deviations from the original FP weights in a saliency-aware manner. Since $\mathbf{S}_l=\mathrm{diag}(s_l)$ reweights different channels, larger saliency values impose stronger penalties on the corresponding weight deviations. In this way, the regularizer more strongly protects important channels that are more influential to the layer output under low-bit quantization. When $\mathbf{S}_l=I$, the penalty reduces to a uniformly weighted regularizer and recovers the objective of RQC in \Cref{eq:rqc_layer_without_saliency}. \emph{SARQC constrains information loss primarily along directions that are most relevant to the saliency.} We later show that this saliency-aware regularizer $\mathcal{L}_{\text{sar}}$ with $\mathbf{S}_l \neq I$ further helps maintain the performance of LLMs after post-training quantization, as supported by experimental results in \Cref{sec:result analysis}.

\subsection{Practical Strategies to Optimize RQC and SARQC}
\label{sec: optimize sarqc}

We now describe two practical strategies for optimizing the proposed RQC objective in \Cref{eq:rqc_layer_without_saliency} and the SARQC objective in \Cref{eq:rqc_layer}. Since SARQC reduces to RQC when $\mathbf{S}_l = I$ (identity matrix), these strategies are directly applicable to both objectives simultaneously.

\vspace{-8pt}
\paragraph{Optimize via Grid Search} 
Grid search over scaling factors $\widetilde{\mathbf{S}}_l$ is commonly used in PTQ methods to capture the saliency of weights. Therefore, selecting an optimal quantization operation for each layer amounts to choosing a small set of parameters associated with these scaling factors. Let $\widetilde{\mathbf{S}}_l := \mathrm{diag} (\tilde{s}_l(\alpha))$ with $\tilde{s}_l(\alpha) \in \mathbb{R}_{+}^{d_{\text{in}}}$ being a scaling vector constructed from activation/weight statistics (e.g., channel-wise) with $\alpha$; see \Cref{eq: widely used scaling factors} for the widely-adopted ones. In this case, the quantized weights $\widehat{\mathbf{W}}_l(\alpha)$ are functions of $\alpha$ through the scaling matrix $\widetilde{\mathbf{S}}_l(\alpha)$.
This naturally reframes the optimization process of SARQC as a grid-search process over $\alpha \in [0,1]$ in the scaling factor $\widetilde{\mathbf{S}}_l(\alpha)$. Therefore, to preserve LLM performance after PTQ, SARQC selects $\widehat{\mathbf{W}}_l(\alpha)$ by optimizing
\begin{talign}
\label{eq:rqc_grid}
\min_{\alpha \in \{\alpha_k\}_{k=0}^{K-1}}
\;\;
\|\mathbf{W}_l\mathbf{X}_l-\widehat{\mathbf{W}}_l(\alpha)\mathbf{X}_l\|_\mathrm{F}^2
+
\lambda\|(\widehat{\mathbf{W}}_l(\alpha)-\mathbf{W}_l)\mathbf{S}_l\|_\mathrm{F}^2 ~~,
\end{talign}
where, following the grid-search design in \citep{lin2024awq}, we consider a discrete grid
$\{\alpha_k\}_{k=0}^{K-1}\subset[0,1]$, with $K \in \mathbb{N}^+$, $K\ge 2$, and
$\alpha_k = k/(K-1)$ for $k=0,1,\dots,K-1$.
In the first term, the candidate quantized weights $\widehat{\mathbf{W}}_l(\alpha)$ depends on the scaling matrix $\widetilde{\mathbf{S}}_l(\alpha)$, for which we use the widely adopted choice in \Cref{eq: widely used scaling factors}. For SARQC, we find that setting $\mathbf{S}_l:=\mathrm{diag}(s_l)$, with the $j$-th element
$s_l^{(j)} := \nicefrac{\mathrm{mean}(|\mathbf{X}_l^{(j)}|)}{\mathrm{mean}(|\mathbf{W}_l^{(j)}|)}$,
performs well in practice, also inspired by \citep{xiao2023smoothquant}. To balance the scale between the two terms, we apply min--max normalization and select the best candidate by minimizing the normalized joint objective. This also aligns with Corollary~\ref{cor:constraint_penalty_supported} to ensure that the range of $\lambda$ is proper. We refer to this approach as \texttt{SARQC-GS(Saliency)} when $\mathbf{S}_l\neq I$. When $\mathbf{S}_l:=I$, we denote the resulting variant as \texttt{SARQC-GS(Identity)}, which recovers the RQC case. This enables the seamless integration of RQC and SARQC into grid-search-based PTQ methods without requiring backpropagation through quantization operations or scaling factors. See \Cref{alg:sarqc_grid} (in Appendix~\ref{appendix: pseudo algorithms}) and Appendix~\ref{alg:Implementation_Details} for more details.

\vspace{-8pt}
\paragraph{Optimize via Gram-Based Solvers}
An alternative approach for selecting the optimal quantized weights $\widehat{\mathbf{W}}_l$ is to employ second-order Gram-based solvers to minimize the SARQC objective as \citep{frantar2023gptq, li2025gptqv2}. Let $\Delta \mathbf{W}_l := \widehat{\mathbf{W}}_l - \mathbf{W}_l$. Since $\|\mathbf{W}_l\mathbf{X}_l-\widehat{\mathbf{W}}_l \mathbf{X}_l\|_\mathrm{F}^2 = \|\Delta \mathbf{W}_l \mathbf{X}_l\|_\mathrm{F}^2$, we can rewrite \eqref{eq:rqc_layer} as
\begin{talign}
\label{eq:rqc_quadratic}
\min_{\widehat{\mathbf{W}}_l\in\mathcal{Q}}
\;\; \mathrm{Tr}\ \!\Big(\Delta \mathbf{W}_l\, \mathbf{G}_l \,\Delta \mathbf{W}_l^\top\Big) ~~,
\end{talign}
where the regularized curvature matrix used in \texttt{SARQC-GBS} is $\mathbf{G}_l:=\mathbf{X}_l\mathbf{X}_l^\top+\lambda\mathbf{S}_l\mathbf{S}_l^\top$. Due to page limits, please refer to Appendix~\ref{alg:Implementation_Details} for detailed discussions on the choice of $\mathbf{S}_l$. Compared with GPTQ \citep{frantar2023gptq}, RQC and SARQC preserve the same layer-wise calibration structure while replacing the Gram matrix $\mathbf{X}_l \mathbf{X}_l^\top$ with the modified curvature $\mathbf{G}_l$. This allows RQC/SARQC to be optimized in a GPTQ-style sequential manner by using $\mathbf{G}_l$ and its inverse or Cholesky factor in place of $\mathbf{X}_l \mathbf{X}_l^\top$, while retaining the same compensation mechanism. We refer to this approach as \texttt{SARQC-GBS(Saliency)}. If set $\mathbf{S}_l:=I$, we denote this as \texttt{SARQC-GBS(Identity)}, which also recovers the RQC case. See Appendix~\ref{app:derivation_rqc_gbs} for detailed derivations and further discussions on its connection to GPTQ and see \Cref{alg:sarqc_gram} (in Appendix~\ref{appendix: pseudo algorithms}) and Appendix~\ref{alg:Implementation_Details} for more implementation details.

\section{Experimental Results}
In this section, we compare the proposed method, RQC and SARQC, with widely-adopted weight-only post-training quantization methods across a diverse range of LLMs and downstream tasks. Note that as SARQC covers RQC as discussed in \Cref{sec:method}, we will drop the notation of RQC in this section. We report experimental setup in \Cref{sec:exp setup}, and analyze results in \Cref{sec:result analysis}. See extra details in Appendix~\ref{app:pseudo algorithms and implementation details} and additional experimental results in Appendix~\ref{appx:extra experiments}.

\subsection{Experimental Setup}
\label{sec:exp setup}

\paragraph{Baselines}
We compare our method with representative layer-wise PTQ baselines based on linear quantization. Specifically, we consider \textsc{AWQ} \citep{lin2024awq} which uses statistics-guided scaling with grid search as the baseline for \texttt{SARQC-GS}. For the baselines of \texttt{SARQC-GBS}, we consider \textsc{GPTQ}~\citep{frantar2023gptq}, which performs layer-wise reconstruction using second-order Hessian information, and \textsc{GPTAQ}~\citep{li2025gptqv2}, a refined variant of \textsc{GPTQ}. Note that the GPTAQ baseline results are omitted for all MoE models due to a lack of MoE architectural support in implementation.

\vspace{-10pt}
\paragraph{Models}
We consider both \emph{dense LLMs} and \emph{Mixture-of-Experts (MoE) LLMs}. For dense models, we use \textsc{LLaMA2} (7B and 13B) \citep{llama2} and \textsc{LLaMA} (7B, 13B and 30B, reported in Appendix~\ref{appx:detail_of_llama1}). For MoE models, we use \textsc{DeepSeek-MoE-16B-Base} \citep{deepseek}, \textsc{Qwen3-MoE-30B} \citep{qwen3}, and \textsc{Mixtral-8x7B} \citep{mixtral}.

\vspace{-10pt}
\paragraph{Evaluations}
We evaluate quantization performance using two types of metrics: (i) \emph{perplexity} on the test split of \textsc{WikiText2} \citep{wikitext}; (ii) \emph{zero-shot accuracy} on a suite of commonsense reasoning and knowledge benchmarks including: \textsc{PIQA}, \textsc{HellaSwag} \citep{HellaSwag}, \textsc{MMLU} \citep{mmlu}, \textsc{HumanEval} \citep{humaneval}, \textsc{BoolQ} \citep{boolq}, \textsc{ARC-Challenge}, \textsc{ARC-Easy} \citep{arce}, and \textsc{WinoGrande} \citep{winogrande}.

\vspace{-10pt}
\paragraph{Implementation}
We investigate weight-only post-training quantization at low bit widths, ranging from  $4$-bit, $3$-bit, to $2$-bit. We apply group-wise quantization to all linear layers. We use a group size of $64$ for MoE models and $128$ for dense models. Unless otherwise stated in the ablation studies, all methods use the same fixed set of $128$ calibration samples from the training split of \textsc{WikiText2} \citep{wikitext} to ensure a controlled comparison. We do not introduce additional transformations or reordering of weights or activations, so that the results reflect the intrinsic performance of each method. For hyperparameter selection, see Appendix~\ref{alg:Implementation_Details} for more details. All experiments are conducted on NVIDIA A100 80GB GPUs.

\vspace{-10pt}
\begin{table*}[htbp]
\centering
\caption{Perplexity of quantized models under different bit-widths.} 
\label{tab:ppl_all_wxa16}
\setlength{\tabcolsep}{4.8pt}
\renewcommand{\arraystretch}{1.08}
\definecolor{rqcrow}{RGB}{244,220,217}

\resizebox{0.98\textwidth}{!}{%
\begin{tabular}{l l c c c c c c c c}
\toprule
Quant. & Method & LLaMA-7B & LLaMA-13B & LLaMA-30B & LLaMA2-7B & LLaMA2-13B & DeepSeek-MoE-16B & Qwen3-MoE-30B & Mixtral-8x7B \\
\midrule

\multirow{1}{*}{FP16/BF16}
& FP16/BF16
& 5.68 & 5.09 & 4.10 & 5.47 & 4.88 & 6.51 & 6.09 & 3.84 \\
\midrule

\multirow{4}{*}{W2A16}
& GPTQ
& 1378.39 & 237.32 & 27.33 & 1889.54 & 792.32 & 44.51 & 38.03 & 1079.31 \\
& GPTAQ
& 467.79 & 95.52 & 32.86 & 358.46 & 176.53 & -- & -- & -- \\
& \cellcolor[RGB]{234, 239, 247}SARQC-GBS(Identity)
& \cellcolor[RGB]{234, 239, 247}212.32 & \cellcolor[RGB]{234, 239, 247}21.39 & \cellcolor[RGB]{234, 239, 247}24.99 & \cellcolor[RGB]{234, 239, 247}396.52 & \cellcolor[RGB]{234, 239, 247}190.36 & \cellcolor[RGB]{234, 239, 247}27.29 & \cellcolor[RGB]{234, 239, 247}17.41 & \cellcolor[RGB]{234, 239, 247}34.02 \\
& \cellcolor[RGB]{210, 222, 239}SARQC-GBS(Saliency)
& \cellcolor[RGB]{210, 222, 239}\textbf{62.64} & \cellcolor[RGB]{210, 222, 239}\textbf{19.89} & \cellcolor[RGB]{210, 222, 239}\textbf{14.70} & \cellcolor[RGB]{210, 222, 239}\textbf{225.01} & \cellcolor[RGB]{210, 222, 239}\textbf{117.67} & \cellcolor[RGB]{210, 222, 239}\textbf{22.83} & \cellcolor[RGB]{210, 222, 239}\textbf{16.69} & \cellcolor[RGB]{210, 222, 239}\textbf{31.48} \\
\midrule

\multirow{4}{*}{W3A16}
& GPTQ
& 7.03 & 5.77 & 4.99 & 6.80 & 6.71 & 7.21 & 9.66 & 4.86 \\
& GPTAQ
& \textbf{6.68} & 5.82 & 4.96 & 6.65 & 6.65 & -- & -- & -- \\
& \cellcolor[RGB]{234, 239, 247}SARQC-GBS(Identity)
& \cellcolor[RGB]{234, 239, 247}6.95 & \cellcolor[RGB]{234, 239, 247}\textbf{5.65} & \cellcolor[RGB]{234, 239, 247}4.77 & \cellcolor[RGB]{234, 239, 247}7.49 & \cellcolor[RGB]{234, 239, 247}\textbf{6.36} & \cellcolor[RGB]{234, 239, 247}7.15 & \cellcolor[RGB]{234, 239, 247}6.91 & \cellcolor[RGB]{234, 239, 247}4.72 \\
& \cellcolor[RGB]{210, 222, 239}SARQC-GBS(Saliency)
& \cellcolor[RGB]{210, 222, 239}6.70 & \cellcolor[RGB]{210, 222, 239}5.71 & \cellcolor[RGB]{210, 222, 239}\textbf{4.72} & \cellcolor[RGB]{210, 222, 239}\textbf{6.63} & \cellcolor[RGB]{210, 222, 239}6.40 & \cellcolor[RGB]{210, 222, 239}\textbf{7.08} & \cellcolor[RGB]{210, 222, 239}\textbf{6.88} & \cellcolor[RGB]{210, 222, 239}\textbf{4.70} \\
\midrule

\multirow{7}{*}{W4A16}
& AWQ
& 5.81 & 5.20 & 4.21 & 5.62 & 4.97 & 7.42 & 6.80 & 4.03 \\
& GPTQ
& 6.73 & 5.27 & 4.25 & 5.72 & 5.19 & 6.63 & 8.86 & 4.08 \\
& GPTAQ
& 5.95 & 5.24 & 4.25 & 5.68 & 10.63 & -- & -- & -- \\
& \cellcolor[RGB]{234, 239, 247}SARQC-GS(Identity)
& \cellcolor[RGB]{234, 239, 247}5.80 & \cellcolor[RGB]{234, 239, 247}5.18 & \cellcolor[RGB]{234, 239, 247}4.21 & \cellcolor[RGB]{234, 239, 247}5.61 & \cellcolor[RGB]{234, 239, 247}4.96 & \cellcolor[RGB]{234, 239, 247}7.17 & \cellcolor[RGB]{234, 239, 247}6.75 & \cellcolor[RGB]{234, 239, 247}4.02 \\
& \cellcolor[RGB]{210, 222, 239}SARQC-GS(Saliency)
& \cellcolor[RGB]{210, 222, 239}\textbf{5.77} & \cellcolor[RGB]{210, 222, 239}\textbf{5.17} & \cellcolor[RGB]{210, 222, 239}\textbf{4.20} & \cellcolor[RGB]{210, 222, 239}\textbf{5.60} & \cellcolor[RGB]{210, 222, 239}\textbf{4.96} & \cellcolor[RGB]{210, 222, 239}6.93 & \cellcolor[RGB]{210, 222, 239}6.71 & \cellcolor[RGB]{210, 222, 239}\textbf{4.00} \\
& \cellcolor[RGB]{234, 239, 247}SARQC-GBS(Identity)
& \cellcolor[RGB]{234, 239, 247}6.01 & \cellcolor[RGB]{234, 239, 247}5.23 & \cellcolor[RGB]{234, 239, 247}4.22 & \cellcolor[RGB]{234, 239, 247}5.97 & \cellcolor[RGB]{234, 239, 247}5.16 & \cellcolor[RGB]{234, 239, 247}6.65 & \cellcolor[RGB]{234, 239, 247}6.30 & \cellcolor[RGB]{234, 239, 247}4.06 \\
& \cellcolor[RGB]{210, 222, 239}SARQC-GBS(Saliency)
& \cellcolor[RGB]{210, 222, 239}5.88 & \cellcolor[RGB]{210, 222, 239}5.22 & \cellcolor[RGB]{210, 222, 239}4.21 & \cellcolor[RGB]{210, 222, 239}5.65 & \cellcolor[RGB]{210, 222, 239}5.05 & \cellcolor[RGB]{210, 222, 239}\textbf{6.62} & \cellcolor[RGB]{210, 222, 239}\textbf{6.27} & \cellcolor[RGB]{210, 222, 239}4.02 \\
\bottomrule
\end{tabular}%
}
\end{table*}

\subsection{Results Analysis}
\label{sec:result analysis}

\emph{(1) When does vanilla reconstruction error-based calibration become unreliable?}
Vanilla calibration based solely on reconstruction error becomes increasingly unreliable in challenging quantization regimes, especially under more aggressive low-bit settings such as W2A16 and W3A16. As shown in \Cref{tab:ppl_all_wxa16}, this failure can manifest as a dramatic increase in perplexity, indicating that preserving calibration-set reconstruction alone is insufficient to maintain the original generation behavior of LLMs. Even under the milder W4A16 setting, the same tendency is reflected in downstream zero-shot performance in \Cref{tab:rqc_zeroshot_acc_w4a16_others}, where reconstruction-only calibration often leads to noticeable accuracy degradation relative to the original LLMs. This issue becomes even more pronounced when the calibration set is small. As illustrated in \Cref{fig:main_and_trend}\emph{(b)}, GPTQ degrades substantially under both W3A16 and W2A16 when only limited calibration data are available. Taken together, these results suggest that standard calibration objectives are vulnerable to limited or unrepresentative calibration data.

\vspace{-8pt}
\begin{table*}[h]
\centering
\caption{Zero-shot accuracy (\%) on multiple benchmarks under W4A16.}
\label{tab:rqc_zeroshot_acc_w4a16_others}
\setlength{\tabcolsep}{5.0pt}
\renewcommand{\arraystretch}{1.06}
\definecolor{rqcrow}{RGB}{244,220,217}

\resizebox{1.\textwidth}{!}{%
\begin{tabular}{l l c c c c c c c c c}
\toprule
Model & Method & PIQA & HellaSwag & MMLU & HumanEval & BoolQ & WinoGrande & ARC-E & ARC-C & Avg. \\
\midrule

\multirow{8}{*}{LLaMA2-7B}
& FP16      & 78.13 & 57.12 & 41.80 & 12.80 & 79.27 & 69.46 & 75.46 & 43.00 & 57.13 \\
& AWQ       & 77.58 & 56.58 & 40.43 & 11.59 & 79.94 & 68.19 & 74.96 & 41.72 & 56.37 \\
& GPTQ      & 77.48 & 56.07 & 38.40 & 12.20 & 78.10 & 69.77 & 73.86 & 40.70 & 55.82 \\
& GPTAQ     & 76.88 & 56.03 & 39.14 & 10.37 & 76.97 & 68.67 & 74.16 & 40.61 & 55.35 \\
& \cellcolor[RGB]{234, 239, 247}SARQC-GS(Identity)  & \cellcolor[RGB]{234, 239, 247}77.80 & \cellcolor[RGB]{234, 239, 247}56.36 & \cellcolor[RGB]{234, 239, 247}42.02 & \cellcolor[RGB]{234, 239, 247}15.24 & \cellcolor[RGB]{234, 239, 247}79.82 & \cellcolor[RGB]{234, 239, 247}68.11 & \cellcolor[RGB]{234, 239, 247}74.33 & \cellcolor[RGB]{234, 239, 247}41.81 & \cellcolor[RGB]{234, 239, 247}56.94 \\
& \cellcolor[RGB]{210, 222, 239}SARQC-GS(Saliency)  & \cellcolor[RGB]{210, 222, 239}78.02 & \cellcolor[RGB]{210, 222, 239}56.33 & \cellcolor[RGB]{210, 222, 239}40.89 & \cellcolor[RGB]{210, 222, 239}16.46 & \cellcolor[RGB]{210, 222, 239}80.15 & \cellcolor[RGB]{210, 222, 239}69.22 & \cellcolor[RGB]{210, 222, 239}75.76 & \cellcolor[RGB]{210, 222, 239}42.32 & \cellcolor[RGB]{210, 222, 239}\textbf{57.39} \\
& \cellcolor[RGB]{234, 239, 247}SARQC-GBS(Identity) & \cellcolor[RGB]{234, 239, 247}77.58 & \cellcolor[RGB]{234, 239, 247}56.16 & \cellcolor[RGB]{234, 239, 247}39.21 & \cellcolor[RGB]{234, 239, 247}14.63 & \cellcolor[RGB]{234, 239, 247}76.70 & \cellcolor[RGB]{234, 239, 247}69.77 & \cellcolor[RGB]{234, 239, 247}75.63 & \cellcolor[RGB]{234, 239, 247}42.24 & \cellcolor[RGB]{234, 239, 247}56.49 \\
& \cellcolor[RGB]{210, 222, 239}SARQC-GBS(Saliency) & \cellcolor[RGB]{210, 222, 239}78.18 & \cellcolor[RGB]{210, 222, 239}56.11 & \cellcolor[RGB]{210, 222, 239}39.90 & \cellcolor[RGB]{210, 222, 239}14.63 & \cellcolor[RGB]{210, 222, 239}79.08 & \cellcolor[RGB]{210, 222, 239}70.32 & \cellcolor[RGB]{210, 222, 239}74.71 & \cellcolor[RGB]{210, 222, 239}43.52 & \cellcolor[RGB]{210, 222, 239}57.06 \\
\midrule

\multirow{8}{*}{LLaMA2-13B}
& FP16      & 79.49 & 60.21 & 52.41 & 18.90 & 82.11 & 72.53 & 78.96 & 47.35 & 61.49 \\
& AWQ       & 78.35 & 59.74 & 51.44 & 14.63 & 81.99 & 72.45 & 78.79 & 45.39 & 60.35 \\
& GPTQ      & 77.58 & 58.14 & 46.25 & 14.02 & 78.96 & 69.46 & 76.81 & 44.97 & 58.27 \\
& GPTAQ     & 77.64 & 58.63 & 46.99 & 14.02 & 79.30 & 71.11 & 77.27 & 45.39 & 58.79  \\
& \cellcolor[RGB]{234, 239, 247}SARQC-GS(Identity)  & \cellcolor[RGB]{234, 239, 247}78.56 & \cellcolor[RGB]{234, 239, 247}60.07 & \cellcolor[RGB]{234, 239, 247}51.64 & \cellcolor[RGB]{234, 239, 247}16.46 & \cellcolor[RGB]{234, 239, 247}81.80 & \cellcolor[RGB]{234, 239, 247}72.77 & \cellcolor[RGB]{234, 239, 247}78.96 & \cellcolor[RGB]{234, 239, 247}46.25 & \cellcolor[RGB]{234, 239, 247}60.81 \\
& \cellcolor[RGB]{210, 222, 239}SARQC-GS(Saliency)  & \cellcolor[RGB]{210, 222, 239}79.11 & \cellcolor[RGB]{210, 222, 239}59.94 & \cellcolor[RGB]{210, 222, 239}51.65 & \cellcolor[RGB]{210, 222, 239}16.46 & \cellcolor[RGB]{210, 222, 239}81.99 & \cellcolor[RGB]{210, 222, 239}73.64 & \cellcolor[RGB]{210, 222, 239}79.17 & \cellcolor[RGB]{210, 222, 239}45.99 & \cellcolor[RGB]{210, 222, 239}\textbf{60.99} \\
& \cellcolor[RGB]{234, 239, 247}SARQC-GBS(Identity) & \cellcolor[RGB]{234, 239, 247}78.02 & \cellcolor[RGB]{234, 239, 247}58.11 & \cellcolor[RGB]{234, 239, 247}46.19 & \cellcolor[RGB]{234, 239, 247}16.46 & \cellcolor[RGB]{234, 239, 247}81.25 & \cellcolor[RGB]{234, 239, 247}70.48 & \cellcolor[RGB]{234, 239, 247}77.53 & \cellcolor[RGB]{234, 239, 247}45.90 & \cellcolor[RGB]{234, 239, 247}59.24 \\
& \cellcolor[RGB]{210, 222, 239}SARQC-GBS(Saliency) & \cellcolor[RGB]{210, 222, 239}78.13 & \cellcolor[RGB]{210, 222, 239}59.30 & \cellcolor[RGB]{210, 222, 239}48.13 & \cellcolor[RGB]{210, 222, 239}15.24 & \cellcolor[RGB]{210, 222, 239}80.34 & \cellcolor[RGB]{210, 222, 239}71.90 & \cellcolor[RGB]{210, 222, 239}78.24 & \cellcolor[RGB]{210, 222, 239}46.84 & \cellcolor[RGB]{210, 222, 239}59.77 \\
\midrule

\multirow{7}{*}{DeepSeek-MoE-16B}
& BF16      & 78.73 & 58.10 & 38.21 & 27.44 & 74.07 & 70.01 & 74.87 & 43.94 & 58.17 \\
& AWQ       & 78.24 & 57.06 & 34.38 & 20.12 & 61.41 & 67.25 & 70.96 & 40.44 & 53.73 \\
& GPTQ      & 78.67 & 57.06 & 37.55 & 24.39 & 73.43 & 69.38 & 73.70 & 42.83 & 57.13 \\
& \cellcolor[RGB]{234, 239, 247}SARQC-GS(Identity)  & \cellcolor[RGB]{234, 239, 247}78.56 & \cellcolor[RGB]{234, 239, 247}57.66 & \cellcolor[RGB]{234, 239, 247}36.55 & \cellcolor[RGB]{234, 239, 247}25.00 & \cellcolor[RGB]{234, 239, 247}66.51 & \cellcolor[RGB]{234, 239, 247}67.96 & \cellcolor[RGB]{234, 239, 247}71.51 & \cellcolor[RGB]{234, 239, 247}41.55 & \cellcolor[RGB]{234, 239, 247}55.66 \\
& \cellcolor[RGB]{210, 222, 239}SARQC-GS(Saliency)  & \cellcolor[RGB]{210, 222, 239}78.94 & \cellcolor[RGB]{210, 222, 239}58.33 & \cellcolor[RGB]{210, 222, 239}36.45 & \cellcolor[RGB]{210, 222, 239}23.78 & \cellcolor[RGB]{210, 222, 239}67.68 & \cellcolor[RGB]{210, 222, 239}68.90 & \cellcolor[RGB]{210, 222, 239}72.90 & \cellcolor[RGB]{210, 222, 239}42.58 & \cellcolor[RGB]{210, 222, 239}56.20 \\
& \cellcolor[RGB]{234, 239, 247}SARQC-GBS(Identity) & \cellcolor[RGB]{234, 239, 247}78.24 & \cellcolor[RGB]{234, 239, 247}57.59 & \cellcolor[RGB]{234, 239, 247}36.23 & \cellcolor[RGB]{234, 239, 247}23.78 & \cellcolor[RGB]{234, 239, 247}73.52 & \cellcolor[RGB]{234, 239, 247}70.24 & \cellcolor[RGB]{234, 239, 247}74.54 & \cellcolor[RGB]{234, 239, 247}43.34 & \cellcolor[RGB]{234, 239, 247}57.19 \\
& \cellcolor[RGB]{210, 222, 239}SARQC-GBS(Saliency) & \cellcolor[RGB]{210, 222, 239}78.89 & \cellcolor[RGB]{210, 222, 239}57.54 & \cellcolor[RGB]{210, 222, 239}37.32 & \cellcolor[RGB]{210, 222, 239}25.00 & \cellcolor[RGB]{210, 222, 239}74.04 & \cellcolor[RGB]{210, 222, 239}69.77 & \cellcolor[RGB]{210, 222, 239}74.33 & \cellcolor[RGB]{210, 222, 239}43.43 & \cellcolor[RGB]{210, 222, 239}\textbf{57.54} \\
\midrule

\multirow{7}{*}{Qwen3-MoE-30B}
& BF16      & 79.82 & 62.35 & 78.76 & 50.61 & 80.61 & 72.14 & 80.05 & 54.01 & 69.79 \\
& AWQ       & 79.54 & 60.05 & 75.73 & 50.00 & 79.63 & 70.56 & 79.50 & 49.74 & 68.09 \\
& GPTQ      & 78.62 & 58.74 & 76.99 & 50.00 & 79.69 & 69.93 & 78.24 & 51.37 & 67.95 \\
& \cellcolor[RGB]{234, 239, 247}SARQC-GS(Identity)  & \cellcolor[RGB]{234, 239, 247}78.89 & \cellcolor[RGB]{234, 239, 247}60.19 & \cellcolor[RGB]{234, 239, 247}75.64 & \cellcolor[RGB]{234, 239, 247}51.22 & \cellcolor[RGB]{234, 239, 247}80.49 & \cellcolor[RGB]{234, 239, 247}70.80 & \cellcolor[RGB]{234, 239, 247}79.63 & \cellcolor[RGB]{234, 239, 247}48.46 & \cellcolor[RGB]{234, 239, 247}68.16 \\
& \cellcolor[RGB]{210, 222, 239}SARQC-GS(Saliency)  & \cellcolor[RGB]{210, 222, 239}79.76 & \cellcolor[RGB]{210, 222, 239}60.43 & \cellcolor[RGB]{210, 222, 239}75.59 & \cellcolor[RGB]{210, 222, 239}52.44 & \cellcolor[RGB]{210, 222, 239}80.73 & \cellcolor[RGB]{210, 222, 239}71.11 & \cellcolor[RGB]{210, 222, 239}80.09 & \cellcolor[RGB]{210, 222, 239}50.34 & \cellcolor[RGB]{210, 222, 239}68.81 \\
& \cellcolor[RGB]{234, 239, 247}SARQC-GBS(Identity) & \cellcolor[RGB]{234, 239, 247}80.85 & \cellcolor[RGB]{234, 239, 247}60.31 & \cellcolor[RGB]{234, 239, 247}77.46 & \cellcolor[RGB]{234, 239, 247}53.66 & \cellcolor[RGB]{234, 239, 247}79.02 & \cellcolor[RGB]{234, 239, 247}72.77 & \cellcolor[RGB]{234, 239, 247}80.39 & \cellcolor[RGB]{234, 239, 247}53.07 & \cellcolor[RGB]{234, 239, 247}69.69 \\
& \cellcolor[RGB]{210, 222, 239}SARQC-GBS(Saliency) & \cellcolor[RGB]{210, 222, 239}81.18 & \cellcolor[RGB]{210, 222, 239}61.67 & \cellcolor[RGB]{210, 222, 239}77.01 & \cellcolor[RGB]{210, 222, 239}58.54 & \cellcolor[RGB]{210, 222, 239}81.10 & \cellcolor[RGB]{210, 222, 239}73.09 & \cellcolor[RGB]{210, 222, 239}81.10 & \cellcolor[RGB]{210, 222, 239}52.65 & \cellcolor[RGB]{210, 222, 239}\textbf{70.79} \\
\midrule

\multirow{7}{*}{Mixtral-8x7B}
& BF16      & 82.54 & 65.10 & 68.12 & 39.63 & 85.90 & 77.19 & 83.71 & 56.74 & 69.87 \\
& AWQ       & 81.99 & 64.21 & 67.32 & 35.98 & 84.22 & 76.09 & 83.42 & 56.48 & 68.71 \\
& GPTQ      & 80.96 & 63.68 & 66.05 & 35.98 & 85.20 & 76.09 & 82.32 & 55.46 & 68.22 \\
& \cellcolor[RGB]{234, 239, 247}SARQC-GS(Identity)  & \cellcolor[RGB]{234, 239, 247}82.15 & \cellcolor[RGB]{234, 239, 247}64.28 & \cellcolor[RGB]{234, 239, 247}66.93 & \cellcolor[RGB]{234, 239, 247}34.15 & \cellcolor[RGB]{234, 239, 247}84.10 & \cellcolor[RGB]{234, 239, 247}76.80 & \cellcolor[RGB]{234, 239, 247}83.21 & \cellcolor[RGB]{234, 239, 247}56.91 & \cellcolor[RGB]{234, 239, 247}68.57 \\
& \cellcolor[RGB]{210, 222, 239}SARQC-GS(Saliency)  & \cellcolor[RGB]{210, 222, 239}82.26 & \cellcolor[RGB]{210, 222, 239}64.33 & \cellcolor[RGB]{210, 222, 239}67.51 & \cellcolor[RGB]{210, 222, 239}37.20 & \cellcolor[RGB]{210, 222, 239}84.43 & \cellcolor[RGB]{210, 222, 239}76.64 & \cellcolor[RGB]{210, 222, 239}83.29 & \cellcolor[RGB]{210, 222, 239}57.59 & \cellcolor[RGB]{210, 222, 239}\textbf{69.16} \\
& \cellcolor[RGB]{234, 239, 247}SARQC-GBS(Identity) & \cellcolor[RGB]{234, 239, 247}81.39 & \cellcolor[RGB]{234, 239, 247}64.06 & \cellcolor[RGB]{234, 239, 247}66.21 & \cellcolor[RGB]{234, 239, 247}35.37 & \cellcolor[RGB]{234, 239, 247}85.69 & \cellcolor[RGB]{234, 239, 247}76.32 & \cellcolor[RGB]{234, 239, 247}82.49 & \cellcolor[RGB]{234, 239, 247}54.69 & \cellcolor[RGB]{234, 239, 247}68.28 \\
& \cellcolor[RGB]{210, 222, 239}SARQC-GBS(Saliency) & \cellcolor[RGB]{210, 222, 239}82.05 & \cellcolor[RGB]{210, 222, 239}64.42 & \cellcolor[RGB]{210, 222, 239}66.83 & \cellcolor[RGB]{210, 222, 239}35.98 & \cellcolor[RGB]{210, 222, 239}85.57 & \cellcolor[RGB]{210, 222, 239}77.35 & \cellcolor[RGB]{210, 222, 239}82.87 & \cellcolor[RGB]{210, 222, 239}56.06 & \cellcolor[RGB]{210, 222, 239}68.89 \\
\bottomrule
\end{tabular}
}
\end{table*}
\vspace{-8pt}

\emph{(2) Does SARQC improve robustness and overall performance, especially when vanilla calibration degrades?}
\textbf{Yes.} SARQC consistently delivers stronger robustness and better task performance precisely in the regimes where vanilla calibration becomes unstable. In terms of perplexity, SARQC remains much more stable than standard calibration methods across both dense and MoE LLMs, particularly in the hardest low-bit settings. For example, under W2A16 in \Cref{tab:ppl_all_wxa16}, SARQC reduces the perplexity of LLaMA2-13B to $117.67$, compared with $792.32$ for GPTQ and $176.53$ for GPTAQ. A similar pattern also holds for MoE models. On Qwen3-MoE-30B, SARQC reduces perplexity to $16.69$, compared with $38.03$ for GPTQ. This robustness further translates into performance gains on downstream tasks. Under W4A16, SARQC achieves higher average zero-shot accuracy than the baselines on both dense and MoE models in \Cref{tab:rqc_zeroshot_acc_w4a16_others}. For instance, \texttt{SARQC-GS(Saliency)} attains the best average accuracy on both LLaMA2-7B and LLaMA2-13B, and \texttt{SARQC-GBS(Saliency)} achieves the best average accuracy on both DeepSeek-MoE-16B and Qwen3-MoE-30B. Similar performance is also observed under W3A16, as reported in \Cref{tab:rqc_zeroshot_acc_w3a16_others}. Moreover, the ablation study on the calibration sample size in \Cref{fig:main_and_trend}\emph{(b)} shows that \texttt{SARQC-GBS} consistently outperforms GPTQ under both W3A16 and W2A16, with a larger gap at smaller sample sizes. This further confirms that SARQC performs well not only under low-bit scenarios but also under calibration data scarcity.

\begin{figure}[t]
    \centering
    \begin{subfigure}[t]{0.68\linewidth}
        \centering
        \includegraphics[width=\linewidth]{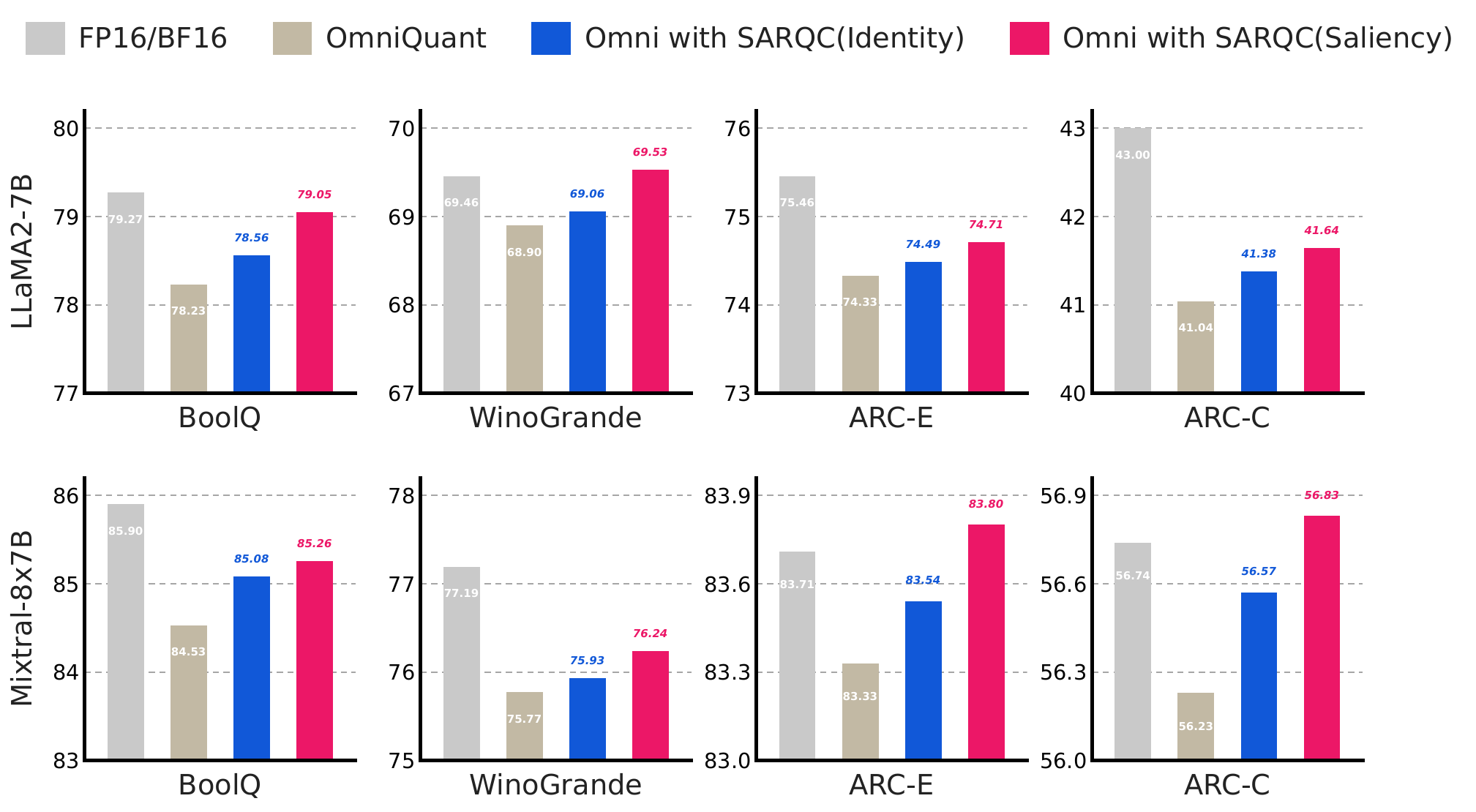}
                \caption{}
        \label{fig:main_and_trend_a}
    \end{subfigure}\hfill
    \begin{subfigure}[t]{0.31\linewidth}
        \centering
        \includegraphics[width=\linewidth]{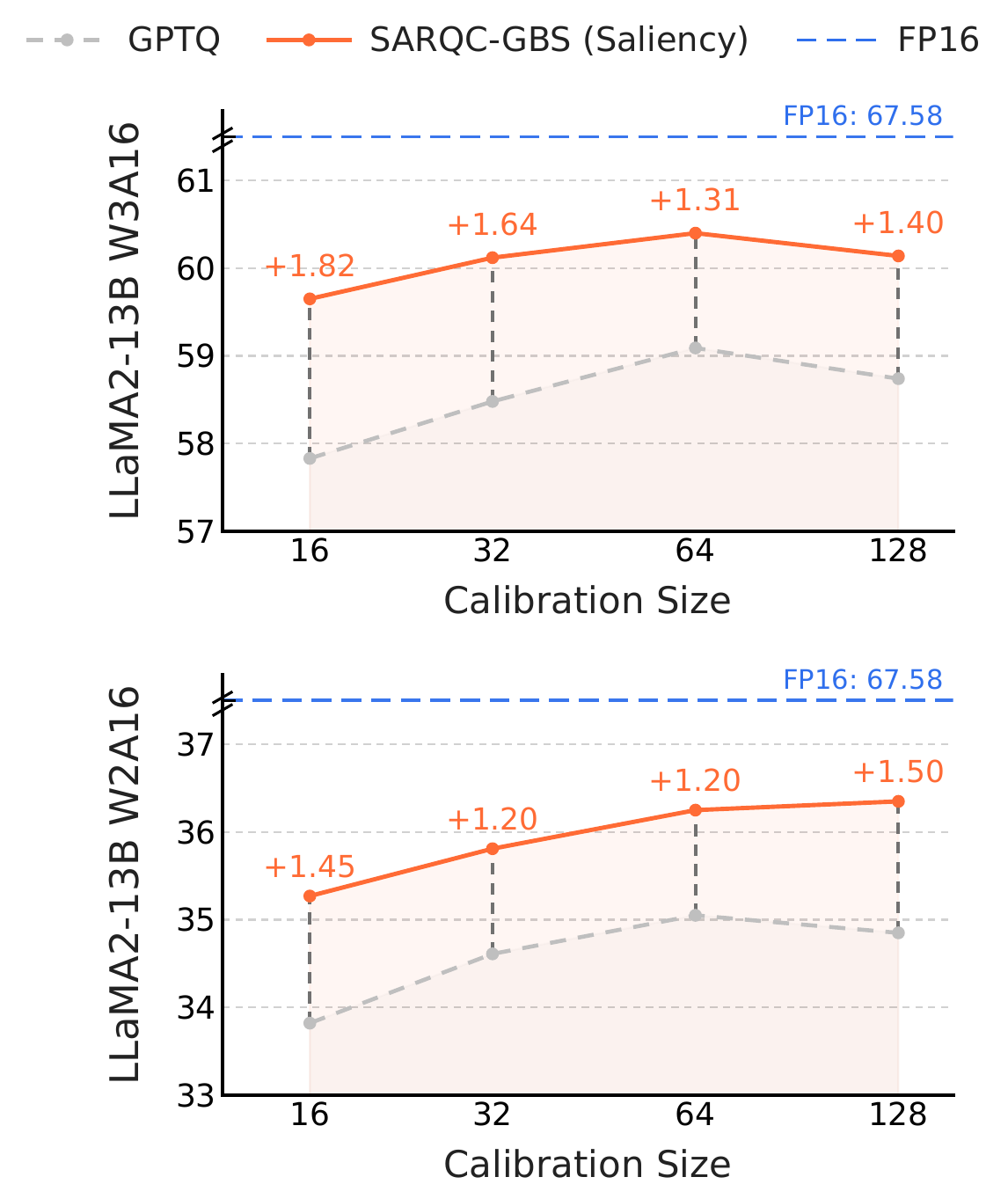}
        \caption{}
        \label{fig:main_and_trend_b}
    \end{subfigure}
\caption{Ablation study. \emph{(a) Extension to OmniQuant}: the y-axis shows downstream task accuracy, and the bars compare FP16/BF16, OmniQuant, and OmniQuant augmented with SARQC on the reported tasks for LLaMA2-7B and Mixtral-8x7B under W4A16. \emph{(b) Effect of calibration size}: the x-axis shows the number of calibration samples, and the y-axis shows average downstream accuracy. The vertical dashed segments indicate the accuracy gap between GPTQ and \texttt{SARQC-GBS}.}
    \label{fig:main_and_trend}
    \vspace{-18pt}
\end{figure}

\emph{(3) Is saliency-aware regularization more effective than its non-saliency-aware counterpart?}
\textbf{Overall, yes.} The saliency-aware variant consistently performs better in practice than the corresponding identity-based variant. From \Cref{tab:rqc_zeroshot_acc_w4a16_others}, the saliency-aware regularizer yields higher average accuracy than its identity-based counterpart on all five evaluated models for the grid-search solver, and on four out of five models for the Gram-based solver. The same trend is also supported by the perplexity results in \Cref{tab:ppl_all_wxa16}, where the saliency-aware variants are generally more competitive, especially in the more difficult low-bit settings. These results indicate that explicitly accounting for weight saliency helps preserve information better during calibration and leads to better performance.

\emph{(4) Can SARQC be extended further?}
\textbf{Yes.} SARQC is not tied to a particular solver or calibration pipeline, but instead acts as a general regularization principle that can be incorporated into other post-training quantization frameworks. This is evidenced by the results in \Cref{fig:main_and_trend}\emph{(a)}, where integrating SARQC into OmniQuant \citep{omniquant} consistently improves performance across different architectures and downstream tasks. On both LLaMA2-7B and Mixtral-8x7B, SARQC consistently outperforms the original OmniQuant baseline, and the saliency-aware variant achieves the best overall performance in all cases. These findings demonstrate that SARQC can be widely generalized as a practical plug-in module for other PTQ methods without introducing additional computational overhead during inference; see also \Cref{tab:speed_comparison} (in Appendix~\ref{appendix: speedup}) for the ablation study on the inference speedup.

\section{Conclusion}
In this work, we propose RQC and its extension SARQC, a unified framework for weight-only PTQ of LLMs that is simple, effective, and well-motivated. RQC and SARQC augment standard quantization calibration objectives with a penalty that explicitly balances reconstruction fidelity and weight drift from the original floating-point weights, arising from the generalization-risk analysis and constrained optimization formulation. Two practical optimization algorithms are proposed: one built on grid-search over scaling factors, and the other built on Gram-based solvers. RQC and SARQC are easy to implement, incur no extra computational overhead during inference, and can be seamlessly integrated into existing PTQ pipelines. Extensive experimental results show the effectiveness of the proposed framework.

This work also has several limitations. For instance, we do not evaluate RQC/SARQC on extremely large LLMs due to limited computational resources. It is possible the current design of saliency factors is not optimal as they are inspired from previous works. There are a number of possible extensions. Firstly, it would be valuable to evaluate the performance of RQC/SARQC on extremely large LLMs. Secondly, it may be worthwhile to extend our method to account for error propagation in the PTQ calibration process, since both the proposed method and the theoretical analysis are conducted in a layerwise manner. Thirdly, it would be interesting to investigate how well the method generalizes to weight–activation quantization scenarios.

\bibliographystyle{plain}
\bibliography{example_paper}

\newpage
\appendix
\onecolumn

\begin{center}
    {\Large \textbf{Appendix}} 
\end{center}

\section*{Appendix Overview}

This appendix offers a complete, self-contained elaboration of the theoretical underpinnings, proofs, implementation details, algorithms, and supplementary experimental results for the proposed RQC/SARQC framework.

Appendix~\ref{app: Notation} summarizes the main notation used throughout the paper and appendix.

Appendix~\ref{appendix: More Comprehensive Literature Review} presents a more comprehensive literature review on post-training quantization for LLMs and constrained optimization, complementing the discussion in \Cref{sec:background}.

Appendix~\ref{app:technical_def} provides additional preliminaries on quantization, most relevant scaling-based and Gram-based PTQ methods, and constrained optimization notions.

Appendix~\ref{appendix: Additional Theoretical Results} presents additional theoretical results. In particular, Appendix~\ref{app:derivation_rqc_gbs} derives the quadratic form of the SARQC objective and shows how it leads to a GPTQ-style row-wise compensation rule with a regularized curvature matrix. Appendix~\ref{append: proof of generalization risk} gives the proof of the generalization-risk bound in \Cref{thm:generalization risk}. Appendix~\ref{append:cor_constraint_penalty} discusses the relationship between hard drift constraints and quadratic penalty formulations for supported quantized solutions.

Appendix~\ref{app:pseudo algorithms and implementation details} reports the pseudo-algorithms and implementation details of SARQC. We describe both SARQC-GS, which performs regularized model selection over scaling factors, and SARQC-GBS, which modifies the Gram-based sequential quantization curvature using saliency-aware regularization. We also describe how SARQC is incorporated into OmniQuant in our ablation study.

Appendix~\ref{appx:extra experiments} provides supplementary experimental results. Appendix~\ref{appx:detail_of_figure1} gives the detailed results underlying \Cref{fig:illustration}. Appendix~\ref{app:3d_weight_diff} visualizes weight drift under different regularization strengths. The remaining subsections report additional results under W3A16, results on the LLaMA model family, sensitivity analyses with respect to calibration size and calibration corpus, and practical inference speedup comparisons.

\newpage
\section{Notation}
\label{app: Notation}
\Cref{tab:notation} summarizes the main notation used throughout the paper and appendix.

\begin{table}[hbtp]
\centering
\caption{Summary of main notation.}
\label{tab:notation}
\setlength{\tabcolsep}{6pt}
\renewcommand{\arraystretch}{1.12}
\resizebox{0.98\linewidth}{!}{%
\begin{tabular}{ll}
\toprule
Notation & Description \\
\midrule

$\mathbb{R}$ 
& Set of real numbers. \\

$\mathbb{N}^{+}$ 
& Set of positive integers. \\

$p_X$
& Downstream data distribution of the random input $X$. \\

$\mathcal{Q}$ 
& Feasible set of quantized--dequantized weights induced by a given quantization scheme. \\

$\mathcal{Q}_R$
& Restricted finite quantization class
$\{\widehat{\mathbf W}_l\in\mathcal Q:\|\widehat{\mathbf W}_l-\mathbf W_l\|_\mathrm{F}\le R\}$. \\

$\mathbf{W}_l \in \mathbb{R}^{d_{\mathrm{out}}\times d_{\mathrm{in}}}$ 
& Float-point (FP) weight matrix of the $l$-th linear layer (FP16/BF16). \\

$\widehat{\mathbf{W}}_l \in \mathcal{Q}$ 
& Quantized--dequantized weight matrix of the $l$-th linear layer. \\

$\Delta \mathbf{W}_l := \widehat{\mathbf{W}}_l-\mathbf{W}_l$ 
& Weight perturbation induced by quantization. \\
$\delta_{\mathbf{W}_l}$ 
& Dirac measure supported at the point $\mathbf{W}_l$ in the weight space. \\

$\delta_{\widehat{\mathbf{W}}_l}$ 
& Dirac measure supported at the point $\widehat{\mathbf{W}}_l$ in the weight space. \\
$\mathbf{X}_l \in \mathbb{R}^{d_{\mathrm{in}}\times n}$ 
& Calibration input matrix of the $l$-th layer, where $n$ is the number of calibration tokens or samples. \\

$X$
& Random input drawn from the downstream data distribution $p_X$. \\

$Q(\cdot)$ 
& Quantization operator together with the corresponding dequantization map. \\

$\eta$ 
& Quantization step size. \\

$\widetilde{\mathbf{S}}_l(\alpha)$ 
& Channel-wise scaling matrix used to generate candidate quantized weights in scaling-based PTQ methods. \\

$\tilde{s}_l^{(j)}(\alpha)$ 
& Channel-wise scaling factor for channel $j$. \\

$\alpha$ 
& Scaling hyperparameter in grid-search-based PTQ methods such as AWQ and SARQC-GS. \\

$\mathbf{S}_l=\mathrm{diag}(s_l)$ 
& Diagonal saliency matrix used in the SARQC regularization term. \\

$s_l^{(j)}$ 
& Saliency weight of channel $j$. \\

$s_l^{(j)}(\gamma)$ 
& Saliency weight of channel $j$, parameterized by $\gamma$ in SARQC-GBS. \\

$\gamma$ 
& Hyperparameter controlling the balance between activation and weight statistics in the saliency profile. \\

$\lambda$ 
& Regularization coefficient balancing reconstruction fidelity and weight discrepancy. \\

$R$
& Radius parameter defining the restricted class $\mathcal Q_R$. \\

$M_X$
& Upper bound on the input norm, i.e., $\|X\|_2 \le M_X$. \\

$\mathbf{G}_l := \mathbf{X}_l\mathbf{X}_l^\top + \lambda \mathbf{S}_l \mathbf{S}_l^\top$ 
& Regularized Gram-based matrix used in the SARQC objective. \\

$\mathcal{R}(\widehat{\mathbf{W}}_l)$
& True downstream reconstruction risk,
$\mathbb{E}_{X\sim p_X}\!\left[\|\Delta \mathbf{W}_l X\|_2^2\right]$. \\

$\widehat{\mathcal{R}}_{\mathrm{cal}}(\widehat{\mathbf{W}}_l)$
& Empirical calibration reconstruction risk on the calibration set. \\


$I$ 
& Identity matrix. \\

\bottomrule
\end{tabular}%
}
\end{table}

\newpage
\section{More Comprehensive Literature Review}
\label{appendix: More Comprehensive Literature Review}

\paragraph{Post-training quantization} 
Post-training quantization (PTQ)\citep{nagel2021whitepaperneuralnetwork,wu2020easyquantposttrainingquantizationscale} has evolved from a lightweight deployment technique into a central compression paradigm for large language models (LLMs). Early optimization-based PTQ methods showed that naive round-to-nearest quantization is often suboptimal. AdaRound \citep{nagel2020adaround} formulates rounding as a data-dependent local optimization, while BRECQ \citep{li2021brecq} extends this view from individual layers to block reconstruction, balancing local reconstruction fidelity with cross-layer dependency. Second-order methods further strengthened this line of work. Optimal Brain Compression revisits the Optimal Brain Surgeon framework for one-shot pruning and quantization \citep{frantar2022optimal}.

For LLMs, however, quantization is complicated by activation and weight outliers, large inter-layer dependencies, and the distinction between weight-only and weight-activation quantization. Weight-only methods such as GPTQ \citep{frantar2023gptq}, AWQ \citep{lin2024awq}, SpQR \citep{dettmersspqr}, and OWQ \citep{Lee_Jin_Kim_Kim_Park_2024} mainly reduce memory bandwidth while keeping activations in higher precision.  Recent PTQ methods have also moved beyond plain layer-wise MSE by incorporating additional optimization criteria, including prediction-difference calibration \cite{LiuEtAl2023PDQuant}, contrastive or mutual-information-based calibration \cite{ShangEtAl2024CLCalib}, output-adaptive Hessian or end-loss-guided objectives \cite{EdalatiEtAl2025OAC,KimEtAl2025GuidedQuant}, statistical pre-calibration based on distributional discrepancy \cite{GhaffariEtAl2025PreCalibration}, mixed-metric reconstruction regularization \cite{WeiYanWang2025MPPQ}, and explicit regularized calibration with successive rounding \cite{ChaEtAl2026RegularizedCalibration}.

\paragraph{Constrained optimization}

Constrained and penalized formulations are a classical mechanism for balancing data fidelity against structural preference or model complexity. 
In statistics, ridge regression and the lasso are canonical examples in which a constrained estimator can be expressed through a penalized objective with a tuning parameter controlling the trade-off between empirical fit and regularity \cite{HoerlKennard1970,Tibshirani1996}. 
In convex optimization, this connection is typically formalized through Lagrange multipliers, KKT conditions, and duality theory \cite{BoydVandenberghe2004,NocedalWright2006}. 
Recent constrained-optimization literature emphasizes that replacing constraints by penalties is an exact reformulation only when a suitable multiplier or exactness condition exists; otherwise, the penalized problem should be interpreted as a relaxation or scalarized surrogate. 
This distinction is particularly relevant for post-training quantization, since the feasible set $\mathcal Q$ is discrete and generally nonconvex. 
Recent work on exact penalties and Lagrangian relaxation in constrained and mixed-integer optimization makes this multiplier-existence issue explicit \cite{Bragin2024SurveyLR,LiaoYuanGao2024,DiouaneGollierOrban2026}.

Our supportedness condition follows the same principle from multiobjective optimization. 
The reconstruction error and the discrepancy from the floating-point weights define two competing objectives, and the penalized objective is a weighted-sum scalarization whose exact recovery requires the selected point to be supported \cite{HelfrichEtAl2024,KoenenStiglmayr2025}. 
This viewpoint is natural for quantization calibration, where one balances output fidelity against proximity to the original FP weights.

\newpage
\section{Additional Preliminaries}
\label{app:technical_def}
This section reviews several preliminaries used throughout the paper. We first summarize basic quantization formulations, together with representative scaling-based and Gram-based PTQ methods, since the proposed SARQC framework extends these standard calibration objectives and optimization strategies in \Cref{sec:background,sec:method}. We then review the constrained optimization basics used in the theoretical discussion, which provide background for part of the motivation and analysis in \Cref{sec:rqc}. These preliminaries are included to establish notation for the derivations, proofs, and implementation details in the appendix.
\subsection{Quantization and Dequantization}
\label{app:prelim_quant_and_dequant}

Quantization maps high-precision weights or activations to a discrete low-bit set to reduce memory and improve inference efficiency. 
Given a FP weight matrix $\mathbf{W}\in\mathbb{R}^{d_{\mathrm{out}}\times d_{\mathrm{in}}}$, we denote its quantized--dequantized approximation as $\widehat{\mathbf{W}}=Q(\mathbf{W})$, where $Q(\cdot)$ represents the quantization operator together with the corresponding dequantization map.

For a scalar value $w$, a standard formulation is uniform affine quantization:
\begin{equation}
q=\mathrm{clip}\!\left(\left\lfloor \frac{w}{\eta}+z \right\rceil,\; q_{\min},\; q_{\max}\right),
\qquad
\widehat{w}=\eta\,(q-z),
\end{equation}
where $\eta>0$ is the quantization step size, $z$ is the zero-point, and $\lfloor\cdot\rceil$ denotes rounding to the nearest integer. The integer $q$ lies in a finite range $[q_{\min}, q_{\max}]$ determined by the bit width. In practice, quantization is applied elementwise to vectors, matrices, or tensors, with parameters shared at different granularities such as per-tensor, per-channel, or per-group.

\subsection{Post-Training Quantization: AWQ and GPTQ}

As our method is closely related to AWQ \cite{lin2024awq} and GPTQ \cite{frantar2023gptq}, we now present details of these two methods.

\paragraph{AWQ:}
Activation-Aware Weight Quantization (AWQ) is a weight-only PTQ method that improves quantization by applying channel-wise scaling prior to quantization \citep{lin2024awq}. For the $l$-th layer with weights $\mathbf{W}_l$ and calibration inputs $\mathbf{X}_l$, AWQ rescales input channels using a positive vector $\tilde{s}_l(\alpha)$, quantizes the rescaled weights, and compensates the scaling on the input side. This leads to the reconstruction objective
\begin{align}
\min_{\alpha}
\;\;
\bigl\|
\mathbf{W}_l \mathbf{X}_l
-
Q\!\bigl(\mathbf{W}_l \mathrm{diag}(\tilde{s}_l(\alpha))\bigr)
\mathrm{diag}(\tilde{s}_l(\alpha))^{-1}\mathbf{X}_l
\bigr\|_\mathrm{F}^2.
\end{align}

The scaling factors are parameterized using simple calibration statistics:
$$
\tilde{s}_l^{(j)}(\alpha)
=
\frac{\mathrm{mean}(|\mathbf{X}_l^{(j)}|)^{\alpha}}
{\mathrm{mean}(|\mathbf{W}_l^{(j)}|)^{1-\alpha}},
$$
and the final quantized weights are selected via a lightweight grid search over $\alpha$. This mechanism effectively protects activation-salient channels.

\paragraph{GPTQ:}
GPTQ formulates PTQ as a Gram-based reconstruction problem \citep{frantar2023gptq}. It minimizes the layer-wise error
$$
\bigl\|
\mathbf{W}_l \mathbf{X}_l - \widehat{\mathbf{W}}_l \mathbf{X}_l
\bigr\|_\mathrm{F}^2,
$$
which admits a quadratic form governed by the Gram matrix $\mathbf{X}_l \mathbf{X}_l^\top.$ This matrix serves as a surrogate curvature, capturing the sensitivity of the reconstruction loss.

GPTQ proceeds by sequentially quantizing weights and compensating the induced error using inverse Gram-based information. Combined with efficient implementations (e.g., blockwise updates and Cholesky-based inversion), this yields a scalable and accurate PTQ method for large language models.

\subsection{Optimization: Constraints and Penalties}
We briefly review the relationship between constrained and penalized formulations, which is a classical tool for expressing trade-offs between empirical fidelity and structural preference.

\begin{definition}[Constrained formulation]
Let $F(x)$ be a primary objective and let $R(x)$ measure the complexity, regularity, or deviation of $x$.  
A constrained formulation takes the form
$$
\min_{x\in\mathcal X} F(x)
\quad
\mathrm{s.t.}
\quad
R(x)\le \tau,
$$
where $\tau$ specifies the admissible level of regularity or complexity.
\end{definition}

\begin{definition}[Penalized formulation]
The corresponding penalized formulation is
$$
\min_{x\in\mathcal X}
F(x)+\lambda R(x),
$$
where $\lambda\ge 0$ controls the trade-off between the primary objective and the penalty term.
\end{definition}

Such constrained--penalized pairs are standard in statistics. For example, ridge regression and the lasso can be viewed either as constrained estimators or as penalized estimators, where the tuning parameter controls the trade-off between data fitting and regularity \citep{HoerlKennard1970,Tibshirani1996}. In convex optimization, this relationship is usually formalized through the Lagrangian
$$
\mathcal L(x,\lambda)
=
F(x)+\lambda\bigl(R(x)-\tau\bigr),
\qquad
\lambda\ge 0.
$$
Under suitable regularity conditions, KKT conditions, and strong duality, a solution of the constrained problem can be recovered by minimizing a penalized Lagrangian objective with an appropriate multiplier \citep{BoydVandenberghe2004,NocedalWright2006}.

However, this equivalence is not automatic. A penalized problem is an exact reformulation of the constrained problem only when a suitable multiplier or exactness condition exists. Otherwise, the penalized objective should be interpreted as a relaxation or scalarized surrogate rather than a fully equivalent problem. This issue is particularly important in nonconvex, nonsmooth, or mixed-integer optimization, where strong duality may fail and a nonzero duality gap can appear. Recent work on exact penalties and Lagrangian relaxation makes this multiplier-existence issue explicit \citep{Bragin2024SurveyLR,LiaoYuanGao2024,DiouaneGollierOrban2026}.

The same phenomenon can also be understood from the perspective of multiobjective optimization. Consider two competing objectives,
$$
F_1(x)
\quad\text{and}\quad
F_2(x).
$$
A weighted-sum scalarization solves
$$
\min_{x\in\mathcal X}
F_1(x)+\lambda F_2(x),
\qquad
\lambda\ge 0.
$$
This scalarized problem does not necessarily recover every Pareto-optimal solution.

\begin{definition}[Supported solution]
A feasible point $x^\star\in\mathcal X$ is called supported if there exists a weight $\lambda\ge 0$ such that
$$
x^\star
\in
\arg\min_{x\in\mathcal X}
F_1(x)+\lambda F_2(x).
$$
Equivalently, the objective vector
$
(F_1(x^\star),F_2(x^\star))
$
lies on a part of the attainable objective frontier that can be touched by a supporting hyperplane.
\end{definition}

Weighted-sum scalarization can recover supported Pareto solutions, but it can miss unsupported Pareto solutions, especially when the attainable objective set is discrete or nonconvex \citep{HelfrichEtAl2024,KoenenStiglmayr2025}. Therefore, the existence of a penalty parameter that exactly recovers a constrained solution is closely related to whether the target solution is supported.

In summary, constrained and penalized formulations express the same trade-off from two different viewpoints. Their exact equivalence requires additional structural conditions, such as the existence of a suitable Lagrange multiplier or supportedness of the selected trade-off point. Without such conditions, the penalized objective remains a useful and tunable surrogate, but not necessarily an exact reformulation of the constrained problem.

\clearpage

\section{Additional Theoretical Results}
\label{appendix: Additional Theoretical Results}

\subsection{Derivations of the Update for SARQC-GBS}
\label{app:derivation_rqc_gbs}

In this subsection, we derive the quadratic form underlying SARQC and show that it leads to a GPTQ-style row-wise compensation rule under a regularized curvature matrix.

\paragraph{Quadratic reformulation of the SARQC objective}
Recall the layer-wise SARQC objective
$$
\min_{\widehat{\mathbf{W}}_l \in \mathcal{Q}}
\;
\|\mathbf{W}_l\mathbf{X}_l-\widehat{\mathbf{W}}_l\mathbf{X}_l\|_\mathrm{F}^2
+
\lambda \|(\widehat{\mathbf{W}}_l-\mathbf{W}_l)\mathbf{S}_l\|_\mathrm{F}^2 .
$$
Let
$$
\Delta \mathbf{W}_l := \widehat{\mathbf{W}}_l - \mathbf{W}_l .
$$
Then the reconstruction term can be written as
$$
\|\mathbf{W}_l\mathbf{X}_l-\widehat{\mathbf{W}}_l\mathbf{X}_l\|_\mathrm{F}^2
=
\|\Delta \mathbf{W}_l \mathbf{X}_l\|_\mathrm{F}^2
=
\mathrm{Tr}\ \!\left(\Delta \mathbf{W}_l \mathbf{X}_l\mathbf{X}_l^\top \Delta \mathbf{W}_l^\top\right),
$$
and the regularization term becomes
$$
\|(\widehat{\mathbf{W}}_l-\mathbf{W}_l)\mathbf{S}_l\|_\mathrm{F}^2
=
\|\Delta \mathbf{W}_l \mathbf{S}_l\|_\mathrm{F}^2
=
\mathrm{Tr}\ \!\left(\Delta \mathbf{W}_l \mathbf{S}_l\mathbf{S}_l^\top \Delta \mathbf{W}_l^\top\right).
$$
Therefore, the SARQC objective reduces to the quadratic form
$$
\mathrm{Tr}\ \!\left(\Delta \mathbf{W}_l \mathbf{G}_l \Delta \mathbf{W}_l^\top\right),
\qquad
\mathbf{G}_l := \mathbf{X}_l\mathbf{X}_l^\top + \lambda \mathbf{S}_l\mathbf{S}_l^\top .
$$

\paragraph{Row-wise decomposition.}
Since the trace is additive over rows, writing $\Delta \mathbf{W}_l$ row-wise yields
$$
\mathrm{Tr}\ \!\left(\Delta \mathbf{W}_l \mathbf{G}_l \Delta \mathbf{W}_l^\top\right)
=
\sum_{r=1}^{d_{\mathrm{out}}}
\Delta \mathbf{w}_{l,r}^\top \mathbf{G}_l \Delta \mathbf{w}_{l,r},
$$
where $\Delta \mathbf{w}_{l,r}$ is the $r$-th row of $\Delta \mathbf{W}_l$. Hence, the optimization decomposes across output rows, as in GPTQ, with the curvature matrix replaced by $\mathbf{G}_l$.

\paragraph{Closed-form coordinate compensation}
Consider a single output row, written as a column vector $\mathbf w$, with quantized--dequantized counterpart $\widehat{\mathbf w}$. Define the row-wise perturbation
\[
\Delta \mathbf w := \widehat{\mathbf w}-\mathbf w .
\]
Let $\mathbf G:=\mathbf G_l$, and assume that $\mathbf G$ is invertible. The row-wise quadratic objective is
\[
\min_{\Delta \mathbf w}\;
\frac{1}{2}\Delta \mathbf w^\top \mathbf G \Delta \mathbf w .
\]

Now suppose the $j$-th coordinate is committed to its quantized--dequantized value $\widehat w_j$, and define the induced quantization error by
\[
e_j := w_j-\widehat w_j,
\qquad
\Delta w_j = -e_j .
\]
This leads to the constrained problem
\[
\min_{\Delta \mathbf w}\;
\frac{1}{2}\Delta \mathbf w^\top \mathbf G \Delta \mathbf w
\quad
\text{s.t.}
\quad
\Delta w_j = -e_j .
\]

Introducing a Lagrange multiplier $\nu$, the Lagrangian is
\[
\mathcal J(\Delta \mathbf w,\nu)
=
\frac{1}{2}\Delta \mathbf w^\top \mathbf G \Delta \mathbf w
+
\nu(\Delta w_j+e_j).
\]
The first-order optimality condition gives
\[
\mathbf G\Delta \mathbf w+\nu \mathbf e_j=0,
\qquad\Longrightarrow\qquad
\Delta \mathbf w = -\nu \mathbf G^{-1}\mathbf e_j,
\]
where $\mathbf e_j$ denotes the $j$-th standard basis vector. Let
\[
\mathbf M := \mathbf G^{-1}.
\]
Enforcing the constraint yields
\[
-e_j = \Delta w_j = -\nu M_{jj},
\qquad\Longrightarrow\qquad
\nu = \frac{e_j}{M_{jj}}.
\]
Therefore, the optimal perturbation update is
\[
\Delta \mathbf w^\star
=
-\frac{e_j}{M_{jj}}\,\mathbf M_{:,j}.
\]

Substituting this back into the quadratic objective, the minimal objective value induced by fixing coordinate $j$ is
\[
\Delta \mathcal L^\star
=
\frac{1}{2}\frac{e_j^2}{M_{jj}}.
\]

\paragraph{Relation to GPTQ}
The above derivation shows that \texttt{SARQC-GBS} is a strictly minimal modification of GPTQ under the same Gram-based sequential framework. In standard GPTQ, the quadratic form is governed by the activation Gram matrix $\mathbf{X}_l\mathbf{X}_l^\top$. SARQC replaces it with the regularized matrix
$$
\mathbf{G}_l = \mathbf{X}_l\mathbf{X}_l^\top + \lambda \mathbf{S}_l\mathbf{S}_l^\top,
$$
while leaving the row-wise decomposition, coordinate-wise commitment, and inverse-curvature compensation unchanged.

Consequently, \texttt{SARQC-GBS} preserves the implementation structure and computational pattern of GPTQ, but alters the effective curvature to incorporate saliency-aware regularization. When $\lambda=0$, the objective reduces to the undamped GPTQ-style quadratic form governed only by $\mathbf{X}_l\mathbf{X}_l^\top$. In practical GPTQ implementations, however, an additional heuristic damping term is often added to the Gram matrix for numerical stability. This damping is an implementation-level stabilization and can be viewed as an isotropic curvature regularization. In contrast, SARQC introduces a saliency-aware structured regularization term through $\lambda \mathbf{S}_l\mathbf{S}_l^\top$. Therefore, the $\lambda=0$ case of \texttt{SARQC-GBS} should be understood as the undamped GPTQ-style objective. When $\mathbf{S}_l\mathbf{S}_l^\top=\mathbf{I}$, SARQC reduces to an isotropically regularized variant, which is closely related to the damping used in practical GPTQ implementations.

\subsection{Proof of \Cref{thm:generalization risk}}
\label{append: proof of generalization risk}

\textit{Proof.}
Consider a single linear layer and define $
\Delta \mathbf W_l := \widehat{\mathbf W}_l - \mathbf W_l$.
Let the true downstream reconstruction risk be
\[
\mathcal R(\widehat{\mathbf W}_l)
:=
\mathbb E_{X \sim p_X}
\left[
\|
\Delta \mathbf W_l X
\|_2^2
\right],
\]
and let the empirical calibration risk on a calibration set $\mathcal D_{\mathrm{cal},l} := \mathbf{X}_l :=\{\mathbf{X}_{l,i}\}_{i=1}^n $
be
\[
\widehat{\mathcal R}_{\mathrm{cal}}(\widehat{\mathbf W}_l)
:=
\frac{1}{n}
\sum_{i=1}^n
\|
\Delta \mathbf W_l \mathbf{X}_{l,i}
\|_2^2 .
\]

Assume that $\| X\|_2 \le M_X$. We consider a restricted quantization hypothesis class
\[
\mathcal Q_R
:=
\left\{
\widehat{\mathbf W}_l \in \mathcal Q
:
\|
\widehat{\mathbf W}_l
-
\mathbf W_l
\|_\mathrm{F}
\le R
\right\}.
\]
For any $\widehat{\mathbf W}_l \in \mathcal Q_R$, we have
\[
\|
\Delta \mathbf W_l  X \|_2 \le \| \Delta \mathbf W_l \|_\mathrm{F} \| X\|_2
\le
RM_X .
\]
Therefore,
\[
0
\le
\|
\Delta \mathbf W_l  X
\|_2^2
\le
R^2 M_X^2 .
\]
We assume that $\mathcal Q_R$ is finite. This finite-class assumption is natural for quantization, since admissible quantized
weights are selected from a discrete codebook. The resulting bound may be loose because $|\mathcal Q_R|$ can be very large, but it cleanly illustrates the role of the weight drift radius $R$.

Define the loss $
\ell_{\widehat{\mathbf W}_l}( X)
:=
\|
\Delta \mathbf W_l  X
\|_2^2$.
Then for any fixed $\widehat{\mathbf W}_l \in \mathcal Q_R$, $
\mathcal R(\widehat{\mathbf W}_l)
=
\mathbb E[
\ell_{\widehat{\mathbf W}_l}( X)
]$,
and $
\widehat{\mathcal R}_{\mathrm{cal}}(\widehat{\mathbf W}_l)
=
\frac{1}{n}
\sum_{i=1}^n
\ell_{\widehat{\mathbf W}_l}(\mathbf{X}_{l,i})$.

Since $0
\le
\ell_{\widehat{\mathbf W}_l}(X)
\le
R^2 M_X^2$, Hoeffding's inequality gives, for any fixed
$\widehat{\mathbf W}_l \in \mathcal Q_R$,
\[
\Pr
\left(
\left|
\mathcal R(\widehat{\mathbf W}_l)
-
\widehat{\mathcal R}_{\mathrm{cal}}(\widehat{\mathbf W}_l)
\right|
\ge t
\right)
\le
2
\exp
\left(
-
\frac{2nt^2}{R^4 M_X^4}
\right).
\]

Since $\mathcal Q_R$ is finite, we apply the union bound:
\[
\Pr
\left(
\exists \widehat{\mathbf W}_l \in \mathcal Q_R
:
\left|
\mathcal R(\widehat{\mathbf W}_l)
-
\widehat{\mathcal R}_{\mathrm{cal}}(\widehat{\mathbf W}_l)
\right|
\ge t
\right)
\le
2 |\mathcal Q_R|
\exp
\left(
-
\frac{2nt^2}{R^4 M_X^4}
\right).
\]
Setting the right-hand side to $\delta$, we have
\[
2 |\mathcal Q_R|
\exp
\left(
-
\frac{2nt^2}{R^4 M_X^4}
\right)
=
\delta .
\]
Taking logarithms gives
\[
\frac{2nt^2}{R^4 M_X^4}
=
\log \frac{2|\mathcal Q_R|}{\delta}.
\]
Thus,
\[
t
=
R^2 M_X^2
\sqrt{
\frac{
\log \frac{2|\mathcal Q_R|}{\delta}
}{
2n
}
}.
\]

Therefore, with probability at least $1-\delta$, for all
$\widehat{\mathbf W}_l \in \mathcal Q_R$, we have
\[
\left|
\mathcal R(\widehat{\mathbf W}_l)
-
\widehat{\mathcal R}_{\mathrm{cal}}(\widehat{\mathbf W}_l)
\right|
\le
R^2 M_X^2
\sqrt{
\frac{
\log \frac{2|\mathcal Q_R|}{\delta}
}{
2n
}
}.
\]
In particular,
\[
\mathcal R(\widehat{\mathbf W}_l)
\le
\widehat{\mathcal R}_{\mathrm{cal}}(\widehat{\mathbf W}_l)
+
R^2 M_X^2
\sqrt{
\frac{
\log \frac{2|\mathcal Q_R|}{\delta}
}{
2n
}
}.
\]
This completes the proof.
\qed

This bound shows that the true downstream risk can be controlled by two terms: the empirical calibration risk (i.e., reconstruction error in this case) and a generalization term depending on the weight drift $R$. A large weight drift increases the generalization term. Therefore, minimizing only
$\widehat{\mathcal R}_{\mathrm{cal}}(\widehat{\mathbf W}_l)$ may lead to poor downstream
generalization if the selected quantized weights have large drift from the original
FP weights.

\subsection{Proof of Corollary~\ref{cor:constraint_penalty_supported}}
\label{append:cor_constraint_penalty}

Fix a layer $l$, and define $D(\widehat{\mathbf W}^{\prime}):=\|\widehat{\mathbf W}^{\prime}-\mathbf W_l\|_{\mathrm F}^2$. 
For this fixed constrained minimizer $\widehat{\mathbf W}_l$, define
$\mathcal S_+:=\{\widehat{\mathbf W}^{\prime}\in\mathcal Q:D(\widehat{\mathbf W}^{\prime})>D(\widehat{\mathbf W}_l)\}$ and
$\mathcal S_-:=\{\widehat{\mathbf W}^{\prime}\in\mathcal Q:D(\widehat{\mathbf W}^{\prime})<D(\widehat{\mathbf W}_l)\}$. 
Set 
$$\lambda_{\min}:= \max\left\{ 0,\;\max_{\widehat{\mathbf W}^{\prime}\in\mathcal S_+} \frac{\widehat{\mathcal R}_{\mathrm{cal}} (\widehat{\mathbf W}_l)-\widehat{\mathcal R}_{\mathrm{cal}} (\widehat{\mathbf W}^{\prime})} {D(\widehat{\mathbf W}^{\prime})-D(\widehat{\mathbf W}_l)} \right\}, $$
$$\text{and} \quad \lambda_{\max}:=
\min_{\widehat{\mathbf W}^{\prime}\in\mathcal S_-} \frac{\widehat{\mathcal R}_{\mathrm{cal}} (\widehat{\mathbf W}^{\prime})-\widehat{\mathcal R}_{\mathrm{cal}} (\widehat{\mathbf W}_l)}{D(\widehat{\mathbf W}_l)-D(\widehat{\mathbf W}^{\prime})}.$$
Here we use the conventions $\max_{\emptyset}(\cdot):=-\infty$ and $\min_{\emptyset}(\cdot):=+\infty$.

\begin{proof}[Proof]
    For simplicity, write $\widehat{\mathbf W}:=\widehat{\mathbf W}_l$ and let $\widehat{\mathbf W}^{\prime}\in\mathcal Q$ be arbitrary. We also write $L$ and $D$ without the layer subscript. For a fixed finite $\lambda\ge 0$, define $J_\lambda(\mathbf W):=\widehat{\mathcal R}_{\mathrm{cal}} (\mathbf W)+\lambda D(\mathbf W)$. The condition $\widehat{\mathbf W}\in\arg\min_{\mathbf W\in\mathcal Q}J_\lambda(\mathbf W)$ is equivalent to the pairwise inequalities $J_\lambda(\widehat{\mathbf W})\le J_\lambda(\widehat{\mathbf W}^{\prime})$ for all $\widehat{\mathbf W}^{\prime}\in\mathcal Q$.

    For each $\widehat{\mathbf W}^{\prime}\in\mathcal Q$, define $\Delta \widehat{\mathcal R}_{\mathrm{cal}} (\widehat{\mathbf W}^{\prime}):=\widehat{\mathcal R}_{\mathrm{cal}} (\widehat{\mathbf W}^{\prime})-\widehat{\mathcal R}_{\mathrm{cal}} (\widehat{\mathbf W})$ and $\Delta D(\widehat{\mathbf W}^{\prime}):=D(\widehat{\mathbf W}^{\prime})-D(\widehat{\mathbf W})$. Then $J_\lambda(\widehat{\mathbf W})\le J_\lambda(\widehat{\mathbf W}^{\prime})$ is equivalent to $\Delta \widehat{\mathcal R}_{\mathrm{cal}} (\widehat{\mathbf W}^{\prime})+\lambda\Delta D(\widehat{\mathbf W}^{\prime})\ge 0$.

    We first note that infeasible points are not omitted. Since $\widehat{\mathbf W}\in\mathcal Q_R$, we have $D(\widehat{\mathbf W})\le R^2$. If $\widehat{\mathbf W}^{\prime}\notin\mathcal Q_R$, then $D(\widehat{\mathbf W}^{\prime})>R^2$, and hence $D(\widehat{\mathbf W}^{\prime})>D(\widehat{\mathbf W})$. Therefore every infeasible point belongs to $\mathcal S_+$. In particular, any infeasible point with a smaller value of $L$ contributes to the lower-bound requirement encoded in $\lambda_{\min}$.

    We now prove the sufficiency. Assume $\lambda_{\min}\le \lambda_{\max}$ and fix a finite $\lambda_R$ such that $\lambda_{\min}\le \lambda_R\le \lambda_{\max}$. We verify $J_{\lambda_R}(\widehat{\mathbf W})\le J_{\lambda_R}(\widehat{\mathbf W}^{\prime})$ by considering three cases.

    First, suppose $D(\widehat{\mathbf W}^{\prime})>D(\widehat{\mathbf W})$. Then $\widehat{\mathbf W}^{\prime}\in\mathcal S_+$. By the definition of $\lambda_{\min}$ and the choice $\lambda_R\ge\lambda_{\min}$, we have
    \begin{align*}
    \lambda_R
    \ge
    \frac{\widehat{\mathcal R}_{\mathrm{cal}} (\widehat{\mathbf W})-\widehat{\mathcal R}_{\mathrm{cal}} (\widehat{\mathbf W}^{\prime})}
    {D(\widehat{\mathbf W}^{\prime})-D(\widehat{\mathbf W})}.
    \end{align*}
    The denominator $D(\widehat{\mathbf W}^{\prime})-D(\widehat{\mathbf W})$ is positive, so multiplying by it preserves the inequality and gives $\lambda_R\{D(\widehat{\mathbf W}^{\prime})-D(\widehat{\mathbf W})\}\ge \widehat{\mathcal R}_{\mathrm{cal}} (\widehat{\mathbf W})-\widehat{\mathcal R}_{\mathrm{cal}} (\widehat{\mathbf W}^{\prime})$. This is equivalent to $\Delta \widehat{\mathcal R}_{\mathrm{cal}} (\widehat{\mathbf W}^{\prime})+\lambda_R\Delta D(\widehat{\mathbf W}^{\prime})\ge 0$, and hence $J_{\lambda_R}(\widehat{\mathbf W})\le J_{\lambda_R}(\widehat{\mathbf W}^{\prime})$. This case includes all infeasible candidates.

    Second, suppose $D(\widehat{\mathbf W}^{\prime})<D(\widehat{\mathbf W})$. Then $\widehat{\mathbf W}^{\prime}\in\mathcal S_-$. Moreover, $\widehat{\mathbf W}^{\prime}$ is feasible because $D(\widehat{\mathbf W}^{\prime})<D(\widehat{\mathbf W})\le R^2$. Since $\widehat{\mathbf W}$ minimizes $L$ over $\mathcal Q_R$, we have $\widehat{\mathcal R}_{\mathrm{cal}} (\widehat{\mathbf W})\le \widehat{\mathcal R}_{\mathrm{cal}} (\widehat{\mathbf W}^{\prime})$. Thus the ratio defining the upper bound is nonnegative. By the definition of $\lambda_{\max}$ and the choice $\lambda_R\le\lambda_{\max}$, we have
    \begin{align*}
    \lambda_R
    \le
    \frac{\widehat{\mathcal R}_{\mathrm{cal}} (\widehat{\mathbf W}^{\prime})-\widehat{\mathcal R}_{\mathrm{cal}} (\widehat{\mathbf W})}
    {D(\widehat{\mathbf W})-D(\widehat{\mathbf W}^{\prime})}.
    \end{align*}
    The denominator $D(\widehat{\mathbf W})-D(\widehat{\mathbf W}^{\prime})$ is positive, so multiplying by it preserves the inequality and gives $\lambda_R\{D(\widehat{\mathbf W})-D(\widehat{\mathbf W}^{\prime})\}\le \widehat{\mathcal R}_{\mathrm{cal}} (\widehat{\mathbf W}^{\prime})-\widehat{\mathcal R}_{\mathrm{cal}} (\widehat{\mathbf W})$. Equivalently, $\Delta \widehat{\mathcal R}_{\mathrm{cal}} (\widehat{\mathbf W}^{\prime})+\lambda_R\Delta D(\widehat{\mathbf W}^{\prime})\ge 0$, and hence $J_{\lambda_R}(\widehat{\mathbf W})\le J_{\lambda_R}(\widehat{\mathbf W}^{\prime})$. This case explains why the penalty parameter cannot be chosen too large: otherwise a feasible point with smaller distance penalty could dominate $\widehat{\mathbf W}$.

    Third, suppose $D(\widehat{\mathbf W}^{\prime})=D(\widehat{\mathbf W})$. Since $D(\widehat{\mathbf W})\le R^2$, this implies $D(\widehat{\mathbf W}^{\prime})\le R^2$, so $\widehat{\mathbf W}^{\prime}\in\mathcal Q_R$. By constrained optimality, $\widehat{\mathcal R}_{\mathrm{cal}} (\widehat{\mathbf W})\le \widehat{\mathcal R}_{\mathrm{cal}} (\widehat{\mathbf W}^{\prime})$. Since the penalty terms are equal in this case, $J_{\lambda_R}(\widehat{\mathbf W})\le J_{\lambda_R}(\widehat{\mathbf W}^{\prime})$ follows immediately.

    The three cases exhaust all $\widehat{\mathbf W}^{\prime}\in\mathcal Q$. Therefore $J_{\lambda_R}(\widehat{\mathbf W})\le J_{\lambda_R}(\widehat{\mathbf W}^{\prime})$ for every $\widehat{\mathbf W}^{\prime}\in\mathcal Q$, which proves
    \begin{align*}
    \widehat{\mathbf W}
    \in
    \arg\min_{\widehat{\mathbf W}^{\prime}\in\mathcal Q}
    \left\{
    \widehat{\mathcal R}_{\mathrm{cal}} (\widehat{\mathbf W}^{\prime})+\lambda_R D(\widehat{\mathbf W}^{\prime})
    \right\}.
    \end{align*}

    It remains to prove the converse. Suppose there exists a finite $\lambda_R\ge 0$ such that $\widehat{\mathbf W}\in\arg\min_{\widehat{\mathbf W}^{\prime}\in\mathcal Q}J_{\lambda_R}(\widehat{\mathbf W}^{\prime})$. Then for every $\widehat{\mathbf W}^{\prime}\in\mathcal Q$, we have $\Delta \widehat{\mathcal R}_{\mathrm{cal}} (\widehat{\mathbf W}^{\prime})+\lambda_R\Delta D(\widehat{\mathbf W}^{\prime})\ge 0$.

    If $D(\widehat{\mathbf W}^{\prime})>D(\widehat{\mathbf W})$, then $\Delta D(\widehat{\mathbf W}^{\prime})>0$, and the preceding inequality implies
    \begin{align*}
    \lambda_R
    \ge
    \frac{\widehat{\mathcal R}_{\mathrm{cal}} (\widehat{\mathbf W})-\widehat{\mathcal R}_{\mathrm{cal}} (\widehat{\mathbf W}^{\prime})}
    {D(\widehat{\mathbf W}^{\prime})-D(\widehat{\mathbf W})}.
    \end{align*}
    Taking the maximum over all such $\widehat{\mathbf W}^{\prime}$ and using $\lambda_R\ge 0$ gives $\lambda_R\ge\lambda_{\min}$, with the convention that this is trivial when $\mathcal S_+=\emptyset$.

    If $D(\widehat{\mathbf W}^{\prime})<D(\widehat{\mathbf W})$, then $\Delta D(\widehat{\mathbf W}^{\prime})<0$, and dividing by this negative quantity reverses the inequality. Thus
    \begin{align*}
    \lambda_R
    \le
    \frac{\widehat{\mathcal R}_{\mathrm{cal}} (\widehat{\mathbf W}^{\prime})-\widehat{\mathcal R}_{\mathrm{cal}} (\widehat{\mathbf W})}
    {D(\widehat{\mathbf W})-D(\widehat{\mathbf W}^{\prime})}.
    \end{align*}
    Taking the minimum over all such $\widehat{\mathbf W}^{\prime}$ gives $\lambda_R\le\lambda_{\max}$, with the convention that this is trivial when $\mathcal S_-=\emptyset$. Hence $\lambda_{\min}\le\lambda_R\le\lambda_{\max}$.

    This completes the proof.
\end{proof}

Corollary~\ref{cor:constraint_penalty_supported} clarifies that the constrained calibration problem can be represented by a penalized calibration objective over a finite quantization set. 
It gives a recovery condition for a fixed constrained minimizer $\widehat{\mathbf W}_l$. 
In particular, if the supportedness condition $\lambda_{\min}\le \lambda_{\max}$ holds, then one can choose a penalty strength $\lambda_R\in[\lambda_{\min},\lambda_{\max}]$ such that $\widehat{\mathbf W}_l$ remains optimal after replacing the hard radius constraint $D(\widehat{\mathbf W}^{\prime})\le R^2$ by the soft penalty $\lambda_R D(\widehat{\mathbf W}^{\prime})$.
It is the standard supported-solution viewpoint in multiobjective optimization that weighted-sum scalarizations recover supported efficient points, while finite objective sets may also contain efficient but unsupported points \citep{Zadeh1963,Geoffrion1968,Miettinen1998,Ehrgott2005,MarlerArora2010}.

\paragraph{Interpretation of $\lambda_{\min}$ and $\lambda_{\max}$}
The quantities $\lambda_{\min}$ and $\lambda_{\max}$ have a direct pairwise-comparison interpretation. The lower threshold $\lambda_{\min}$ is the smallest penalty strength needed to prevent candidates with larger distance $D(\widehat{\mathbf W}^{\prime})>D(\widehat{\mathbf W}_l)$ from improving upon $\widehat{\mathbf W}_l$ under the penalized objective. This class includes all infeasible candidates, since $\widehat{\mathbf W}_l\in\mathcal Q_R$ implies $D(\widehat{\mathbf W}_l)\le R^2$, whereas any $\widehat{\mathbf W}^{\prime}\notin\mathcal Q_R$ satisfies $D(\widehat{\mathbf W}^{\prime})>R^2\ge D(\widehat{\mathbf W}_l)$. Thus infeasible quantized weights are explicitly accounted for in $\lambda_{\min}$. 
In contrast, $\lambda_{\max}$ is the largest penalty strength allowed before a closer feasible candidate with $D(\widehat{\mathbf W}^{\prime})<D(\widehat{\mathbf W}_l)$ can overtake $\widehat{\mathbf W}_l$ due to its smaller distance penalty. Such candidates are automatically feasible because $D(\widehat{\mathbf W}^{\prime})<D(\widehat{\mathbf W}_l)\le R^2$, and the constrained optimality of $\widehat{\mathbf W}_l$ ensures $L(\widehat{\mathbf W}_l)\le L(\widehat{\mathbf W}^{\prime})$ for them. 
Hence $\lambda_{\min}\le\lambda_{\max}$ simply states that the penalty must be large enough to suppress farther, possibly infeasible, low-risk candidates, but not so large that it favors closer, higher-risk feasible candidates. This is the exact condition under which the bias introduced by the penalty does not change the selected minimizer.

\paragraph{Reasonableness of the supportedness condition}
We note that the condition $\lambda_{\min}\le\lambda_{\max}$ is not an ad-hoc regularity assumption which is the finite-set feasibility condition for a common penalty multiplier. Indeed, every competitor $\widehat{\mathbf W}^{\prime}\in\mathcal Q$ imposes one admissible interval for $\lambda_R$, that is, candidates with $D(\widehat{\mathbf W}^{\prime})>D(\widehat{\mathbf W}_l)$ impose lower bounds, candidates with $D(\widehat{\mathbf W}^{\prime})<D(\widehat{\mathbf W}_l)$ impose upper bounds, and candidates with $D(\widehat{\mathbf W}^{\prime})=D(\widehat{\mathbf W}_l)$ impose no additional constraint because constrained optimality already gives $L(\widehat{\mathbf W}_l)\le L(\widehat{\mathbf W}^{\prime})$. Hence $[\lambda_{\min},\lambda_{\max}]$ is precisely the intersection of all pairwise admissible intervals. 
Its nonemptiness means that the penalty can be chosen large enough to suppress farther, possibly infeasible, low-risk candidates, but not so large that it favors closer, higher-risk feasible candidates. 

This condition is closely aligned with the supported-solution viewpoint in multiobjective optimization. Weighted-sum scalarizations recover supported efficient solutions, whereas finite objective sets may contain efficient but unsupported solutions that cannot be recovered by any nonnegative linear weighting \citep{Geoffrion1968,Miettinen1998,Ehrgott2005,MarlerArora2010}.  In this sense, $\lambda_{\min}\le\lambda_{\max}$ says precisely that the constrained minimizer $\widehat{\mathbf W}_l$ is supported by some linear scalarization of the distance--risk trade-off.

Similar condition also appears in exact-penalty and Lagrangian-relaxation theory. 
Exact-penalty methods typically require the existence of a finite penalty parameter, often under constraint qualifications, multiplier bounds, or other exactness conditions \citep{HanMangasarian1979,DiPilloGrippo1989}. 
Similar exact-penalty ideas have also been developed for nonlinear integer programming, where additional conditions are imposed to guarantee equivalence between the original discrete problem and its penalized reformulation \citep{LucidiRinaldi2010}. 
Likewise, in Lagrangian relaxation for integer programming, a multiplier generally provides a relaxation or bound, and exact primal recovery is not automatic without an additional exactness or zero-gap condition \citep{Geoffrion1974,Fisher1981}.

\paragraph{Link to Lagrangian relaxation}
This result is closely related to classical Lagrangian relaxation. The constrained problem $\min_{\widehat{\mathbf W}^{\prime}\in\mathcal Q}L(\widehat{\mathbf W}^{\prime})$ subject to $D(\widehat{\mathbf W}^{\prime})\le R^2$ has the Lagrangian form $L(\widehat{\mathbf W}^{\prime})+\lambda\{D(\widehat{\mathbf W}^{\prime})-R^2\}$ with multiplier $\lambda\ge 0$. Since the term $-\lambda R^2$ is constant in $\widehat{\mathbf W}^{\prime}$, minimizing the Lagrangian over $\mathcal Q$ is equivalent to minimizing $L(\widehat{\mathbf W}^{\prime})+\lambda D(\widehat{\mathbf W}^{\prime})$. 
In convex optimization, suitable constraint qualifications and strong duality often justify recovery of constrained optima from Lagrange multipliers \citep{BoydVandenberghe2004}. In discrete or integer optimization, however, Lagrangian relaxation is generally a relaxation or bounding device, and exact primal recovery is not automatic \citep{Everett1963,Geoffrion1974,Fisher1981}. 
Corollary~\ref{cor:constraint_penalty_supported} can therefore be viewed as a finite-set analogue of Lagrangian primal recovery. 
It identifies when a Lagrangian-style scalarization recovers the selected constrained minimizer.

\clearpage
\section{Algorithms and Implementation Details}
\label{app:pseudo algorithms and implementation details}

\subsection{Algorithms}
\label{appendix: pseudo algorithms}
In this section, we summarize the algorithms used to optimize the proposed SARQC objective.
\begin{algorithm}[hbpt]
\caption{SARQC with Grid Search over Scaling Factors (SARQC-GS)}
\label{alg:sarqc_grid}
\begin{algorithmic}[1]
\Require FP weight matrix $\mathbf{W}_l \in \mathbb{R}^{d_{\mathrm{out}}\times d_{\mathrm{in}}}$, calibration input matrix $\mathbf{X}_l \in \mathbb{R}^{d_{\mathrm{in}}\times n}$, search set $\mathcal{A}$, saliency matrix $\mathbf{S}_l$, regularization weight $\lambda>0$

\For{each $\alpha \in \mathcal{A}$}
    \State Construct the channel-wise scaling vector $\tilde{s}_l(\alpha)$
    \State $\widetilde{\mathbf{S}}_l(\alpha)\gets \mathrm{diag}(\tilde{s}_l(\alpha))$
    \State $\widehat{\mathbf{W}}_l(\alpha)\gets Q\!\bigl(\mathbf{W}_l\,\widetilde{\mathbf{S}}_l(\alpha)\bigr)\,\widetilde{\mathbf{S}}_l(\alpha)^{-1}$
    \State $\mathcal{L}_{\mathrm{recon}}(\alpha)\gets \|\mathbf{W}_l\mathbf{X}_l-\widehat{\mathbf{W}}_l(\alpha)\mathbf{X}_l\|_\mathrm{F}^2$
    \State $\mathcal{L}_{\mathrm{sar}}(\alpha)\gets \|(\widehat{\mathbf{W}}_l(\alpha)-\mathbf{W}_l)\mathbf{S}_l\|_\mathrm{F}^2$
\EndFor

\State Compute $
\mathcal{L}_{\mathrm{recon}}^{\min},\mathcal{L}_{\mathrm{recon}}^{\max}
\gets
\min_{\alpha\in\mathcal A}\mathcal{L}_{\mathrm{recon}}(\alpha),\;
\max_{\alpha\in\mathcal A}\mathcal{L}_{\mathrm{recon}}(\alpha)
$

\State Compute $
\mathcal{L}_{\mathrm{sar}}^{\min},\mathcal{L}_{\mathrm{sar}}^{\max}
\gets
\min_{\alpha\in\mathcal A}\mathcal{L}_{\mathrm{sar}}(\alpha),\;
\max_{\alpha\in\mathcal A}\mathcal{L}_{\mathrm{sar}}(\alpha)
$

\State $\mathcal{L}^{\star}\gets +\infty$
\For{each $\alpha \in \mathcal{A}$}
    \State $\widetilde{\mathcal{L}}_{\mathrm{recon}}(\alpha)\gets
    \dfrac{\mathcal{L}_{\mathrm{recon}}(\alpha)-\mathcal{L}_{\mathrm{recon}}^{\min}}
    {\mathcal{L}_{\mathrm{recon}}^{\max}-\mathcal{L}_{\mathrm{recon}}^{\min}}$
    , $\widetilde{\mathcal{L}}_{\mathrm{sar}}(\alpha)\gets
    \dfrac{\mathcal{L}_{\mathrm{sar}}(\alpha)-\mathcal{L}_{\mathrm{sar}}^{\min}}
    {\mathcal{L}_{\mathrm{sar}}^{\max}-\mathcal{L}_{\mathrm{sar}}^{\min}}$
    \State $\mathcal{L}(\alpha)\gets \widetilde{\mathcal{L}}_{\mathrm{recon}}(\alpha)+\lambda\,\widetilde{\mathcal{L}}_{\mathrm{sar}}(\alpha)$
    \If{$\mathcal{L}(\alpha)<\mathcal{L}^{\star}$}
        \State $\mathcal{L}^{\star}\gets \mathcal{L}(\alpha)$, $\widehat{\mathbf{W}}_l\gets \widehat{\mathbf{W}}_l(\alpha)$
    \EndIf
\EndFor
\State \Return $\widehat{\mathbf{W}}_l$
\end{algorithmic}
\end{algorithm}

\begin{algorithm}[hbpt]
\caption{SARQC with Gram-based Sequential Quantization (SARQC-GBS)}
\label{alg:sarqc_gram}
\begin{algorithmic}[1]
\Require FP weight matrix $\mathbf{W}_l \in \mathbb{R}^{d_{\mathrm{out}}\times d_{\mathrm{in}}}$, calibration input matrix $\mathbf{X}_l \in \mathbb{R}^{d_{\mathrm{in}}\times n}$, saliency profile $s_l(\gamma)$, regularization weight $\lambda>0$, block size $B$

\State $\mathbf{G}_l \gets \mathbf{X}_l\mathbf{X}_l^\top
+
\lambda \bar{h}_l\,
\mathrm{diag}\!\left(
\dfrac{s_l(\gamma)^2}{\mathrm{mean}(s_l(\gamma)^2)}
\right)$, where $\bar{h}_l \gets \mathrm{mean}\!\left(\mathrm{diag}(\mathbf{X}_l\mathbf{X}_l^\top)\right)$
\State $\mathbf{M}_l \gets \mathrm{chol}(\mathbf{G}_l^{-1})^\top$
\State $\widehat{\mathbf{W}}_l \gets \mathbf{0}_{d_{\mathrm{out}}\times d_{\mathrm{in}}}$
\State $\mathbf{U} \gets \mathbf{W}_l$

\For{$i = 1, 1+B, 1+2B, \dots, d_{\mathrm{in}}$}
    \State $i_{\mathrm{end}} \gets \min(i+B-1,\; d_{\mathrm{in}})$
    \State $\mathbf{E} \gets \mathbf{0}_{d_{\mathrm{out}}\times (i_{\mathrm{end}}-i+1)}$
    \For{$j = i, i+1, \dots, i_{\mathrm{end}}$}
        \State $\widehat{\mathbf{W}}_{l,:,j} \gets Q(\mathbf{U}_{:,j})$
        \State $\mathbf{e} \gets \dfrac{\mathbf{U}_{:,j}-\widehat{\mathbf{W}}_{l,:,j}}{(\mathbf{M}_l)_{jj}}$
        \State $\mathbf{E}_{:,\,j-i+1} \gets \mathbf{e}$
        \State $\mathbf{U}_{:,\,j:i_{\mathrm{end}}} \gets \mathbf{U}_{:,\,j:i_{\mathrm{end}}} - \mathbf{e}\,(\mathbf{M}_l)_{j,\,j:i_{\mathrm{end}}}$
    \EndFor
    \If{$i_{\mathrm{end}} < d_{\mathrm{in}}$}
        \State $\mathbf{U}_{:,\,i_{\mathrm{end}}+1:d_{\mathrm{in}}}
        \gets
        \mathbf{U}_{:,\,i_{\mathrm{end}}+1:d_{\mathrm{in}}}
        -
        \mathbf{E}\,(\mathbf{M}_l)_{\,i:i_{\mathrm{end}},\,i_{\mathrm{end}}+1:d_{\mathrm{in}}}$
    \EndIf
\EndFor

\State \Return $\widehat{\mathbf{W}}_l$
\end{algorithmic}
\end{algorithm}

\clearpage

\subsection{Implementation Details}
\label{alg:Implementation_Details}
\paragraph{Implementation Details for SARQC-GS}

\texttt{SARQC-GS} is implemented as a scaling-based post-training quantization procedure with a regularized model selection rule over candidate scaling factors. For each layer, we perform a grid search over a scalar parameter $\alpha$ that controls a channel-wise reparameterization of the weight matrix.

Specifically, we consider a uniform grid
$$
\mathcal{A}=\left\{\frac{k}{20}:k=0,1,\dots,20\right\}.
$$
For each $\alpha\in\mathcal{A}$, we construct a channel-wise scaling vector $\tilde{s}_l(\alpha)\in\mathbb{R}_+^{d_{\mathrm{in}}}$ and the corresponding diagonal matrix
$$
\widetilde{\mathbf{S}}_l(\alpha)=\mathrm{diag}\!\bigl(\tilde{s}_l(\alpha)\bigr).
$$
Following the design of AWQ~\citep{lin2024awq}, the scaling factors are defined as
$$
\tilde{s}_l^{(j)}(\alpha)
=
\frac{\mathrm{mean}(|\mathbf{X}_l^{(j)}|)^{\alpha}}
{\mathrm{mean}(|\mathbf{W}_l^{(j)}|)^{1-\alpha}},
$$
which interpolates between activation-driven and weight-driven normalization. The resulting vector is further normalized by the geometric mean of its extrema to improve numerical stability.

Given $\widetilde{\mathbf{S}}_l(\alpha)$, we generate a candidate quantized weight via
$$
\widehat{\mathbf{W}}_l(\alpha)
=
Q\!\bigl(\mathbf{W}_l\,\widetilde{\mathbf{S}}_l(\alpha)\bigr)\,
\widetilde{\mathbf{S}}_l(\alpha)^{-1}.
$$

In addition, \texttt{SARQC-GS} introduces a fixed saliency matrix $\mathbf{S}_l=\mathrm{diag}(s_l)$ for discrepancy regularization, defined as
$$
s_l^{(j)}=
\frac{\mathrm{mean}(|\mathbf{X}_l^{(j)}|)}
{\mathrm{mean}(|\mathbf{W}_l^{(j)}|)}.
$$
Importantly, $\widetilde{\mathbf{S}}_l(\alpha)$ and $\mathbf{S}_l$ serve distinct roles: the former generates candidate solutions, while the latter weights channel-wise deviations in the regularization term.

For each candidate $\alpha$, we evaluate the reconstruction loss
$$
\mathcal{L}_{\mathrm{recon}}(\alpha)
=
\|\mathbf{W}_l\mathbf{X}_l-\widehat{\mathbf{W}}_l(\alpha)\mathbf{X}_l\|_\mathrm{F}^2,
$$
and the saliency-weighted discrepancy
$$
\mathcal{L}_{\mathrm{sar}}(\alpha)
=
\|(\widehat{\mathbf{W}}_l(\alpha)-\mathbf{W}_l)\mathbf{S}_l\|_\mathrm{F}^2.
$$
To balance the scale mismatch between the two terms, we apply min--max normalization across $\alpha\in\mathcal{A}$ and select the best candidate by minimizing the normalized joint objective.

At the outer level, the regularization weight $\lambda$ is selected from $\{0.1,0.2,\dots,1.0\}$ using a held-out validation split of the calibration data. For each $\lambda$, we run layer-wise \texttt{SARQC-GS} on the training subset and choose the value that minimizes the validation perplexity.

\paragraph{Implementation Details for SARQC-GBS}

\texttt{SARQC-GBS} is implemented as a second-order Gram-based sequential quantization method, where the curvature matrix is regularized using saliency-aware statistics.

We perform layer-wise hyperparameter selection over
$$
\lambda \in \{0.25, 0.5, 0.75\},
\qquad
\gamma \in \{0.1, 0.15, 0.35, 0.5\}.
$$
Here, $\lambda$ controls the strength of curvature regularization, while $\gamma$ determines the relative contribution of activation and weight statistics in the saliency profile.

For each candidate $(\lambda,\gamma)$, we construct a saliency vector
$$
s_l^{(j)}(\gamma)
=
\frac{\mathrm{mean}(|\mathbf{X}_l^{(j)}|)^\gamma}
{\mathrm{mean}(|\mathbf{W}_l^{(j)}|)^{1-\gamma}}.
$$

We then construct the regularized curvature matrix as
\begin{align*}
\mathbf{G}_l:=\mathbf{X}_l\mathbf{X}_l^\top+\lambda\mathbf{S}_l\mathbf{S}_l^\top ~.
\end{align*}
To match the practical scale of the empirical Gram matrix across layers, we use 
\begin{align*}
    \mathbf{S}_l:=\mathbf{S}_l(\gamma)=
\sqrt{\bar h_l}\,
\operatorname{diag} (
\frac{s_l(\gamma)}
{\sqrt{\operatorname{mean}(s_l(\gamma)^2)}} 
)
.
\end{align*}
with $\bar h_l:= \operatorname{mean}(\operatorname{diag}(\mathbf{X}_l\mathbf{X}_l^\top))$ and $s_l(\gamma)$ is defined channel-wise by
$s_l^{(j)}(\gamma):=
\nicefrac{\mathrm{mean}(|\mathbf{X}_l^{(j)}|)^{\gamma}}
{\mathrm{mean}(|\mathbf{W}_l^{(j)}|)^{1-\gamma}}$ where $\gamma$ controls the relative contribution of activation and weight statistics. This construction can be interpreted as a saliency-aware and scale-normalized curvature regularization of the Gram-based objective. This also aligns with Corollary~\ref{cor:constraint_penalty_supported} to ensure that the range of $\lambda$ is proper. It preserves the computational structure of GPTQ while modulating the relative importance of different channels.

For efficiency, hyperparameter selection is performed on a reduced subset of input channels. The calibration data are split into training and validation subsets. The training subset is used to construct curvature statistics and perform sequential quantization, while the validation subset is used to evaluate candidate configurations. The best $(\lambda,\gamma)$ pair is then applied to the full layer.

\paragraph{Implementation Details for SARQC with OmniQuant}

In the ablation study, we extend OmniQuant \citep{omniquant} by incorporating the SARQC objective into its differentiable quantization framework. Unlike the layer-wise linear form used in the main method, OmniQuant operates on transformer-block outputs. Accordingly, let $\mathcal F(\cdot,\cdot)$ denote the forward mapping of the corresponding transformer block. In the special case of a single linear layer, this reduces to the familiar form $\mathbf W_l \mathbf X_l$.

Let $\widehat{\mathbf W}_l(\Theta_1,\Theta_2)$ and $\widehat{\mathbf X}_l(\Theta_2)$ denote the differentiable quantized--dequantized weights and activations induced by the OmniQuant parameters $(\Theta_1,\Theta_2)$. The original OmniQuant objective can be written as
\begin{align*}
\min_{\Theta_1,\Theta_2}\;
\bigl\|
\mathcal F(\mathbf W_l,\mathbf X_l)
-
\mathcal F\bigl(\widehat{\mathbf W}_l(\Theta_1,\Theta_2),\,\widehat{\mathbf X}_l(\Theta_2)\bigr)
\bigr\|_\mathrm{F}^2.
\end{align*}

To incorporate SARQC, we augment this objective with the saliency-aware regularization term in \Cref{eq:rqc_layer}. Specifically, for each layer or block $l$, we optimize
\begin{align*}
\min_{\Theta_1,\Theta_2}\quad
\underbrace{
\bigl\|
\mathcal F(\mathbf W_l,\mathbf X_l)
-
\mathcal F\bigl(\widehat{\mathbf W}_l(\Theta_1,\Theta_2),\,\widehat{\mathbf X}_l(\Theta_2)\bigr)
\bigr\|_\mathrm{F}^2
}_{\mathcal L_{\mathrm{recon}}}
+
\lambda
\underbrace{
\bigl\|
\bigl(\widehat{\mathbf W}_l(\Theta_1,\Theta_2)-\mathbf W_l\bigr)\mathbf S_l
\bigr\|_\mathrm{F}^2
}_{\mathcal L_{\mathrm{sar}}}.
\end{align*}

Here, $\mathbf S_l=\mathrm{diag}(s_l)$ is the saliency matrix, with channel-wise saliency weights defined by
\begin{align*}
s_l^{(j)}
=
\frac{\mathrm{mean}(|\mathbf X_l^{(j)}|)^{1/2}}
{\mathrm{mean}(|\mathbf W_l^{(j)}|)^{1/2}}.
\end{align*}

The reconstruction term follows the original OmniQuant formulation and remains differentiable with respect to $(\Theta_1,\Theta_2)$ via straight-through estimators. The SARQC regularization term is likewise differentiable, since it is applied directly to the dequantized weights $\widehat{\mathbf W}_l(\Theta_1,\Theta_2)$. In practice, we optimize $(\Theta_1,\Theta_2)$ using Adam with the same learning-rate schedule as OmniQuant. The regularization weight $\lambda$ is selected from $\{0.01,0.05,0.1,0.15\}$ based on validation perplexity.

\clearpage
\section{Extra Experimental Results}
\label{appx:extra experiments}

\subsection{Details of \Cref{fig:illustration}\emph{(b)}}
\label{appx:detail_of_figure1}

\Cref{tab:sarqc_gbs_lambda_llama2_7b_w4a16} reports the exact zero-shot accuracies underlying \Cref{fig:illustration}. The two accuracy curves in \Cref{fig:illustration} are obtained from the average accuracies in the last column for \texttt{SARQC-GBS(Identity)} and \texttt{SARQC-GBS(Saliency)}, respectively. The figure also overlays the reconstruction term and the regularization term to illustrate how the trade-off controlled by $\lambda$ affects downstream performance. For visualization, $\mathcal{L}_{\mathrm{recon}}$ and $\mathcal{L}_{\mathrm{sar}}$ are min--max normalized to $[0,1]$ over $\lambda\in\{0.1,0.2,\dots,1.0\}$. Since the raw reconstruction loss at $\lambda=0$ is much larger than the others, directly including it would compress the remaining curve; we therefore plot the point at $\lambda=0$ using the same normalization rule determined from $\lambda\in\{0.1,\dots,1.0\}$. This preserves the outlier nature of $\lambda=0$ while keeping the overall trend visually interpretable. We also conduct the same study for \texttt{SARQC-GS}. The corresponding curve is shown in \Cref{fig:placeholder3}, and the exact zero-shot accuracies for different $\lambda$ are reported in \Cref{tab:sarqc_gs_lambda_llama2_7b_w4a16}.

\begin{figure}[hbtp]
    \centering
    \includegraphics[width=0.7\linewidth]{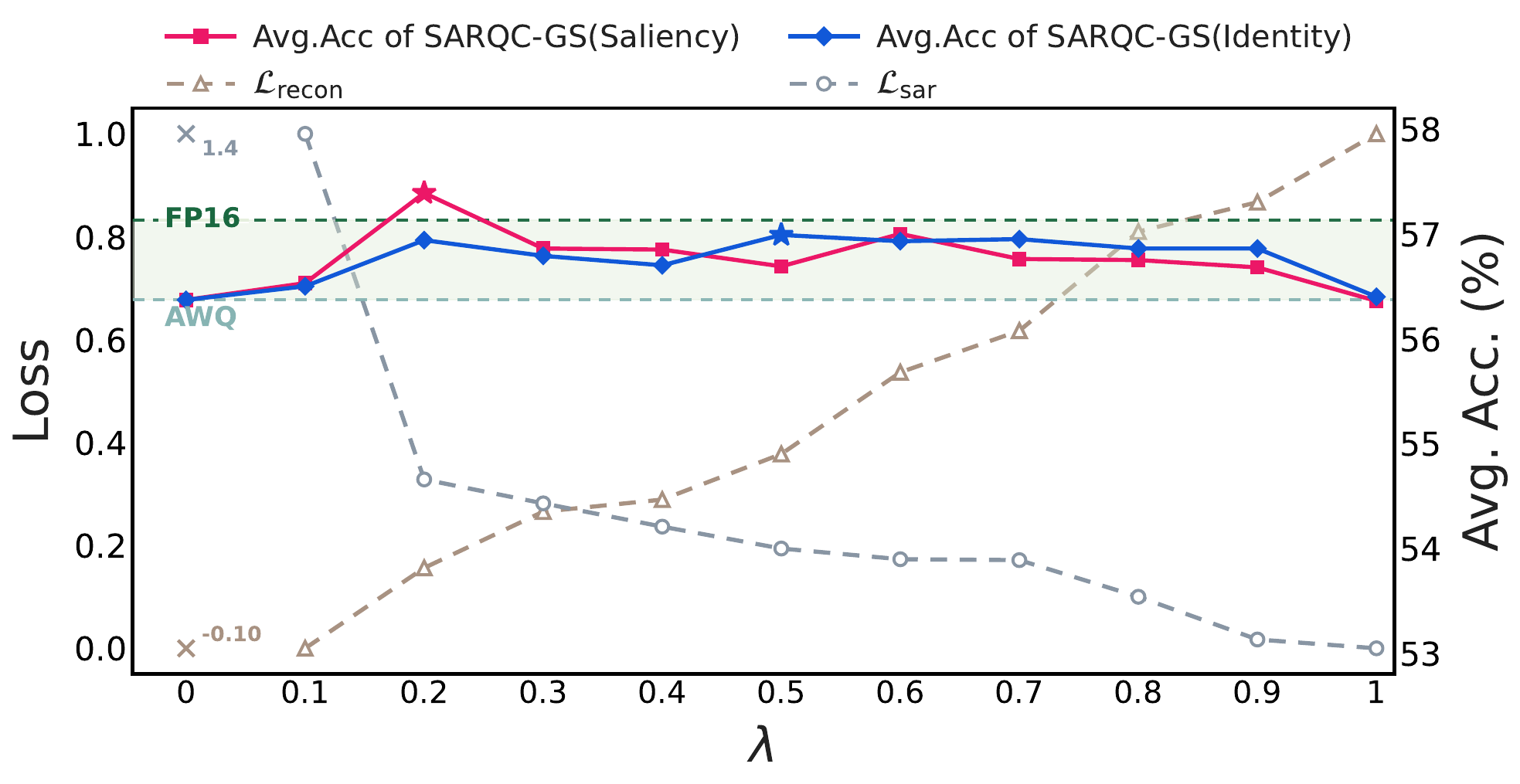}
    \caption{Effect of the regularization strength $\lambda$ for SARQC-GS on LLaMA2-7B under INT4 weight-only quantization.}
    \label{fig:placeholder3}
\end{figure}

\begin{table*}[htbp]
\centering
\caption{Zero-shot accuracy (\%) of SARQC-GBS on LLaMA2-7B under W4A16 for different $\lambda$. For SARQC-GBS(Saliency) $\gamma$ is fixed at $0.5$.}
\label{tab:sarqc_gbs_lambda_llama2_7b_w4a16}
\setlength{\tabcolsep}{4.3pt}
\renewcommand{\arraystretch}{1.05}
\resizebox{\textwidth}{!}{%
\begin{tabular}{l c c c c c c c c c c}
\toprule
Method & $\lambda$ & PIQA & HellaSwag & MMLU & HumanEval & BoolQ & WinoGrande & ARC-E & ARC-C & Avg. \\
\midrule
 FP16& & 78.13 & 57.12 & 41.80 & 12.80 & 79.27 & 69.46 & 75.46 & 43.00 & 57.13 \\
\midrule
 \multirow{11}{*}{SARQC-GBS(Identity)}
& 0 & 77.14 & 54.71 & 31.66 & 11.59 & 75.23 & 67.64 & 73.40 & 40.87 & 54.03 \\
& 0.1 & 77.15 & 56.03 & 39.84 & 10.98 & 76.73 & 67.96 & 74.24 & 41.98 & 55.61 \\
& 0.2 & 77.48 & 55.92 & 37.30 & 11.59 & 77.65 & 69.38 & 74.83 & 41.47 & 55.70 \\
& 0.3 & 77.75 & 55.95 & 40.04 & 14.02 & 77.86 & 69.53 & 75.08 & 41.64 & 56.48 \\
& 0.4 & 77.64 & 55.95 & 39.51 & 13.41 & 76.88 & 70.17 & 75.42 & 41.89 & 56.36 \\
& 0.5 & 77.37 & 56.07 & 38.39 & 12.20 & 77.06 & 68.35 & 75.00 & 41.64 & 55.76 \\
& 0.6 & 77.69 & 56.23 & 37.80 & 11.59 & 77.71 & 69.38 & 75.51 & 42.15 & 56.01 \\
& 0.7 & 77.80 & 56.10 & 39.62 & 11.59 & 77.58 & 69.06 & 75.34 & 41.64 & 56.09 \\
& 0.8 & 77.31 & 56.19 & 36.64 & 10.98 & 75.78 & 68.90 & 75.29 & 42.41 & 55.44 \\
& 0.9 & 77.48 & 56.31 & 38.26 & 12.80 & 76.79 & 69.30 & 75.00 & 42.32 & 56.03 \\
& 1.0 & 77.31 & 56.29 & 38.74 & 10.98 & 76.97 & 68.59 & 75.46 & 41.64 & 55.75 \\
\midrule
\multirow{11}{*}{SARQC-GBS(Saliency)}
& 0 & 77.14 & 54.71 & 31.66 & 11.59 & 75.23 & 67.64 & 73.40 & 40.87 & 54.03 \\
& 0.1 & 77.37 & 55.81 & 38.41 & 10.37 & 77.68 & 70.01 & 74.79 & 43.34 & 55.97 \\
& 0.2 & 77.86 & 56.17 & 40.08 & 11.59 & 78.17 & 69.85 & 75.80 & 43.52 & 56.63 \\
& 0.3 & 78.18 & 56.51 & 40.12 & 14.63 & 78.32 & 69.85 & 75.25 & 42.24 & 56.89 \\
& 0.4 & 78.13 & 56.11 & 39.85 & 14.02 & 77.65 & 69.38 & 74.75 & 43.09 & 56.62 \\
& 0.5 & 78.29 & 55.82 & 40.34 & 10.98 & 77.92 & 69.30 & 74.24 & 42.92 & 56.23 \\
& 0.6 & 77.97 & 55.92 & 39.63 & 11.59 & 77.89 & 69.46 & 75.08 & 43.09 & 56.33 \\
& 0.7 & 78.13 & 55.76 & 39.41 & 10.37 & 77.25 & 68.98 & 75.00 & 43.09 & 56.00 \\
& 0.8 & 77.26 & 56.09 & 36.95 & 9.76 & 77.92 & 69.85 & 74.92 & 42.66 & 55.68 \\
& 0.9 & 77.64 & 55.96 & 34.78 & 11.59 & 76.85 & 69.61 & 75.42 & 42.49 & 55.54 \\
& 1.0 & 77.26 & 55.92 & 34.75 & 10.37 & 76.76 & 69.46 & 75.13 & 41.89 & 55.19 \\
\bottomrule
\end{tabular}}
\end{table*}

\begin{table*}[htbp]
\centering
\caption{Zero-shot accuracy (\%) of SARQC-GS on LLaMA2-7B under W4A16 for different $\lambda$.}
\label{tab:sarqc_gs_lambda_llama2_7b_w4a16}
\setlength{\tabcolsep}{4.3pt}
\renewcommand{\arraystretch}{1.05}
\resizebox{\textwidth}{!}{%
\begin{tabular}{l c c c c c c c c c c}
\toprule
Method & $\lambda$ & PIQA & HellaSwag & MMLU & HumanEval & BoolQ & WinoGrande & ARC-E & ARC-C & Avg. \\
\midrule
 FP16& & 78.13 & 57.12 & 41.80 & 12.80 & 79.27 & 69.46 & 75.46 & 43.00 & 57.13 \\
\midrule
\multirow{11}{*}{SARQC-GS(Identity)}
& 0 & 77.58 & 56.58 & 40.43 & 11.59 & 79.94 & 68.19 & 74.96 & 41.72 & 56.37 \\
& 0.1 & 78.07 & 56.27 & 40.41 & 11.59 & 78.90 & 69.30 & 75.21 & 42.24 & 56.50 \\
& 0.2 & 77.80 & 56.36 & 42.02 & 15.24 & 79.82 & 68.11 & 74.33 & 41.81 & 56.94 \\
& 0.3 & 77.75 & 56.31 & 40.60 & 13.41 & 79.48 & 68.90 & 75.46 & 42.41 & 56.79 \\
& 0.4 & 77.75 & 56.34 & 40.48 & 13.41 & 79.20 & 69.06 & 75.21 & 42.15 & 56.70 \\
& 0.5 & 77.86 & 56.34 & 40.82 & 14.02 & 79.54 & 69.38 & 75.59 & 42.41 & 56.99 \\
& 0.6 & 78.07 & 56.33 & 41.09 & 14.02 & 79.39 & 68.98 & 75.59 & 41.98 & 56.93 \\
& 0.7 & 77.97 & 56.35 & 40.88 & 14.02 & 79.57 & 69.30 & 75.42 & 42.06 & 56.95 \\
& 0.8 & 77.91 & 56.32 & 41.05 & 13.41 & 79.48 & 68.98 & 75.55 & 42.15 & 56.86 \\
& 0.9 & 77.97 & 56.30 & 40.57 & 12.80 & 79.51 & 70.09 & 75.25 & 42.41 & 56.86 \\
& 1.0 & 77.80 & 56.28 & 40.28 & 11.59 & 79.17 & 69.46 & 74.79 & 41.81 & 56.40 \\
\midrule
\multirow{10}{*}{SARQC-GS(Saliency)}
& 0 & 77.58 & 56.58 & 40.43 & 11.59 & 79.94 & 68.19 & 74.96 & 41.72 & 56.37 \\
& 0.1 & 77.86 & 56.26 & 40.44 & 12.20 & 78.96 & 69.06 & 75.25 & 42.24 & 56.53 \\
& 0.2 & 78.02 & 56.33 & 40.89 & 16.46 & 80.15 & 69.22 & 75.76 & 42.32 & 57.39 \\
& 0.3 & 77.80 & 56.29 & 40.67 & 14.02 & 79.42 & 69.06 & 75.38 & 42.24 & 56.86 \\
& 0.4 & 77.91 & 56.35 & 40.59 & 14.63 & 79.20 & 68.75 & 75.21 & 42.15 & 56.85 \\
& 0.5 & 77.80 & 56.28 & 40.89 & 12.20 & 79.66 & 69.22 & 75.51 & 41.98 & 56.69 \\
& 0.6 & 78.13 & 56.35 & 41.00 & 14.63 & 79.33 & 68.82 & 75.67 & 42.06 & 57.00 \\
& 0.7 & 77.86 & 56.28 & 40.91 & 12.80 & 79.51 & 69.22 & 75.42 & 42.06 & 56.76 \\
& 0.8 & 77.69 & 56.32 & 41.05 & 12.80 & 79.48 & 68.98 & 75.55 & 42.15 & 56.75 \\
& 0.9 & 77.80 & 56.28 & 40.44 & 11.59 & 79.57 & 70.09 & 75.25 & 42.41 & 56.68 \\
& 1.0 & 77.91 & 56.26 & 40.17 & 10.98 & 79.24 & 69.61 & 74.92 & 41.81 & 56.36 \\
\bottomrule
\end{tabular}}
\end{table*}

\subsection{Visualization of Weight Drift}
\label{app:3d_weight_diff}

To further support the motivation in \Cref{fig:illustration}, \Cref{fig:3d_weight_diff} visualizes the absolute differences, $|\mathbf{W}_l-\widehat{\mathbf{W}}_l|$, for a representative layer of the INT4-quantized LLaMA2-7B model. The two panels correspond to the $\lambda=0$ and $\lambda=0.3$ cases in \Cref{fig:illustration}\emph{(b)}. When $\lambda=0$, corresponding to vanilla calibration, the quantized weights exhibit larger deviation from the original FP weights, indicating more severe weight drift. With moderate regularization at $\lambda=0.3$, the discrepancy becomes visibly more controlled. This directly illustrates in weight space that the proposed regularizer can constrain $\widehat{\mathbf{W}}_l$ from drifting too far away from $\mathbf{W}_l$.

\begin{figure}[hbtp]
    \centering
    \begin{subfigure}[t]{0.44\linewidth}
        \centering
        \includegraphics[width=\linewidth]{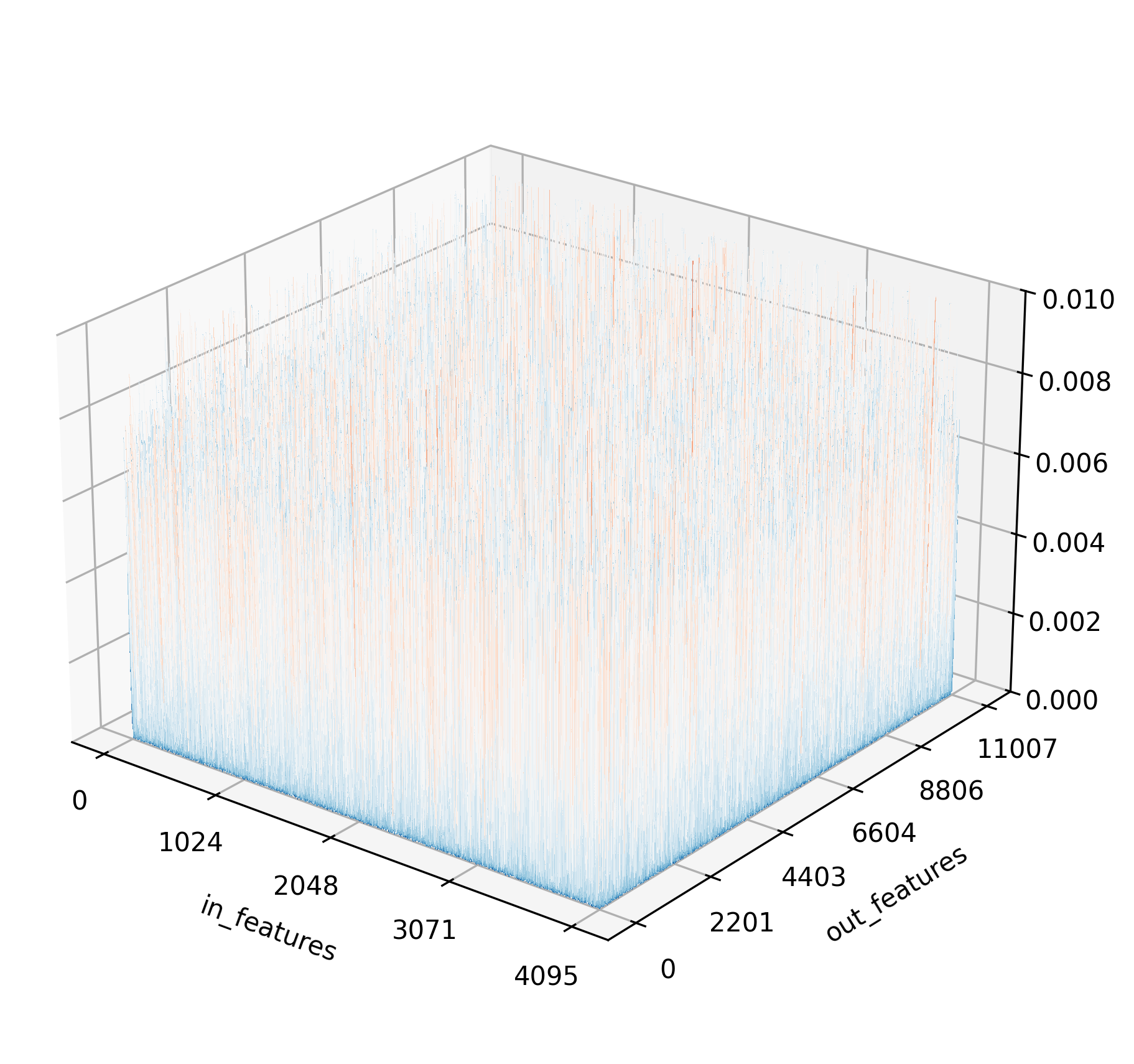}
        \caption{$\lambda=0$}
    \end{subfigure}\hfill
    \begin{subfigure}[t]{0.44\linewidth}
        \centering
        \includegraphics[width=\linewidth]{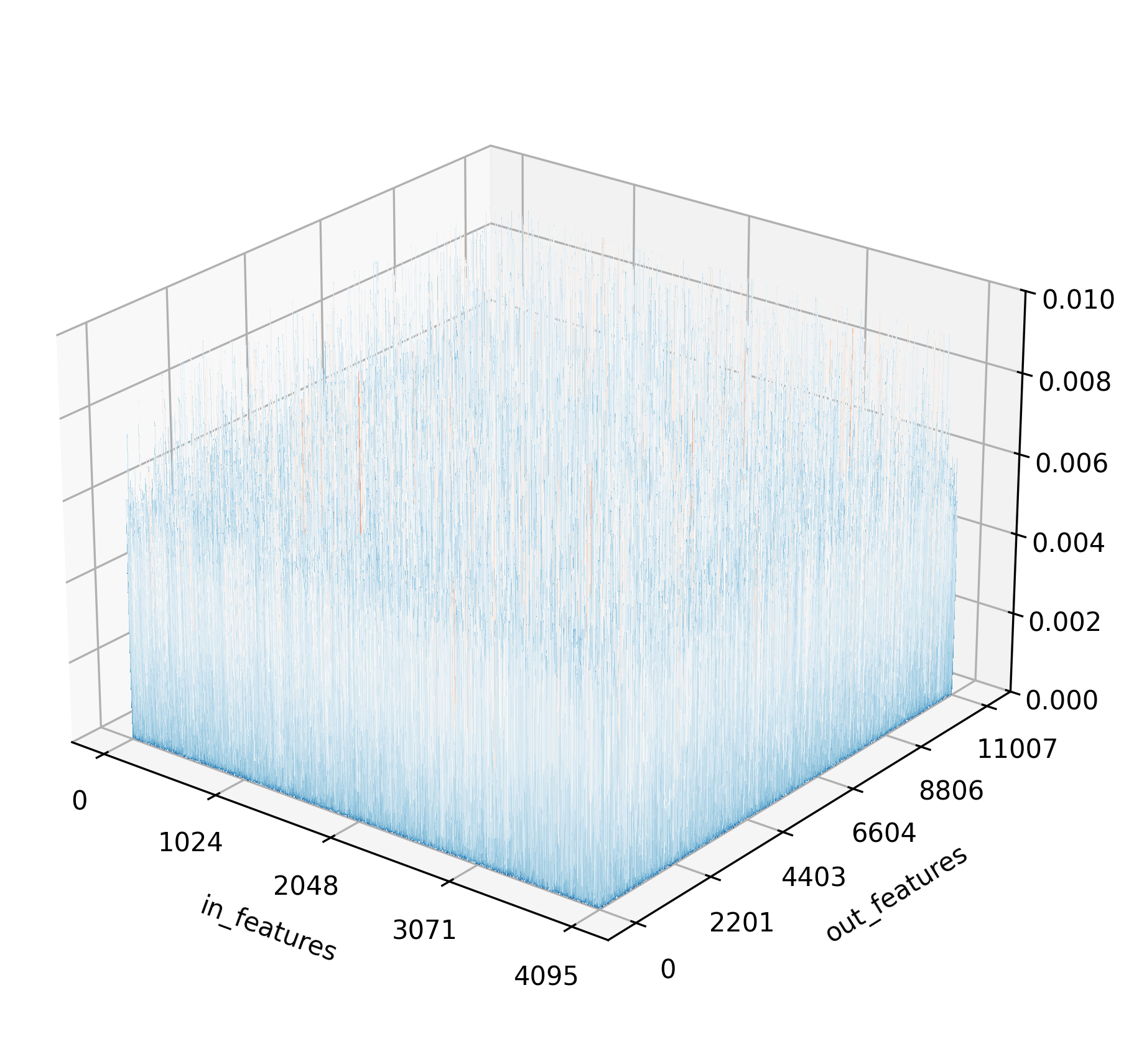}
        \caption{$\lambda=0.3$}
    \end{subfigure}
    \caption{Visualization of the element-wise weight discrepancy $|\mathbf{W}_l-\widehat{\mathbf{W}}_l|$ for a representative layer of INT4-quantized LLaMA2-7B under different regularization strengths. The two panels correspond to the $\lambda=0$ and $\lambda=0.3$ cases in \Cref{fig:illustration}\emph{(b)}.}
    \label{fig:3d_weight_diff}
\end{figure}

\subsection{Results under W3A16}

\Cref{tab:rqc_zeroshot_acc_w3a16_others} reports zero-shot accuracy under W3A16 for both dense and MoE models. For W3A16, we focus on seven non-code zero-shot benchmarks and exclude HumanEval from the averaged score, since code-generation performance becomes highly unstable and substantially degraded under this aggressive quantization setting. Overall, \texttt{SARQC-GBS} consistently improves over strong Gram-based PTQ baselines under this more aggressive quantization setting. In particular, \texttt{SARQC-GBS(Saliency)} achieves the best average accuracy among quantized methods on four out of the five evaluated models, while \texttt{SARQC-GBS(Identity)} performs best on the remaining model. Comparing the two SARQC variants, the saliency-aware version tends to outperform the identity-based one on most dense models and on two of the three MoE models, indicating that saliency-aware regularization is generally helpful under low-bit weight-only quantization.

\begin{table*}[htbp]
\centering
\caption{Zero-shot accuracy (\%) on multiple benchmarks under W3A16.}
\label{tab:rqc_zeroshot_acc_w3a16_others}
\setlength{\tabcolsep}{5.0pt}
\renewcommand{\arraystretch}{1.06}
\definecolor{rqcrow}{RGB}{244,220,217}

\resizebox{1.\textwidth}{!}{%
\begin{tabular}{l l c c c c c c c c}
\toprule
Model & Method & PIQA & HellaSwag & MMLU & BoolQ & WinoGrande & ARC-E & ARC-C & Avg. \\
\midrule

\multirow{5}{*}{LLaMA2-7B}
& FP16      & 78.13 & 57.12 & 41.80 & 79.27 & 69.46 & 75.46 & 43.00 & 63.46 \\
& GPTQ      & 73.94 & 51.48 & 29.55 & 69.42 & 64.88 & 68.69 & 35.49 & 56.21 \\
& GPTAQ     & 74.54 & 52.17 & 29.01 & 70.76 & 66.14 & 67.55 & 34.81 & 56.43 \\
& \cellcolor[RGB]{234, 239, 247}SARQC-GBS(Identity)  & \cellcolor[RGB]{234, 239, 247}75.19 & \cellcolor[RGB]{234, 239, 247}52.65 & \cellcolor[RGB]{234, 239, 247}32.32 & \cellcolor[RGB]{234, 239, 247}72.94 & \cellcolor[RGB]{234, 239, 247}67.56 & \cellcolor[RGB]{234, 239, 247}70.50 & \cellcolor[RGB]{234, 239, 247}37.88 & \cellcolor[RGB]{234, 239, 247}58.43 \\
& \cellcolor[RGB]{210, 222, 239}SARQC-GBS(Saliency) & \cellcolor[RGB]{210, 222, 239}76.33 & \cellcolor[RGB]{210, 222, 239}53.18 & \cellcolor[RGB]{210, 222, 239}33.60 & \cellcolor[RGB]{210, 222, 239}69.66 & \cellcolor[RGB]{210, 222, 239}68.51 & \cellcolor[RGB]{210, 222, 239}70.71 & \cellcolor[RGB]{210, 222, 239}39.16 & \cellcolor[RGB]{210, 222, 239}\textbf{58.74} \\
\midrule

\multirow{5}{*}{LLaMA2-13B}
& FP16      & 79.49 & 60.21 & 52.41 & 82.11 & 72.53 & 78.96 & 47.35 & 67.58 \\
& GPTQ & 73.88 & 52.54 & 37.85 & 75.84 & 62.27 & 70.37 & 38.40 & 58.74 \\
& GPTAQ & 74.27 & 52.71 & 38.07 & 75.72 & 62.90 & 70.88 & 38.65 & 59.03 \\
& \cellcolor[RGB]{234, 239, 247}SARQC-GBS(Identity)
& \cellcolor[RGB]{234, 239, 247}74.37
& \cellcolor[RGB]{234, 239, 247}52.77
& \cellcolor[RGB]{234, 239, 247}39.21
& \cellcolor[RGB]{234, 239, 247}75.78
& \cellcolor[RGB]{234, 239, 247}63.14
& \cellcolor[RGB]{234, 239, 247}71.13
& \cellcolor[RGB]{234, 239, 247}38.99
& \cellcolor[RGB]{234, 239, 247}59.34 \\
& \cellcolor[RGB]{210, 222, 239}SARQC-GBS(Saliency) & \cellcolor[RGB]{210, 222, 239}74.97 & \cellcolor[RGB]{210, 222, 239}53.35 & \cellcolor[RGB]{210, 222, 239}40.24 & \cellcolor[RGB]{210, 222, 239}76.02 & \cellcolor[RGB]{210, 222, 239}64.48 & \cellcolor[RGB]{210, 222, 239}72.18 & \cellcolor[RGB]{210, 222, 239}39.76 & \cellcolor[RGB]{210, 222, 239}\textbf{60.14} \\
\midrule

\multirow{4}{*}{DeepSeek-MoE-16B}
& BF16      & 78.73 & 58.10 & 38.21 & 74.07 & 70.01 & 74.87 & 43.94 & 62.56 \\
& GPTQ      & 77.80 & 54.64 & 31.31 & 70.86 & 68.82 & 69.78 & 36.60 & 58.54 \\
& \cellcolor[RGB]{234, 239, 247}SARQC-GBS(Identity) & \cellcolor[RGB]{234, 239, 247}77.31 & \cellcolor[RGB]{234, 239, 247}54.92 & \cellcolor[RGB]{234, 239, 247}30.69 & \cellcolor[RGB]{234, 239, 247}72.23 & \cellcolor[RGB]{234, 239, 247}69.06 & \cellcolor[RGB]{234, 239, 247}71.97 & \cellcolor[RGB]{234, 239, 247}40.02 & \cellcolor[RGB]{234, 239, 247}\textbf{59.46} \\
& \cellcolor[RGB]{210, 222, 239}SARQC-GBS(Saliency) & \cellcolor[RGB]{210, 222, 239}77.91 & \cellcolor[RGB]{210, 222, 239}55.52 & \cellcolor[RGB]{210, 222, 239}32.84 & \cellcolor[RGB]{210, 222, 239}72.66 & \cellcolor[RGB]{210, 222, 239}67.80 & \cellcolor[RGB]{210, 222, 239}71.30 & \cellcolor[RGB]{210, 222, 239}37.20 & \cellcolor[RGB]{210, 222, 239}59.32 \\
\midrule

\multirow{4}{*}{Qwen3-MoE-30B}
& BF16      & 79.82 & 62.35 & 78.76 & 80.61 & 72.14 & 80.05 & 54.01 & 72.53 \\
& GPTQ      & 77.09 & 57.76 & 62.14 & 81.25 & 70.64 & 74.28 & 45.65 & 66.97 \\
& \cellcolor[RGB]{234, 239, 247}SARQC-GBS(Identity) & \cellcolor[RGB]{234, 239, 247}78.51 & \cellcolor[RGB]{234, 239, 247}58.35 & \cellcolor[RGB]{234, 239, 247}74.27 & \cellcolor[RGB]{234, 239, 247}80.12 & \cellcolor[RGB]{234, 239, 247}71.82 & \cellcolor[RGB]{234, 239, 247}78.32 & \cellcolor[RGB]{234, 239, 247}47.53 & \cellcolor[RGB]{234, 239, 247}69.85 \\
& \cellcolor[RGB]{210, 222, 239}SARQC-GBS(Saliency) & \cellcolor[RGB]{210, 222, 239}79.00 & \cellcolor[RGB]{210, 222, 239}59.56 & \cellcolor[RGB]{210, 222, 239}73.36 & \cellcolor[RGB]{210, 222, 239}81.90 & \cellcolor[RGB]{210, 222, 239}68.67 & \cellcolor[RGB]{210, 222, 239}79.21 & \cellcolor[RGB]{210, 222, 239}50.00 & \cellcolor[RGB]{210, 222, 239}\textbf{70.24} \\
\midrule

\multirow{4}{*}{Mixtral-8x7B}
& BF16      & 82.54 & 65.10 & 68.12 & 85.90 & 77.19 & 83.71 & 56.74 & 74.19 \\
& GPTQ      & 80.36 & 59.74 & 57.21 & 82.11 & 73.88 & 77.99 & 48.12 & 68.49 \\
& \cellcolor[RGB]{234, 239, 247}SARQC-GBS(Identity) & \cellcolor[RGB]{234, 239, 247}79.87 & \cellcolor[RGB]{234, 239, 247}59.96 & \cellcolor[RGB]{234, 239, 247}60.83 & \cellcolor[RGB]{234, 239, 247}82.35 & \cellcolor[RGB]{234, 239, 247}73.88 & \cellcolor[RGB]{234, 239, 247}79.00 & \cellcolor[RGB]{234, 239, 247}48.21 & \cellcolor[RGB]{234, 239, 247}69.16 \\
& \cellcolor[RGB]{210, 222, 239}SARQC-GBS(Saliency) & \cellcolor[RGB]{210, 222, 239}81.45 & \cellcolor[RGB]{210, 222, 239}61.29 & \cellcolor[RGB]{210, 222, 239}61.29 & \cellcolor[RGB]{210, 222, 239}82.60 & \cellcolor[RGB]{210, 222, 239}74.66 & \cellcolor[RGB]{210, 222, 239}80.60 & \cellcolor[RGB]{210, 222, 239}49.91 & \cellcolor[RGB]{210, 222, 239}\textbf{70.26} \\
\bottomrule
\end{tabular}
}
\end{table*}

\subsection{Results on LLaMA-7B, LLaMA-13B, and LLaMA-30B}
\label{appx:detail_of_llama1}
\Cref{tab:rqc_zeroshot_acc_llama1_merged} reports zero-shot accuracy for LLaMA-7B, LLaMA-13B, and LLaMA-30B under both W4A16 and W3A16. Overall, SARQC yields consistent improvements over standard PTQ baselines across model scales, with the gains being more pronounced under the more challenging W3A16 setting.
\begin{table*}[htbp]
\centering
\caption{Zero-shot accuracy (\%) on multiple benchmarks for LLaMA models under W4A16 and W3A16.}
\label{tab:rqc_zeroshot_acc_llama1_merged}
\setlength{\tabcolsep}{4.6pt}
\renewcommand{\arraystretch}{1.05}
\definecolor{rqcrow}{RGB}{244,220,217}

\resizebox{\textwidth}{!}{%
\begin{tabular}{l l l c c c c c c c c c}
\toprule
Model & Precision & Method & PIQA & HellaSwag & MMLU & HumanEval & BoolQ & WinoGrande & ARC-E & ARC-C & Avg. \\
\midrule

\multirow{13}{*}{LLaMA-7B}
& \multirow{8}{*}{W4A16}
& FP16      & 78.73 & 57.10 & 32.22 & 13.41 & 76.79 & 70.17 & 75.55 & 42.41 & 55.80 \\
&& AWQ       & 77.58 & 56.32 & 32.00 & 8.54  & 76.57 & 68.82 & 74.96 & 41.55 & 54.54 \\
&& GPTQ      & 77.97 & 55.87 & 28.66 & 7.93  & 76.02 & 68.51 & 72.56 & 41.30 & 53.60 \\
&& GPTAQ     & 78.84 & 56.12 & 29.78 & 7.93  & 75.72 & 68.19 & 73.65 & 41.04 & 53.91 \\
&& \cellcolor{identityrow}SARQC-GS(Identity)   & \cellcolor{identityrow}78.73 & \cellcolor{identityrow}56.41 & \cellcolor{identityrow}32.52 & \cellcolor{identityrow}12.20 & \cellcolor{identityrow}76.39 & \cellcolor{identityrow}68.59 & \cellcolor{identityrow}75.04 & \cellcolor{identityrow}41.81 & \cellcolor{identityrow}55.21 \\
&& \cellcolor{saliencyrow}SARQC-GS(Saliency)   & \cellcolor{saliencyrow}78.62 & \cellcolor{saliencyrow}56.47 & \cellcolor{saliencyrow}32.47 & \cellcolor{saliencyrow}10.98 & \cellcolor{saliencyrow}76.79 & \cellcolor{saliencyrow}69.38 & \cellcolor{saliencyrow}75.51 & \cellcolor{saliencyrow}42.15 & \cellcolor{saliencyrow}55.30 \\
&& \cellcolor{identityrow}SARQC-GBS(Identity)  & \cellcolor{identityrow}79.16 & \cellcolor{identityrow}56.26 & \cellcolor{identityrow}32.21 & \cellcolor{identityrow}8.54 & \cellcolor{identityrow}76.51 & \cellcolor{identityrow}68.35 & \cellcolor{identityrow}73.86 & \cellcolor{identityrow}41.72 & \cellcolor{identityrow}54.58 \\
&& \cellcolor{saliencyrow}SARQC-GBS(Saliency)  & \cellcolor{saliencyrow}80.03 & \cellcolor{saliencyrow}56.82 & \cellcolor{saliencyrow}33.02 & \cellcolor{saliencyrow}10.98 & \cellcolor{saliencyrow}77.06 & \cellcolor{saliencyrow}69.77 & \cellcolor{saliencyrow}73.70 & \cellcolor{saliencyrow}41.47 & \cellcolor{saliencyrow}\textbf{55.36} \\
\cmidrule(lr){3-12}
& \multirow{5}{*}{W3A16}
& FP16      & 78.73 & 57.10 & 32.22 & -- & 76.79 & 70.17 & 75.55 & 42.41 & 61.85 \\
&& GPTQ      & 75.35 & 51.80 & 26.46 & -- & 72.60 & 65.11 & 71.59 & 38.31 & 57.32 \\
&& GPTAQ     & 76.28 & 53.02 & 25.49 & -- & 71.93 & 67.88 & 70.71 & 38.57 & 57.70 \\
&& \cellcolor{identityrow}SARQC-GBS(Identity)  & \cellcolor{identityrow}75.90 & \cellcolor{identityrow}53.33 & \cellcolor{identityrow}27.60 & \cellcolor{identityrow}-- & \cellcolor{identityrow}71.83 & \cellcolor{identityrow}67.17 & \cellcolor{identityrow}70.62 & \cellcolor{identityrow}38.05 & \cellcolor{identityrow}57.79 \\
&& \cellcolor{saliencyrow}SARQC-GBS(Saliency)  & \cellcolor{saliencyrow}77.15 & \cellcolor{saliencyrow}53.81 & \cellcolor{saliencyrow}26.74 & \cellcolor{saliencyrow}-- & \cellcolor{saliencyrow}72.14 & \cellcolor{saliencyrow}67.64 & \cellcolor{saliencyrow}71.13 & \cellcolor{saliencyrow}38.74 & \cellcolor{saliencyrow}\textbf{58.19} \\
\midrule

\multirow{13}{*}{LLaMA-13B}
& \multirow{8}{*}{W4A16}
& FP16      & 79.65 & 60.04 & 44.02 & 15.24 & 79.82 & 73.56 & 77.31 & 45.90 & 59.44 \\
&& AWQ       & 78.94 & 59.47 & 42.42 & 13.41 & 79.36 & 72.22 & 77.15 & 44.80 & 58.47 \\
&& GPTQ      & 79.38 & 59.66 & 40.77 & 14.02 & 78.47 & 71.27 & 76.39 & 45.39 & 58.17 \\
&& GPTAQ     & 77.91 & 59.45 & 41.19 & 10.98 & 79.39 & 71.43 & 76.89 & 46.08 & 57.91 \\
&& \cellcolor{identityrow}SARQC-GS(Identity)   & \cellcolor{identityrow}79.00 & \cellcolor{identityrow}59.57 & \cellcolor{identityrow}42.87 & \cellcolor{identityrow}14.63 & \cellcolor{identityrow}78.99 & \cellcolor{identityrow}71.74 & \cellcolor{identityrow}76.60 & \cellcolor{identityrow}45.05 & \cellcolor{identityrow}58.56 \\
&& \cellcolor{saliencyrow}SARQC-GS(Saliency)   & \cellcolor{saliencyrow}79.22 & \cellcolor{saliencyrow}59.94 & \cellcolor{saliencyrow}42.93 & \cellcolor{saliencyrow}15.85 & \cellcolor{saliencyrow}79.94 & \cellcolor{saliencyrow}72.85 & \cellcolor{saliencyrow}77.69 & \cellcolor{saliencyrow}45.82 & \cellcolor{saliencyrow}\textbf{59.28} \\
&& \cellcolor{identityrow}SARQC-GBS(Identity)  & \cellcolor{identityrow}79.38 & \cellcolor{identityrow}59.26 & \cellcolor{identityrow}42.23 & \cellcolor{identityrow}14.63 & \cellcolor{identityrow}78.50 & \cellcolor{identityrow}71.67 & \cellcolor{identityrow}77.31 & \cellcolor{identityrow}45.99 & \cellcolor{identityrow}58.62 \\
&& \cellcolor{saliencyrow}SARQC-GBS(Saliency)  & \cellcolor{saliencyrow}79.92 & \cellcolor{saliencyrow}59.46 & \cellcolor{saliencyrow}42.52 & \cellcolor{saliencyrow}14.63 & \cellcolor{saliencyrow}79.20 & \cellcolor{saliencyrow}72.53 & \cellcolor{saliencyrow}77.31 & \cellcolor{saliencyrow}45.65 & \cellcolor{saliencyrow}58.90 \\
\cmidrule(lr){3-12}
& \multirow{5}{*}{W3A16}
& FP16      & 79.65 & 60.04 & 44.02 & -- & 79.82 & 73.56 & 77.31 & 45.90 & 65.76 \\
&& GPTQ      & 78.07 & 57.79 & 38.19 & -- & 76.02 & 69.69 & 75.00 & 43.09 & 62.55 \\
&& GPTAQ     & 77.64 & 54.29 & 30.49 & -- & 70.89 & 70.09 & 71.34 & 40.44 & 59.31 \\
&& \cellcolor{identityrow}SARQC-GBS(Identity)  & \cellcolor{identityrow}78.02 & \cellcolor{identityrow}56.19 & \cellcolor{identityrow}37.04 & \cellcolor{identityrow}-- & \cellcolor{identityrow}76.27 & \cellcolor{identityrow}71.43 & \cellcolor{identityrow}74.83 & \cellcolor{identityrow}43.52 & \cellcolor{identityrow}62.47 \\
&& \cellcolor{saliencyrow}SARQC-GBS(Saliency)  & \cellcolor{saliencyrow}78.02 & \cellcolor{saliencyrow}57.55 & \cellcolor{saliencyrow}38.96 & \cellcolor{saliencyrow}-- & \cellcolor{saliencyrow}78.26 & \cellcolor{saliencyrow}70.88 & \cellcolor{saliencyrow}75.51 & \cellcolor{saliencyrow}43.60 & \cellcolor{saliencyrow}\textbf{63.25} \\
\midrule

\multirow{13}{*}{LLaMA-30B}
& \multirow{8}{*}{W4A16}
& FP16      & 80.69 & 63.38 & 54.45 & 20.73 & 83.79 & 75.93 & 80.64 & 51.54 & 63.89 \\
&& AWQ       & 80.69 & 62.93 & 54.12 & 20.73 & 83.70 & 76.40 & 80.26 & 50.94 & 63.72 \\
&& GPTQ      & 80.14 & 62.78 & 53.11 & 18.90 & 83.18 & 76.24 & 80.05 & 50.09 & 63.06 \\
&& GPTAQ      & 80.25 & 62.72 & 53.51 & 21.34 & 83.36 & 76.56 & 80.39 & 50.17 & 63.54 \\
&& \cellcolor{identityrow}SARQC-GS(Identity)   & \cellcolor{identityrow}80.79 & \cellcolor{identityrow}62.94 & \cellcolor{identityrow}54.00 & \cellcolor{identityrow}20.73 & \cellcolor{identityrow}83.49 & \cellcolor{identityrow}75.93 & \cellcolor{identityrow}80.22 & \cellcolor{identityrow}50.94 & \cellcolor{identityrow}63.63 \\
&& \cellcolor{saliencyrow}SARQC-GS(Saliency)  & \cellcolor{saliencyrow}80.69 & \cellcolor{saliencyrow}63.17 & \cellcolor{saliencyrow}54.12 & \cellcolor{saliencyrow}20.73 & \cellcolor{saliencyrow}84.07 & \cellcolor{saliencyrow}76.24 & \cellcolor{saliencyrow}80.60 & \cellcolor{saliencyrow}51.02 & \cellcolor{saliencyrow}\textbf{63.83} \\
&& \cellcolor{identityrow}SARQC-GBS(Identity)  & \cellcolor{identityrow}80.58 & \cellcolor{identityrow}62.97 & \cellcolor{identityrow}53.71 & \cellcolor{identityrow}18.90 & \cellcolor{identityrow}83.12 & \cellcolor{identityrow}75.37 & \cellcolor{identityrow}79.80 & \cellcolor{identityrow}50.09 & \cellcolor{identityrow}63.07 \\
&& \cellcolor{saliencyrow}SARQC-GBS(Saliency)  & \cellcolor{saliencyrow}80.96 & \cellcolor{saliencyrow}62.90 & \cellcolor{saliencyrow}53.55 & \cellcolor{saliencyrow}20.73 & \cellcolor{saliencyrow}83.70 & \cellcolor{saliencyrow}75.45 & \cellcolor{saliencyrow}80.09 & \cellcolor{saliencyrow}50.26 & \cellcolor{saliencyrow}63.45 \\
\cmidrule(lr){3-12}
& \multirow{5}{*}{W3A16}
& FP16      & 80.69 & 63.38 & 54.45 & -- & 83.79 & 75.93 & 80.64 & 51.54 & 70.06 \\
&& GPTQ      & 78.07 & 60.41 & 50.05 & -- & 80.40 & 73.09 & 78.49 & 46.67 & 66.74 \\
&& GPTAQ     & 78.84 & 60.60 & 49.39 & -- & 80.70 & 73.88 & 78.70 & 47.35 & 67.07 \\
&& \cellcolor{identityrow}SARQC-GBS(Identity)  & \cellcolor{identityrow}80.03 & \cellcolor{identityrow}61.27 & \cellcolor{identityrow}51.03 & \cellcolor{identityrow}-- & \cellcolor{identityrow}81.22 & \cellcolor{identityrow}75.06 & \cellcolor{identityrow}79.00 & \cellcolor{identityrow}47.44 & \cellcolor{identityrow}67.86 \\
&& \cellcolor{saliencyrow}SARQC-GBS(Saliency)  & \cellcolor{saliencyrow}80.14 & \cellcolor{saliencyrow}62.77 & \cellcolor{saliencyrow}51.52 & \cellcolor{saliencyrow}-- & \cellcolor{saliencyrow}82.45 & \cellcolor{saliencyrow}74.74 & \cellcolor{saliencyrow}79.46 & \cellcolor{saliencyrow}47.61 & \cellcolor{saliencyrow}\textbf{68.38} \\

\bottomrule
\end{tabular}
}
\end{table*}

\subsection{Ablation Study on Calibration Size}
\Cref{tab:sens to calibration data size} examines the sensitivity of SARQC to the calibration set size. Overall, \texttt{SARQC-GBS(Saliency)} remains stable across different calibration sizes and consistently achieves strong zero-shot performance, even when only a small number of calibration samples are available. This suggests that the proposed method is relatively robust to limited calibration data.

\definecolor{identityrow}{RGB}{234, 239, 247}
\definecolor{saliencyrow}{RGB}{210, 222, 239}

\subsection{Ablation Study on Calibration Corpus}

\Cref{tab:sens to calibration data} studies the sensitivity of the proposed methods to the choice of calibration corpus for LLaMA2-7B under W4A16. We report zero-shot accuracy on eight downstream benchmarks using two calibration datasets, C4 and Pile. Overall, the SARQC variants consistently outperform their corresponding baselines across both calibration corpora. Comparing identity-based and saliency-aware regularization, the saliency-aware variants are generally better in all four SARQC settings reported in the table. This supports the view that saliency-aware weighting helps preserve more important channels and improves generalization across different calibration data distributions.

\definecolor{identityrow}{RGB}{234, 239, 247}
\definecolor{saliencyrow}{RGB}{210, 222, 239}

\begin{table*}[h]
\centering
\caption{Sensitivity to the sample size of the calibration set: zero-shot accuracy (\%) on multiple benchmarks for LLaMA2-13B.}
\setlength{\tabcolsep}{3.6pt}
\renewcommand{\arraystretch}{1.10}
\resizebox{\textwidth}{!}{%
\begin{tabular}{ll l c c c c c c c c}
\toprule
Precision & Calib. Size & Method
& \textsc{PIQA} & \textsc{HellaSwag} & \textsc{MMLU} & \textsc{BoolQ}
& \textsc{WinoGrande} & \textsc{ARC-E} & \textsc{ARC-C} & Avg. (\%) \\
\midrule
FP16 &  &FP16     & 79.49 & 60.21 & 52.41 & 82.11 & 72.53 & 78.96 & 47.35 & 67.58 \\
\midrule
\multirow{8}{*}{W2A16}
& \multirow{2}{*}{16}
& GPTQ
& 51.41 & 25.91 & 25.20 & 37.89 & 49.09 & 25.72 & 21.50 & 33.82 \\
&
& \cellcolor{saliencyrow}SARQC-GBS(Saliency)
& \cellcolor{saliencyrow}52.88 & \cellcolor{saliencyrow}26.61 & \cellcolor{saliencyrow}26.29 & \cellcolor{saliencyrow}39.63 & \cellcolor{saliencyrow}51.46 & \cellcolor{saliencyrow}26.81 & \cellcolor{saliencyrow}23.21 & \cellcolor{saliencyrow}\textbf{35.27} \\
\cmidrule(lr){2-11}

& \multirow{2}{*}{32}
& GPTQ
& 52.50 & 25.73 & 24.85 & 41.68 & 50.67 & 25.67 & 21.16 & 34.61 \\
&
& \cellcolor{saliencyrow}SARQC-GBS(Saliency)
& \cellcolor{saliencyrow}52.77 & \cellcolor{saliencyrow}26.39 & \cellcolor{saliencyrow}26.55 & \cellcolor{saliencyrow}43.30 & \cellcolor{saliencyrow}51.70 & \cellcolor{saliencyrow}26.52 & \cellcolor{saliencyrow}23.46 & \cellcolor{saliencyrow}\textbf{35.81} \\
\cmidrule(lr){2-11}

& \multirow{2}{*}{64}
& GPTQ
& 52.12 & 26.10 & 24.85 & 45.84 & 49.72 & 25.93 & 20.82 & 35.05 \\
&
& \cellcolor{saliencyrow}SARQC-GBS(Saliency)
& \cellcolor{saliencyrow}52.50 & \cellcolor{saliencyrow}26.73 & \cellcolor{saliencyrow}26.74 & \cellcolor{saliencyrow}48.07 & \cellcolor{saliencyrow}50.91 & \cellcolor{saliencyrow}27.10 & \cellcolor{saliencyrow}21.67 & \cellcolor{saliencyrow}\textbf{36.25} \\
\cmidrule(lr){2-11}

& \multirow{2}{*}{128}
& GPTQ
& 51.80 & 26.17 & 24.99 & 45.54 & 49.33 & 25.59 & 20.56 & 34.85 \\
&
& \cellcolor{saliencyrow}SARQC-GBS(Saliency)
& \cellcolor{saliencyrow}52.29 & \cellcolor{saliencyrow}26.78 & \cellcolor{saliencyrow}26.79 & \cellcolor{saliencyrow}48.26 & \cellcolor{saliencyrow}51.22 & \cellcolor{saliencyrow}27.19 & \cellcolor{saliencyrow}21.93 & \cellcolor{saliencyrow}\textbf{36.35} \\
\midrule

\multirow{8}{*}{W3A16}
& \multirow{2}{*}{16}
& GPTQ
& 73.34 & 51.79 & 36.17 & 74.77 & 61.25 & 69.53 & 37.97 & 57.83 \\
&
& \cellcolor{saliencyrow}SARQC-GBS(Saliency)
& \cellcolor{saliencyrow}75.03 & \cellcolor{saliencyrow}52.56 & \cellcolor{saliencyrow}37.27 & \cellcolor{saliencyrow}76.64 & \cellcolor{saliencyrow}63.54 & \cellcolor{saliencyrow}72.52 & \cellcolor{saliencyrow}40.02 & \cellcolor{saliencyrow}\textbf{59.65} \\
\cmidrule(lr){2-11}

& \multirow{2}{*}{32}
& GPTQ
& 72.52 & 52.05 & 38.97 & 73.67 & 61.48 & 71.68 & 38.99 & 58.48 \\
&
& \cellcolor{saliencyrow}SARQC-GBS(Saliency)
& \cellcolor{saliencyrow}73.83 & \cellcolor{saliencyrow}52.30 & \cellcolor{saliencyrow}40.06 & \cellcolor{saliencyrow}76.33 & \cellcolor{saliencyrow}63.61 & \cellcolor{saliencyrow}73.82 & \cellcolor{saliencyrow}40.87 & \cellcolor{saliencyrow}\textbf{60.12} \\
\cmidrule(lr){2-11}

& \multirow{2}{*}{64}
& GPTQ
& 73.94 & 51.57 & 37.53 & 76.61 & 62.67 & 72.43 & 38.91 & 59.09 \\
&
& \cellcolor{saliencyrow}SARQC-GBS(Saliency)
& \cellcolor{saliencyrow}75.08 & \cellcolor{saliencyrow}53.39 & \cellcolor{saliencyrow}37.55 & \cellcolor{saliencyrow}78.99 & \cellcolor{saliencyrow}64.64 & \cellcolor{saliencyrow}73.06 & \cellcolor{saliencyrow}40.10 & \cellcolor{saliencyrow}\textbf{60.40} \\
\cmidrule(lr){2-11}

& \multirow{2}{*}{128}
& GPTQ & 73.88 & 52.54 & 37.85 & 75.84 & 62.27 & 70.37 & 38.40 & 58.74 \\
&
& \cellcolor[RGB]{210, 222, 239}SARQC-GBS(Saliency) & \cellcolor[RGB]{210, 222, 239}74.97 & \cellcolor[RGB]{210, 222, 239}53.35 & \cellcolor[RGB]{210, 222, 239}40.24 & \cellcolor[RGB]{210, 222, 239}76.02 & \cellcolor[RGB]{210, 222, 239}64.48 & \cellcolor[RGB]{210, 222, 239}72.18 & \cellcolor[RGB]{210, 222, 239}39.76 & \cellcolor[RGB]{210, 222, 239}\textbf{60.14} \\
\bottomrule
\end{tabular}
}
\label{tab:sens to calibration data size}
\end{table*}

\begin{table*}[h]
\centering
\caption{Sensitivity to the choice of the calibration set: zero-shot accuracy (\%) on multiple benchmarks under W4A16 for LLaMA2-7B models.}
\label{tab:sens to calibration data}
\setlength{\tabcolsep}{5.0pt}
\renewcommand{\arraystretch}{1.06}

\resizebox{0.9\textwidth}{!}{%
\begin{tabular}{l ll c c c c c c c c c}
\toprule
Calibration & Precision & Method & PIQA & HellaSwag & MMLU & HumanEval & BoolQ & WinoGrande & ARC-E & ARC-C & Avg. \\
\midrule

& FP16 & FP16      & 78.13 & 57.12 & 41.80 & 12.80 & 79.27 & 69.46 & 75.46 & 43.00 & 57.13 \\
\midrule

\multirow{7}{*}{C4}
 & \multirow{7}{*}{W4A16}
 & AWQ       & 77.69 & 56.70 & 40.92 & 12.20 & 79.02 & 68.90 & 75.42 & 41.89 & 56.59 \\
 && GPTQ      & 77.58 & 56.20 & 39.05 & 13.41 & 78.87 & 69.14 & 74.37 & 41.55 & 56.27 \\
 && GPTAQ     & 76.99 & 56.05 & 39.46 & 10.98 & 77.31 & 68.43 & 74.66 & 40.87 & 55.59 \\
 && \cellcolor{identityrow}SARQC-GS(Identity)  
    & \cellcolor{identityrow}78.07 & \cellcolor{identityrow}56.62 & \cellcolor{identityrow}41.70
    & \cellcolor{identityrow}15.85 & \cellcolor{identityrow}79.11 & \cellcolor{identityrow}68.43
    & \cellcolor{identityrow}74.96 & \cellcolor{identityrow}42.15 & \cellcolor{identityrow}57.11 \\
 && \cellcolor{saliencyrow}SARQC-GS(Saliency)  
    & \cellcolor{saliencyrow}78.13 & \cellcolor{saliencyrow}56.83 & \cellcolor{saliencyrow}41.26
    & \cellcolor{saliencyrow}16.46 & \cellcolor{saliencyrow}79.17 & \cellcolor{saliencyrow}69.30
    & \cellcolor{saliencyrow}75.29 & \cellcolor{saliencyrow}42.92 & \cellcolor{saliencyrow}\textbf{57.42} \\
 && \cellcolor{identityrow}SARQC-GBS(Identity)  
    & \cellcolor{identityrow}77.91 & \cellcolor{identityrow}56.28 & \cellcolor{identityrow}39.72
    & \cellcolor{identityrow}14.02 & \cellcolor{identityrow}77.37 & \cellcolor{identityrow}69.22
    & \cellcolor{identityrow}75.21 & \cellcolor{identityrow}42.24 & \cellcolor{identityrow}56.50 \\
 && \cellcolor{saliencyrow}SARQC-GBS(Saliency)
    & \cellcolor{saliencyrow}78.02 & \cellcolor{saliencyrow}56.57 & \cellcolor{saliencyrow}40.51
    & \cellcolor{saliencyrow}15.85 & \cellcolor{saliencyrow}79.20 & \cellcolor{saliencyrow}69.38
    & \cellcolor{saliencyrow}75.17 & \cellcolor{saliencyrow}42.92 & \cellcolor{saliencyrow}57.20 \\
\midrule

\multirow{7}{*}{Pile}
 & \multirow{7}{*}{W4A16}
 & AWQ       & 77.48 & 56.41 & 40.61 & 11.59 & 78.99 & 68.35 & 75.08 & 41.72 & 56.28 \\
 && GPTQ      & 77.31 & 55.98 & 38.76 & 12.20 & 78.26 & 69.22 & 74.03 & 41.13 & 55.86 \\
 && GPTAQ     & 76.88 & 56.00 & 39.24 & 10.37 & 77.00 & 68.98 & 74.24 & 40.53 & 55.41 \\
 && \cellcolor{identityrow}SARQC-GS(Identity)  
    & \cellcolor{identityrow}77.80 & \cellcolor{identityrow}56.35 & \cellcolor{identityrow}41.20
    & \cellcolor{identityrow}14.63 & \cellcolor{identityrow}79.11 & \cellcolor{identityrow}68.75
    & \cellcolor{identityrow}74.75 & \cellcolor{identityrow}41.98 & \cellcolor{identityrow}56.82 \\
 && \cellcolor{saliencyrow}SARQC-GS(Saliency)  
    & \cellcolor{saliencyrow}78.02 & \cellcolor{saliencyrow}56.48 & \cellcolor{saliencyrow}40.98
    & \cellcolor{saliencyrow}15.24 & \cellcolor{saliencyrow}79.14 & \cellcolor{saliencyrow}69.22
    & \cellcolor{saliencyrow}75.21 & \cellcolor{saliencyrow}42.58 & \cellcolor{saliencyrow}\textbf{57.11} \\
 && \cellcolor{identityrow}SARQC-GBS(Identity)
    & \cellcolor{identityrow}77.69 & \cellcolor{identityrow}56.07 & \cellcolor{identityrow}39.53
    & \cellcolor{identityrow}13.41 & \cellcolor{identityrow}77.61 & \cellcolor{identityrow}68.98
    & \cellcolor{identityrow}74.96 & \cellcolor{identityrow}42.06 & \cellcolor{identityrow}56.29 \\
 && \cellcolor{saliencyrow}SARQC-GBS(Saliency)
    & \cellcolor{saliencyrow}77.97 & \cellcolor{saliencyrow}56.35 & \cellcolor{saliencyrow}40.27
    & \cellcolor{saliencyrow}14.63 & \cellcolor{saliencyrow}78.99 & \cellcolor{saliencyrow}69.30
    & \cellcolor{saliencyrow}74.87 & \cellcolor{saliencyrow}42.83 & \cellcolor{saliencyrow}56.90 \\
\bottomrule
\end{tabular}
}
\end{table*}

\newpage

\subsection{Speedup}
\label{appendix: speedup}

To evaluate the practical efficiency of different quantization methods, we measure both prefill and decoding speedup on representative dense and MoE language models. All experiments are conducted on a single NVIDIA A100 80GB GPU. We report speedup relative to the corresponding FP16/BF16 baseline under the same setup. Specifically, both prefill and decoding speedup are computed as the throughput of the quantized model divided by that of the FP model, where larger values indicate better efficiency. As shown in \Cref{tab:speed_comparison}, \texttt{SARQC-GS} and \texttt{SARQC-GBS} deliver speedups that are broadly comparable to those of AWQ and GPTQ, with only minor variation across methods. This suggests that SARQC preserves the practical efficiency benefits of weight-only quantization, while improving accuracy without incurring additional runtime overhead, as the equivalent inference efficiency is guaranteed by design.

\begin{table*}[hbtp]
\centering
\caption{Prefill and decoding speedup of quantized models on a sequence length of 2048.}
\label{tab:speed_comparison}
\setlength{\tabcolsep}{4.2pt}
\renewcommand{\arraystretch}{1.12}
\definecolor{sarqcrow}{RGB}{220,236,233}

\resizebox{\textwidth}{!}{%
\begin{tabular}{lcccccccc}
\toprule
\multirow{2}{*}{Method}
& \multicolumn{2}{c}{LLaMA2-7B}
& \multicolumn{2}{c}{LLaMA2-13B}
& \multicolumn{2}{c}{Qwen3-MoE-30B}
& \multicolumn{2}{c}{Mixtral-8x7B} \\
\cmidrule(lr){2-3}
\cmidrule(lr){4-5}
\cmidrule(lr){6-7}
\cmidrule(lr){8-9}
& Prefill & Decoding
& Prefill & Decoding
& Prefill & Decoding
& Prefill & Decoding \\
\midrule
AWQ & 2.11 & 1.92 & 1.86 & 1.73 & 1.47 & 1.25 & 2.09 & 2.01 \\
GPTQ & 1.98 & 1.81 & 1.79 & 1.63 & 1.35 & 1.19 & 1.95 & 1.87 \\
SARQC-GS(Saliency) & 2.12 & 1.95 & 1.88 & 1.71 & 1.45 & 1.23 & 2.05 & 1.94 \\
SARQC-GBS(Saliency) & 2.00 & 1.84 & 1.82 & 1.68 & 1.36 & 1.18 & 1.93 & 1.85 \\
\bottomrule
\end{tabular}%
}
\end{table*}

\end{document}